\documentclass[11pt]{article}
\usepackage{times}
\usepackage{enumitem}
\usepackage{nicefrac}
\usepackage{jmlrutils}
\usepackage{fullpage}

\usepackage{amsmath,amsfonts,amssymb}
\usepackage{xcolor}
\usepackage{algorithm, algorithmic}
\usepackage{xspace}
\usepackage{natbib}
\newtheorem{assumption}{Assumption}

\newcommand{\cA}{{\mathcal A}}
\newcommand{\cX}{\ensuremath{\mathcal{X}}}
\newcommand{\cD}{{\mathcal D}}
\newcommand{\cF}{\ensuremath{\mathcal{F}}}
\newcommand{\cR}{\ensuremath{\mathcal{R}}}
\newcommand{\R}{\ensuremath{\mathbb{R}}}

\newcommand{\cM}{\ensuremath{\mathcal{M}}}

\newcommand{\rE}{{\mathbb E}}

\DeclareMathOperator*{\argmax}{\text{argmax}}
\DeclareMathOperator*{\argmin}{\text{argmin}}

\newcommand{\inner}[2]{\ensuremath{\left\langle #1, #2 \right\rangle}}

\newcommand{\order}{\mathcal{O}}

\newcommand{\BE}{{\mathcal E}}
\newcommand{\trace}{{\mathrm{trace}}}

\newcommand{\cT}{\ensuremath{\mathcal{T}}}
\newcommand{\cZ}{\ensuremath{\mathcal{Z}}}

\newcommand{\pp}{{\tilde{p}}}

\newcommand{\hatd}{{\hat{\delta}}}
\newcommand{\hatc}{{\hat{Z}}}

\newcommand{\br}{{\mathrm{br}}}
\newcommand{\dc}{{\mathrm{dc}}}
\newcommand{\regret}{{\mathrm{Regret}}}

\newcommand{\Deltah}{\ensuremath{\hat{\Delta}}}

\usepackage{hyperref}

\newcommand{\alg}{TS$^3$\xspace}
\newcommand{\olive}{OLIVE\xspace}
\newcommand{\alglong}{Two timeScale Two Sample Thompson Sampling\xspace}
\newcommand{\algdesign}{TS$^2$-D\xspace}
\newcommand{\algdesignlong}{Two timeScale Thompson Sampling with Design\xspace}
\newcommand{\idxset}{\ensuremath{\mathcal{I}}}
\newcommand{\otheridx}{\ensuremath{\mathcal{J}}}
\usepackage{eqparbox}

\newcommand{\mypar}[1]{\paragraph{#1}}

\title{Non-Linear Reinforcement Learning in Large Action Spaces:\\Structural Conditions and Sample-efficiency of Posterior Sampling}
\author{Alekh Agarwal \\alekhagarwal@google.com\\Google Research \and Tong Zhang \\tozhang@google.com\\Google Research \& HKUST}

\begin{document}
\date{}
\maketitle

\begin{abstract}
    Provably sample-efficient Reinforcement Learning (RL) with rich observations and function approximation has witnessed tremendous recent progress, particularly when the underlying function approximators are linear. In this linear regime, computationally and statistically efficient methods exist where the potentially infinite state and action spaces can be captured through a known feature embedding, with the sample complexity scaling with the (intrinsic) dimension of these features. When the action space is finite, significantly more sophisticated results allow non-linear function approximation under appropriate structural constraints on the underlying RL problem, permitting for instance, the learning of good features instead of assuming access to them. In this work, we present the first result for non-linear function approximation which holds for general action spaces under a \emph{linear embeddability} condition, which generalizes all linear and finite action settings. We design a novel optimistic posterior sampling strategy, \alg for such problems, and show worst case sample complexity guarantees that scale with a rank parameter of the RL problem, the linear embedding dimension introduced in this work and standard measures of the function class complexity. 
\end{abstract}

% \begin{keywords}
%     Reinforcement Learning, Exploration, Low-rank MDPs, Posterior sampling
% \end{keywords}

\section{Introduction}
\label{sec:intro}
Designing sample-efficient techniques for Reinforcement Learning (RL) settings with large state and action spaces is a key question at the forefront of RL research. A typical approach relies on the use of function approximation to generalize across the state and/or action spaces. When a sufficiently expressive feature embedding is available to the learner, and linear functions of these features are used for learning, a number of recent results provide computationally and statistically efficient techniques to handle continuous state, as well as action spaces with dependence only on an intrinsic dimensionality of the features~\citep{yang2020reinforcement,jin2020provably,agarwal2020pc}. However, practitioners typically rely on neural networks to parameterize the learning agent, a case not covered by the linear results. A parallel line of work~\citep{jiang2017contextual,sun2019model,du2021bilinear} studies more general settings that allow the use of non-linear function approximation, and provides sample complexity guarantees in terms of a structural parameter called the \emph{Bellman rank} of the RL problem. The power of these conditions is elucidated in recent representation learning results~\citep{agarwal2020flambe,uehara2021finite,modi2021model}, that leverage low Bellman rank to learn a good feature map that captures a near-optimal policy and/or the transition dynamics. However, these methods crucially rely on having a finite number of actions, and the guarantees scale with the cardinality of the action set. With this context, our paper asks the following question:\\
\begin{center}
    \emph{Can we design sample-efficient and non-linear RL approaches for large state and action spaces?}
\end{center}

In this paper, we study this question in the framework of a Markov Decision Process (MDP), and focus on problems satisfying a generalization of the Bellman rank parameter introduced by~\citet{jiang2017contextual}. While the definition of Bellman rank applies to both discrete and continuous action spaces, the \olive algorithm of~\citet{jiang2017contextual} applies only to discrete action spaces. The presence of a small action set facilitates uniform exploration for one time step, which lets the agent collect valuable exploration data in the vicinity of states it has already explored, allowing the discovery of successively better states. When good features are available, like in a linear MDP, a basis in the feature space serves an analogous role and indeed recent works~\citep{foster2021statistical} show that experimental design in the right feature space can replace uniform exploration over discrete actions. 

However, this strategy fails beyond (generalized) linear settings, where there is no easy mechanism for obtaining a good exploration basis over the action set apriori. Indeed, the results of~\citet{hao2021online} imply that avoiding some dependence on the size of the action space in general is unavoidable (see Appendix~\ref{sec:sparse-rl} for further discussion). This situation motivates the investigation of additional structures between the hopeless worst-case result and the limiting small action settings. To this end, our approach is motivated by the recent work of~\citet{Zhang2021FeelGoodTS} on the Feel-Good Thompson Sampling (FGTS) algorithm, which they analyze for bandits and a class of RL problems under a linear embeddability assumption using a modified Thompson Sampling approach. 
%\addtolength{\belowcaptionskip}{-2cm}
\begin{table}
\centering
\begin{tabular}{|>{\centering\arraybackslash}m{5cm}|>{\centering\arraybackslash}m{3cm}|>{\centering\arraybackslash}m{5cm}|}
\hline & Structural complexity for \alg & Other approaches\\ \hline
$V$-type Bellman rank $d_1$ \hspace{2cm}+ $K$ actions & $d_1^2 K$  & \olive~\citep{jiang2017contextual}, $V$-type GOLF~\citep{jin2021bellman}\\ \hline
Linear MDP \hspace{2cm}(effective feature dim $d$) & $d^3$ & LSVI-UCB~\citep{jin2020provably}, $Q$-type GOLF~\citep{jin2021bellman}\\ \hline
Mixture of MDPs w/ Bellman rank $d_1$ + $K$ actions & $d_1^2 K$ & Contextual PC-IGW \hspace{1cm} \citep{foster2021statistical}\\\hline
Rank $d$ dynamics + ranking $L$ out of $K$ items (Example~\ref{ex:comb}) & $d^2 KL$ & ??\\ \hline
\end{tabular}
\label{tbl:settings}
\caption{\small Settings captured by our assumptions and prior approaches for them. Our setting subsumes finite action MDPs with a small $V$-type Bellman rank and infinite action linear MDPs. All problems with a small $Q$-type Bellman rank are not covered (see Appendix~\ref{sec:related}). Prior works use different methods for $V$-type Bellman rank with finite actions and linear MDP with infinite actions, unlike this work.}
\end{table}

\mypar{Our Contributions.} With this context, our work makes the following contributions.

\begin{enumerate}[leftmargin=*]
    \item We introduce a new structural model for RL where a generalized form of Bellman rank is small, and further a linear embeddability assumption on the Bellman error is satisfied. We show that this setting subsumes prior works on both linear MDPs as well as finite action problems with a small Bellman rank. Crucially, in both linear MDPs and finite action problems, the embedding features in our definition are known apriori, while we also handle problems with an \emph{unknown linear embedding}, which constitutes a significant generalization of the prior works. As an example, this allows us to generalize the combinatorial bandits setting to long horizon RL (Table~\ref{tbl:settings}).
    \item We introduce a new algorithm, \alglong(\alg), which is motivated from FGTS~\citep{Zhang2021FeelGoodTS}. The algorithm design incorporates a careful two timescale strategy to solve an online minimax problem for estimating Bellman residuals, and a decoupling between roll-in and roll-out policies crucial for the online minimization of these residuals. Both ideas appear novel relative to prior posterior sampling approaches in the literature. The use of nested posterior sampling to solve online minimax problems might be of independent interest.
    \item We show that \alg solves all problems where the generalized Bellman rank and linear embeddability conditions hold, under additional completeness and realizability assumptions. The guarantees scale polynomially in $1/\epsilon$, horizon $H$, Bellman rank and linear embedding parameters, as well as the function class complexity. The result generalizes both guarantees for linear MDPs in terms of feature dimension with no dependence on action cardinality, as well as Bellman rank-based and representation learning guarantees to linearly embeddable infinite action spaces. Overall, our method provides the first approach to representation learning with continuous action spaces. We summarize the settings covered in this paper in Table~\ref{tbl:settings}.
    \item The sample complexity of \alg scales as $1/\epsilon^4$, which is slower than the optimal $1/\epsilon^2$ scaling. We develop an experimental design based adaptation of \alg which does obtain the desired $1/\epsilon^2$ bound. However, in contrast with the general result of \alg, the \algdesign algorithm needs knowledge of the features in our linear embedding assumption, in order to carry out experimental design. 
\end{enumerate}
%\vspace{0.1cm}
We remark that the convergence rate guarantees obtained here likely have room for improvement as they do not match the optimal rates for tabular and linear MDPs. We leave this as an important direction for future work, and comment on the potential sources of looseness in our analysis in the later sections of the paper. We now formalize the setting before discussing our structural assumptions and our approach. Detailed discussion of the related works can be found in Appendix~\ref{sec:related}.

\section{Setting}

We study RL in an episodic, finite horizon Contextual Markov Decision Process (MDP) that is parameterized as $(\cX, \cA, \cR, \cD, P)$, where $\cX$ is a state space, $\cA$ is an action space, $\cR$ is a distribution over rewards, $\cD$ and $P$ are distributions over the initial context and subsequent transitions respectively. An agent observes a context $x^1\sim \cD$ for some fixed distribution $\cD$.\footnote{We intentionally call $x^1$ a context and not an initial state of the MDP as we will soon make certain structural assumptions which depend on the context, but take expectation over the states.} At each time step $h\in\{1,\ldots,H\}$, the agent observes the state $x^h$, chooses an action $a^h$, observes $r^h\sim \cR(\cdot\mid x_h, a_h)$ and transitions to $x^{h+1}\sim P(\cdot \mid x^h, a^h)$. We assume that $x^h$ for any $h > 1$ always includes the context $x^1$ to allow arbitrary dependence of the dynamics and rewards on $x^1$. Following prior work~\citep[e.g.][]{jiang2017contextual}, we assume that $r^h\in[0,1]$ and $\sum_{h=1}^H r^h \in [0,1]$ to capture sparse-reward settings~\citep{jiang2018open}. We make no assumption on the cardinality of the state and/or action spaces, allowing both to be potentially infinite. We use $\pi$ to denote the agent's decision policy, which maps from $\cX\to\Delta(\cA)$, where $\Delta(\cdot)$ represents the space of all probability distributions over a set. The goal of learning is to discover an optimal policy $\pi_\star$, which is always deterministic and conveniently defined in terms of the $Q_\star$ function~\citep[see e.g.][]{puterman2014markov,bertsekas1996neuro}
\begin{equation}
    \pi^h_\star(x^h) = \argmax_{a\in\cA} Q^h_\star(x^h,a),~~Q^h_\star(x^h,a^h) = \underbrace{\rE[r^h + \max_{a'\in\cA}Q_\star^{h+1}(x^{h+1},a')\mid x^h, a^h]}_{\cT^h Q_\star^h(x^h,a^h)},\vspace{-0.4cm}
    \label{eq:qstar}
\end{equation}
where we define $Q_\star^{H+1}(x,a) = 0$ for all $x,a$. 

In this work, we focus on value-based approaches, where the learner has access to a function class $\cF\subseteq\{\cX\times\cA\to [0,1]\}$, and uses this function class to approximate the optimal value function $Q_\star$. Let us denote $[H] := \{1,2,\ldots,H\}$. We make two common expressivity assumptions on $\cF$.

\begin{assumption}[Realizability]
    For all $h\in[H]$, we have $Q^h_\star\in\cF$.
\label{ass:realizable}
\end{assumption}

There are well-known lower bounds on the sample complexity of RL~\citep{sekhari2021agnostic}, even under the structural assumptions we will make when realizability is not approximately satisfied. Let us use $\pi_f(x) = \argmax_{a\in\cA} f(x,a)$ and $f(x) = \max_a f(x,a)$ for any $f\in\cF$.

\begin{assumption}[Completeness]
    For any function $f\in\cF$ and $h\in[H]$, $\exists~g\in\cF$ such that $g(x^h,a^h) = \cT^hf(x^h,a^h) := \rE[r^h + f(x^{h+1},\pi_f(x^{h+1})) \mid x^h,a^h]$.
\label{ass:complete}
\end{assumption}

The completeness assumption is not essential information-theoretically and \olive~\citep{jiang2017contextual} does not require it. However, we make this assumption  to facilitate scaling to infinite action sets, which seems challenging without completeness, as we discuss later. We now present the main structural conditions we impose on the environment, and motivate them with some examples.

\section{Structural Conditions}
\label{sec:structure}

In this section, we present the two main structural conditions, namely generalized Bellman rank and linear embeddability. We define the Bellman residual of $f\in\cF$ using another $g\in\cF$ as:
\begin{equation}
    \BE^h(g,f,x^h,a^h) = g(x^h, a^h) - \cT^h f(x^h, a^h).
    \label{eq:berror}
\end{equation}

We now state a generalized Bellman error decomposition for contextual setups.
\begin{assumption}[Generalized Bellman decomposition]
    We assume that for all $f, f' \in \cF$, and $h\in[H]$, there exist (unknown) functions $u^h, \psi^{h-1}$ and an inner product $\langle \cdot,\cdot\rangle$ such that:
\[
\rE_{x^{h} \sim \pi_{f'}|x^1} \BE^h(f,f;x^{h},\pi_f(x^h))
=  \langle\psi^{h-1}(f',x^1), u^h(f,x^1)\rangle.  
\]
We assume that $\sup_{f\in\cF, x^1\in\cX}\|u^h(f,x^1)\|_2 \leq B_1$.
    \label{ass:bellman}
\end{assumption}

Notice that both factors depend on $x^1$, and a similar definition is recently considered in~\citet{foster2021statistical} to cover contextual RL setups~\citep{hallak2015contextual,abbasi2014online,modi2018markov}. The technical treatment of contextual dependency requires a conversion of RL problems into online learning, as in ~\citet{foster2021statistical} and in this paper. Techniques used by other earlier works on decomposition of average Bellman error cannot handle such a dependency.
This definition allows mixtures of MDPs with a small Bellman rank for each context $x^1$, without any explicit scaling with the number of contexts in our results. We now make an effective dimension assumption on $\phi^h, u^h$ instead of requiring them to be finite dimensional like~\citet{jiang2017contextual}.% Our next definition formalizes a more general measure for infinite dimensional scenarios.

\begin{definition}[Effective Bellman rank]
    Given any probability measure $p$ on $\cF$. Let
    \begin{align*}
    \Sigma^{h}(p,x^1) = \rE_{f' \sim p}   \psi^{h}(f',x^1) \otimes\psi^{h}(f',x^1),\quad 
    K^{h}(\lambda)= \sup_{p,x^1}\trace\left( (\Sigma^{h}(p,x^1)+ \lambda I)^{-1} \Sigma^{h}(p,x^1) \right) .
    \end{align*}
    For any $\epsilon>0$, define the effective Bellman rank of the MDP as
    \[
    \br(\epsilon) = \sum_{h=1}^{H} \br^{h-1}(\epsilon),
    \quad \br^{h-1}(\epsilon)=\inf_{\lambda>0} \left\{ K^{h-1}(\lambda): \lambda K^{h-1}(\lambda)
    \leq \epsilon^2 \right\} .
    \]
    \label{def:br}
\end{definition}

When $\text{dim}(\psi^{h-1}) = \text{dim}(u^h) = d$, then $K^h(\lambda) \leq d$ for all $\lambda > 0$ so that $\br(0) \leq dH$. This might appear a factor of $H$ larger than the Bellman rank, but note that we aggregate over the horizon, while taking a supremum like~\citet{jiang2017contextual} simply incurs that factor in the sample complexity analysis instead. More generally, the following result gives the behavior of $\br(\epsilon)$ under typical spectral decay assumptions on $\Sigma^h$. 

\begin{proposition}[Rank bounds under spectral decay]
    Suppose that $\|\psi(f,x^1)\| \leq 1$ for all $f\in\cF$, $x^1\in\cX$. Let $\lambda_i(A)$ denote the $i_{th}$ largest eigenvalue of a psd matrix $A$. Suppose we have:
    \begin{itemize}[leftmargin=*,itemsep=0pt]
        \item \emph{Geometric decay:} $\displaystyle \sup_{h, p, x^1}\lambda_i(\Sigma^h(p,x^1)) \leq \alpha^i$, for $\alpha \in (0,1)$. Then $\br(\epsilon) \leq 2H\cdot \frac{\log(\nicefrac{2}{(\epsilon^2(1-\alpha))})}{\log(\nicefrac{1}{\alpha})}$.
        \item \emph{Polynomial decay:} $\displaystyle \sup_{h, p, x^1} \trace((\Sigma^h(p,x^1))^q) \leq R_q$, for $q\in(0,1)$. Then $\br(\epsilon) \leq H\cdot\left(\frac{R_q}{\epsilon^{2}}\right)^{\nicefrac{q}{1-q}}$.
    \end{itemize}
    \label{prop:spectral}
\end{proposition}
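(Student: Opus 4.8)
The plan is to bound, for each fixed $h$, the quantity $\br^{h-1}(\epsilon)$ by choosing a convenient value of $\lambda$ in the infimum defining it, and then sum over $h$. Fix $p$, $x^1$, and write $\Sigma = \Sigma^{h-1}(p,x^1)$ with eigenvalues $\lambda_1 \ge \lambda_2 \ge \cdots \ge 0$. The starting point is the elementary identity
\[
\trace\big((\Sigma + \lambda I)^{-1}\Sigma\big) = \sum_{i\ge 1} \frac{\lambda_i}{\lambda_i + \lambda},
\]
so that $K^{h-1}(\lambda) = \sup_{p,x^1}\sum_i \lambda_i/(\lambda_i+\lambda)$ and, crucially, also $\lambda K^{h-1}(\lambda) = \sup_{p,x^1}\sum_i \lambda\lambda_i/(\lambda_i+\lambda)$. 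The constraint in the infimum is $\lambda K^{h-1}(\lambda)\le \epsilon^2$, and the objective is $K^{h-1}(\lambda)$, which is decreasing in $\lambda$; so we want the smallest $\lambda$ for which the constraint holds, and then read off $K^{h-1}$ at that $\lambda$. Since $\|\psi\|\le 1$ implies $\trace(\Sigma)\le 1$, all the relevant sums converge.

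For the geometric case, I would use the crude split: terms with $\lambda_i \ge \lambda$ contribute at most $1$ each to $\sum_i \lambda_i/(\lambda_i+\lambda)$, and there are at most $i^\star := \log(1/\lambda)/\log(1/\alpha)$ of them (since $\lambda_i \le \alpha^i$ forces $\alpha^i \ge \lambda$); terms with $\lambda_i < \lambda$ contribute at most $\lambda_i/\lambda \le \alpha^i/\lambda$, and summing the geometric tail from $i^\star$ gives $\le \alpha^{i^\star}/(\lambda(1-\alpha)) \le 1/(1-\alpha)$ roughly, i.e. a constant. Hence $K^{h-1}(\lambda) \lesssim i^\star + 1/(1-\alpha)$. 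For the constraint, $\lambda K^{h-1}(\lambda) \le \lambda\cdot(\text{that bound})$, which one checks is $\le \epsilon^2$ as soon as $\lambda$ is of order $\epsilon^2(1-\alpha)$ up to the logarithmic factor; plugging this $\lambda$ back into $i^\star$ gives $\br^{h-1}(\epsilon) \lesssim \log(1/(\epsilon^2(1-\alpha)))/\log(1/\alpha)$, and summing the $H$ terms (and the factor $2$ from aggregating, or absorbing the $1/(1-\alpha)$ constant) yields the claimed $2H\log(2/(\epsilon^2(1-\alpha)))/\log(1/\alpha)$.

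For the polynomial case I would instead exploit $\trace(\Sigma^q)\le R_q$ directly via a Hölder/interpolation-type bound: $\sum_i \lambda_i/(\lambda_i+\lambda) \le \sum_i \min(1, \lambda_i/\lambda) \le \sum_i (\lambda_i/\lambda)^q = \lambda^{-q}\trace(\Sigma^q) \le R_q \lambda^{-q}$, using $\min(1,t)\le t^q$ for $t\ge 0$, $q\in(0,1)$. So $K^{h-1}(\lambda) \le R_q\lambda^{-q}$. For the constraint, $\lambda K^{h-1}(\lambda) \le R_q \lambda^{1-q} \le \epsilon^2$ holds iff $\lambda \le (\epsilon^2/R_q)^{1/(1-q)}$; taking $\lambda$ at this threshold gives $\br^{h-1}(\epsilon) \le R_q \lambda^{-q} = R_q (R_q/\epsilon^2)^{q/(1-q)} = (R_q/\epsilon^2)^{q/(1-q)}$ after simplifying exponents, and summing over $h$ gives $H(R_q/\epsilon^2)^{q/(1-q)}$.

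The only mild subtlety — the step I'd be most careful about — is interchanging the supremum over $(p,x^1)$ with the choice of $\lambda$: since the decay hypotheses are stated uniformly over $h,p,x^1$, the same $\lambda$ works for every $(p,x^1)$, so the bound on $\sum_i \lambda_i/(\lambda_i+\lambda)$ holds after taking the sup, and likewise the constraint $\lambda K^{h-1}(\lambda)\le\epsilon^2$ is genuinely satisfied for that uniform $\lambda$; thus $\lambda$ is feasible in the infimum and the stated bound on $K^{h-1}$ at that $\lambda$ is an upper bound on $\br^{h-1}(\epsilon)$. Everything else is the routine eigenvalue bookkeeping sketched above, plus keeping track of constants to land exactly on the stated form.
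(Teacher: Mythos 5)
Your proof follows essentially the same route as the paper's: a head/tail eigenvalue split with $\lambda$ chosen at the feasibility threshold for the geometric case, and the bound $\min(1,t)\le t^q$ (the paper's ``$x \le x^q$ for $x<1$'') applied to $\lambda_i/(\lambda_i+\lambda)$ for the polynomial case. One small note: in the polynomial case your final simplification $R_q\lambda^{-q} = R_q(R_q/\epsilon^2)^{q/(1-q)} = (R_q/\epsilon^2)^{q/(1-q)}$ silently drops a factor of $R_q$ --- under the stated hypothesis $\trace(\Sigma^q)\le R_q$ the computation actually yields $R_q^{1/(1-q)}\epsilon^{-2q/(1-q)}$ --- but the paper's own proof contains the identical $R_q$-versus-$R_q^q$ slip, so this affects only the $R_q$ exponent and not the $\epsilon$-dependence that the proposition is really about.
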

We provide a proof of Proposition~\ref{prop:spectral} in Appendix~\ref{sec:spectral}. \citet{jin2021bellman} study a related log-determinant based effective dimension for Bellman rank, but this definition is more natural in our analysis.

While Assumption~\ref{ass:bellman} and Definition~\ref{def:br} facilitate scaling to infinite dimensional factorizations with a low intrinsic dimension, this is still not sufficient to succeed in problems with large action spaces due to the existing lower bounds on learning under sparse transitions (see \citet{hao2021online} and Appendix~\ref{sec:sparse-rl}). We now introduce the second structural condition, inspired by the recent work of~\citet{Zhang2021FeelGoodTS} on contextual bandits and RL with deterministic dynamics, to handle this issue.

\begin{assumption}[Linearly embeddable Bellman error]
For all $f\in\cF$, $h\in[H]$, $x^h\in\cX$ and $a^h\in\cA$, there exist (unknown) functions $w^h$, $\phi^h$ and an inner product $\langle\cdot,\cdot\rangle$ such that
\[
\BE^h(f,f;x^h,a^h) = \langle w^h(f,x^h),\phi^h(x^h,a^h)\rangle.
\]
We assume that $\sup_{f\in\cF,x^h\in\cX} w^h(f,x^h) \leq B_2$.
\label{assumption:embedding}
\end{assumption}
Note that the function $w^h$ is allowed to depend on $x^h$, which makes this assumption significantly weaker than embedding Bellman errors in a fixed feature space. For instance, suppose $|\cA| = K$. Then we can always define $\phi(x^h,a^h) = e_{a^h}\in\R^K$ to be the indicator of the action $a^h$ and $w^h(f,x^h) = (f(x^h,a_1),\ldots,f(x^h,a_K))$ to satisfy Assumption~\ref{assumption:embedding}. This shows that all finite action problems satisfy this assumption, and so do linear MDPs where $w^h(f,x^h)$ only depends on $f$ and $\phi^h(x^h,a^h)$ are the features defining the transition dynamics~\citep{jin2020provably}. Similar to Assumption~\ref{ass:bellman}, we allow the embeddings $w^h, \phi^h$ to have a small effective dimension. %We use the notation $\|x\|_M^2 = \langle x,Mx\rangle$ for a linear operator $M$ and an inner product $\langle\cdot,\cdot\rangle$. 
\begin{definition}[Effective Embedding Dimension]
Given any probability measure $p$ on $\cF$, define
\begin{align*}
\tilde{\Sigma}^h(p,x) = \rE_{f\sim p}\rE_{a\sim \pi_f(x)} \phi^h(x,a)\otimes\phi^h(x,a),~~\tilde{K}^h(\lambda)= \sup_{p,x} \trace\left( (\tilde{\Sigma}^{h}(p,x)+ \lambda I)^{-1} \tilde{\Sigma}^{h}(p,x) \right).
\end{align*}
For any $\epsilon>0$, define the effective embedding dimension of as
\[
  \dc(\epsilon) = \sum_{h=1}^H\dc^h(\epsilon)  ,  \quad\mbox{where}\quad
  \dc^h(\epsilon) = \inf_{\lambda>0} \left\{ \tilde{K}^{h}(\lambda): \lambda \tilde{K}^{h}(\lambda)\leq \epsilon^2 \right\} .
\]
\label{def:dc}
\end{definition}
The definition of $\tilde{K}^h(\lambda)$ measures the (approximate) condition number of the covariance for the worst distribution $p$. If $\text{dim}(\phi^h)=d$, then $\tilde{K}^h(\lambda) \leq d$. In this case, we get $\dc(0) \leq dH$. For infinite dimensional $\phi^h$, we obtain bounds similar to Proposition~\ref{prop:spectral}.

\mypar{Comparison with Bellman-Eluder dimension~\citep{jin2021bellman}.} \citet{jin2021bellman} present an alternative $Q$-type Bellman rank and its distributional generalization based on the Eluder dimension~\citep{russo2013eluder}. However, both Eluder dimension based results~\citep{wang2020reinforcement,feng2021provably} and the $Q$-type Bellman-Eluder dimension do not capture all finite action, non-linear contextual bandit problems with a realizable reward (see Appendix~\ref{sec:Qtype-lb}). In fact, the $Q$-type Bellman rank has a more linear like structure, and is easier to deal with, because one only requires a single posterior sampling \citep{DMZZ2021-neurips}. The $V$-type Bellman rank, which naturally captures non-linear setups like representation learning, requires a more sophisticated handling via double sampling from the posterior. 
Our setup subsumes the $V$-type Bellman rank of~\citet{jiang2017contextual}, but does not include all problems with a small $Q$-type Bellman rank. The most prominent example of linear MDP with infinite actions covered by the GOLF algorithm of~\citet{jin2021bellman} is also captured by our assumptions. However, the setting of low inherent Bellman error using linear function approximation~\citep{zanette2020learning} is not covered by our assumptions.

So far, we have explained how both the settings of a bounded Bellman rank with a finite action set, as well as linear MDPs with infinite actions satisfy both of our assumptions with good bounds on $\br(\epsilon)$ and $\dc(\epsilon)$. Next we discuss some examples beyond these where our assumptions hold and which go beyond either of these two known special cases. For the examples, we assume that the underlying MDP has low-rank transition dynamics~\citep{jiang2017contextual,jin2020provably} so that $\cT f(x,a) = \langle w_f,\psi(x,a)\rangle$ for all functions $f~:~\cX\times\cA\to[0,1]$ with $\psi(x,a)$ being the state-action features which factorize the dynamics. For such problems, Assumption~\ref{ass:bellman} always holds \citep{jiang2017contextual} and Assumption~\ref{assumption:embedding} is equivalent to checking that $f(x,a)$ is linearly embeddable.% We now discuss some examples with this property.

\begin{example}[Linear embedding of combinatorial actions]
    Many recommendation settings consist of combinatorial action spaces such as lists, rankings and page layouts. While these problems are intractable in the worst-case, a line of work originating in combinatorial bandits~\citep{cesa2012combinatorial} assumes linear decomposition of rewards for tractability. For concreteness, let us consider a ranking scenario where the system observes some state features $x$ depending on the current user state and any other side information, and wants to choose a ranking $\mathfrak{a}$ of $L$ items $\alpha_1,\ldots,\alpha_L$ from a base set $\Omega$, with $|\Omega| = K$. We observe that $|\cA| = {K\choose L}\cdot L!$ in this example, which grows as $\order(K^L)$. Hence the sample complexity of direct exploration over rankings is $\order(K^L)$.
    
    Mirroring the setup from combinatorial bandits and its contextual generalization~\citep{swaminathan2017off},~\citet{ie2019reinforcement} propose the SlateQ model where the $Q_\star$ function takes the form:
    \begin{equation}
        Q_\star(x,a) = \sum_{\alpha \in \mathfrak{a}} P(\alpha | x,\mathfrak{a}) g(x,\alpha).
        \label{eq:slateq}
    \end{equation}
    Here $P(\alpha | x,\mathfrak{a})$ is the probability of a user (with features $x$) choosing the item $\alpha$ when the ranking $\mathfrak{a}$ is presented, and $g(x,\alpha)$ is an unknown value of recommending that item in state $x$ to the user. This assumption is motivated from typical approaches in the user-modeling literature in recommendation systems, and effectively posits that the $Q$-value of a ranking $\mathfrak{a}$ in a state depends on the \emph{unknown} values of the base items in that ranking, weighted by a user interaction model $P$.
    
    The user model, which encodes the likelihood of the user choosing a particular item in a ranking is often estimated separately from the RL task using a click probability model, or a cascade model~\citep[see e.g.][for a discussion]{ie2019reinforcement}. In such scenarios, we can define a class $\cF$ by parameterizing the function $g$, and obtain linear embedding of Bellman residuals whenever the MDP also has low-rank dynamics in some features $\psi$, using $\phi(x,\mathfrak{a}) = (P(\cdot\mid x,\mathfrak{a}),\psi(x,\mathfrak{a}))$ as the concatenation of the user model with dynamics features. Here $\psi(x,\mathfrak{a})$ intuitively encodes the information needed to describe how the state $x$ of a user evolves across interactions, which could be low-dimensional, for instance, if there is a set of representative user types.%, with the dimensionality of the features depending on the number of user types. If there are a few candidate user models, we can further concatenate them to form a larger class $\cF$. 
    
    Notably, these choices lead to a linear embedding dimension which grows as $K$. In many applications, the position of an item in the ranking is a dominant effect, in which case we can consider $\alpha$ to consist of an item-position tuple, leading to an $KL$-dimensional factorization. In either case, this yields an exponential saving in the sample complexity compared with direct exploration in the ranking space, mirroring prior results~\citep{cesa2012combinatorial,swaminathan2017off,ie2019reinforcement}, which either do not consider exploration or work in simpler bandit settings. We are not aware of other existing approaches that can handle rich observation RL and combinatorial action spaces simultaneously. 
    \label{ex:comb}
\end{example}

In Appendix~\ref{sec:examples}, we give two further examples of using a basis expansion in the action space, as well as an infinite action version of representation learning for Block MDPs~\citep{du2019provably}, where Assumption~\ref{assumption:embedding} holds in a natural manner. We now proceed to describe our algorithm.

\section{The \alglong algorithm}
\label{sec:alg}

Having presented our structural conditions, we now present our main algorithm in this section, which is based on Thompson Sampling~\citep{thompson1933likelihood} and its FGTS adaptation in~\citet{Zhang2021FeelGoodTS}. To define the algorithm, we need some additional notation. At any round $t$ of the algorithm, using the observed tuple $(x_t^h, a_t^h, r_t^h, x_t^{h+1})$, we define
\begin{equation}
    \Deltah_t^h(g,f) = g(x_t^h, a_t^h) - r_t^h - f(x_t^{h+1}),
    \label{eq:deltah}
\end{equation}
as a TD approximation of $\BE(g,f,x_t^h, a_t^h)$. The algorithm is presented in Algorithm~\ref{alg:online_TS}.

\begin{algorithm}[tb]
	\caption{\alglong (\alg)}
	\label{alg:online_TS}
		\begin{algorithmic}[1]
		    \REQUIRE Function class $\cF$, prior $p_0\in\Delta(\cF)$, learning rates $\eta, \gamma$ and optimism coefficient $\lambda$.
		    \STATE Set $S_0 = \emptyset$.
			\FOR{$t=1,\ldots,T$}
			    \STATE Observe $x_t^1 \sim \cD$ and draw $h_t \sim \{1,\ldots,H\}$ uniformly at random.
			    \STATE Define $q_t(g) = p(g | f,S_{t-1}) \propto p_0(g)\exp(-\gamma\sum_{s=1}^{t-1} \Deltah_s^{h_s}(g,f)^2)$.\COMMENT{Inner loop TS update}\label{line:inner-update}
			    \STATE Define $L_t^h(f) = \eta\Deltah_t^h(f,f)^2 + \frac{\eta}{\gamma}\ln\rE_{g\sim q_t}\left[\exp(-\gamma\Deltah_t^h(g,f)^2)\right]$.\COMMENT{Likelihood function}\label{line:likelihood}
			    \STATE Define $p_t(f) = p(f|S_{t-1}) \propto p_0(f)\exp(\sum_{s=1}^{t-1}(\lambda f(x_s^1) - L_s^{h_s}(f))$ as the posterior. \COMMENT{Outer loop Optimistic TS update}\label{line:outer-update}
			    \STATE Draw $f_t, f_t' \sim p_t$ independently from the posterior. Let $\pi_t = \pi_{f_t}$ and $\pi'_t = \pi_{f'_t}$. \label{line:fdraw} \COMMENT{Two independent samples $f_t, f'_t$ from posterior}
			    \STATE Play iteration $t$ using $\pi_{t}$ for $h=1,\ldots,h_t-1$ and $\pi'_{t}$ for $h_t$ onwards.\label{line:one-step}
			    \STATE Update $S_t = S_{t-1} \cup \{x_t^h, a_t^h, r_t^h, x_t^{h+1}\}$ for $h = h_t$.
			 \ENDFOR
			 \RETURN $(\pi_1,\ldots,\pi_T)$.
		\end{algorithmic}
\end{algorithm}

At a high-level, the algorithm performs standard Thompson Sampling updates to the posterior given the observations, with three modifications. First is that we incorporate an optimistic bias in the distribution $p_t$ over $f$ (Line~\ref{line:outer-update}), similar to the FGTS approach. The second difference arises from the challenges in estimating the Bellman error, while the third is in how we sample from the posterior to obtain the agent's policy at each time. We now explain the latter two issues in detail.

\mypar{Inner loop to estimate Bellman error.} Note that we would ideally define the \emph{likelihood function} as $\BE(f,f,x,\pi_f(x))^2$ for any $x$, but this requires a conditional expectation over the sampling of the next state $x'\sim P(\cdot\mid x,\pi_f(x))$. Replacing the expectation with a single random sample suffers from bias due to the double sampling issue~\citep{antos2008learning}, which arises from the non-linearity of squared loss inside the expectation. An ideal solution to this, following~\citet{antos2008learning} is to define the cumulative likelihood as:
\begin{equation}
    \tilde{L}_t(f) = \sum_{s=1}^t \Deltah_s^{h_s}(f,f)^2 - \min_{g\in\cF} \sum_{s=1}^t\Deltah_s^{h_s}(g,f)^2.
    \label{eq:ideal-likelihood}
\end{equation}
However, doing exact minimization over $g$ creates instability in the online learning analysis of the distributions $p_t$. To ameliorate this, we instead replace the optimization over $g$ with sampling from an appropriate distribution conditioned on $f$ (Line~\ref{line:inner-update}). This distribution favors functions $g$ which approximately minimize $\sum_{s=1}^t\Deltah_s^{h_s}(g,f)^2$. We then form a surrogate for the ideal likelihood of~\eqref{eq:ideal-likelihood} for each round $t$ in Line~\ref{line:likelihood}, where the second term acts as a soft minimization over $g$, given $f$. 

\mypar{Two samples to decouple roll-in and roll-out} Using the likelihood function, it is straightforward to define a Thompson sampling distribution $p_t$ over $f\in\cF$. Our definition in Line~\ref{line:outer-update} incorporates the optimistic term as well, giving higher weight to functions $f$ that predict large values in the initial step. This resembles both the design of FGTS~\citep{Zhang2021FeelGoodTS} as well as \olive~\citep{jiang2017contextual}. At this point, a typical TS approach would draw $f_t\sim p_t$ and act according to the resulting greedy policy. Doing so, however, creates a mismatch between the likelihood function we use to evaluate $f_t$ and the guarantees we need on it for learning. This issue is most easily seen if we imagine the distribution $p_t$ to be fixed across rounds (which is a reasonable intuition for a stable online learning algorithm). Then the likelihood at round $t$ contains samples collected at prior rounds, when we drew $f_s\sim p$ independently of $f_t$ for $s < t$. Thus we expect the likelihood of $f$ to approach $\rE_{f_s \sim p}\rE_{(x^h,a^h)\sim \pi_{f_s}}[\BE(f,f,x^h, a^h)^2]$. However, when we roll-out according to $f_t$, we require $\rE_{f_s\sim p}\rE_{x^h\sim \pi_{f_s}}[\BE(f_t, f_t,x^h,\pi_{f_t}(x^h))^2]$ to be small. 

A similar discrepancy between the roll-in policy $a^h=\pi_{f_s}$ and the desired roll-out policy $a^h=\pi_{f_t}(x^h)$ is addressed in \olive by a one-step uniform exploration over a finite action space,
which is also adopted by subsequent works.
In this paper, we replace this one-step uniform exploration by a second, independent sample from the posterior, which allows us to perform one-step exploration in infinite action spaces without the need for our algorithm to know the underlying linear embedding. One main insight of this work is to demonstrate the effectiveness of such a \emph{two sample} strategy (Line~\ref{line:one-step}).
%which is not implied from the previous guarantee due to the dependence between the sampling policy and the function being evaluated. 
Specifically, we execute $\pi_t$ for the first $h_t$ time steps, and then complete the roll-out using $\pi'_t$ with $h_t\in[H]$ chosen uniformly, and we use only the samples from step $h_t$ in our likelihood at time $t$. The decoupling of the two samples is crucial to our analysis, although it will be interesting to investigate whether a single sample strategy can be analyzed using a different approach. 

\section{Main Results}
\label{sec:theory-main}

In this section, we present our main sample complexity guarantee for \alg. To do so, we need to introduce some measures of the complexity of our value function class $\cF$, which we do next.

\begin{definition}
For any $f \in \cF$, we define the set
$\cF(\epsilon, f)
=\left\{ g \in \cF:  \sup_{x,a,h} |\BE^h(g, f; x, a)| \leq \epsilon \right\}$ of functions that have a small Bellman error with $f$ for all $x,a$. Further assume that $\cF$ has an
$L_\infty$ cover $f_1,\ldots,f_N \in \cF$ for $N = N(\epsilon)$, so that $\forall f\in\cF$, $\min_i \sup_{x,a} |f(x,a) - f_i(x,a)|\leq \epsilon$. Then we define
%\[
$\kappa(\epsilon)= \sup_{f \in \cF}-\ln p_0(\cF(\epsilon, f))~\mbox{and}~ 
  \kappa'(\epsilon)= \ln N(\epsilon) $.
%\]
\label{def:kappa}
\end{definition}

If $\cF$ is finite with $|\cF| = N$, then we can choose $p_0$ to be uniform over $\cF$ to get $\kappa(\epsilon) = \kappa'(\epsilon) = \ln N$. We state our guarantee in terms of the regret of the learned policies with respect to the optimal policy $\pi_\star$~\eqref{eq:qstar}, that we define for a greedy policy $\pi_f$ with some $f\in\cF$ as:
\begin{equation*}
    \regret(f,x^1) = R(\pi_\star,x^1)-R(\pi_f,x^1), ~~\mbox{where}~~R(\pi,x^1)= \rE_{(x,a,r)\sim \pi|x^1} \sum_{h=1}^H r^h
\end{equation*}

The following result gives our main sample complexity bound for \alg. 

\begin{theorem}
    Under Assumptions~\ref{ass:realizable}-\ref{assumption:embedding}, suppose we run \alg (Alg.~\ref{alg:online_TS}) with parameters $\gamma =0.1$ and $\lambda \leq \eta \leq c/(\kappa(\epsilon) + \kappa'(\epsilon) + \ln T)$ for a universal constant $c$. Then choosing any $\epsilon = \order(1/T)$, we have 
    \begin{align*}
    \rE\; \sum_{t=1}^T \regret(f_t,x_t^1) &= \order\left(\frac{\kappa(\epsilon) + \kappa'(\epsilon)}{\lambda} + \lambda T + \tilde{\epsilon}(\lambda/\eta) T\right),
    \end{align*}
    where $\tilde{\epsilon}(\lambda/\eta) = \inf_{\mu>0} \left[ 8(\lambda/\eta) \dc(\epsilon_2)H\mu^{2} + 2\mu H \epsilon_2 B_2 + \mu^{-1} \br(\epsilon_1) +\epsilon_1 H B_1    \right]$.
\label{thm:main}
\end{theorem}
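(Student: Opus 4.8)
The plan is to follow the standard optimism-plus-regret-decomposition template used for Feel-Good Thompson Sampling, but adapted to the two-timescale, two-sample structure. The first step is to control the outer posterior $p_t$ via an online learning (exponential weights) argument. Since $p_t(f)\propto p_0(f)\exp(\sum_{s<t}(\lambda f(x_s^1) - L_s^{h_s}(f)))$, a standard potential/aggregation bound on the log-partition function gives, in expectation, that $\sum_t \rE_{f\sim p_t}[\lambda f(x_t^1) - L_t^{h_t}(f)] \leq \sum_t [\lambda f_\star(x_t^1) - L_t^{h_t}(f_\star)] + \tfrac{1}{\eta}(\kappa(\epsilon)+\kappa'(\epsilon)+\ln T)$, using realizability (Assumption~\ref{ass:realizable}) to place $Q_\star$-related functions in the prior support via the $\cF(\epsilon,f)$ set and the $L_\infty$ cover, and the constraint on $\eta$ to keep the exponent well-behaved. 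The optimism term $\lambda f(x_t^1)$ is what lets us lower bound $\rE_{f\sim p_t} f(x_t^1)$ by something close to $R(\pi_\star,x_t^1)$; combined with the standard value-difference (performance-difference) decomposition, $\regret(f_t,x_t^1)$ gets bounded by $\lambda^{-1}$ times the per-step Bellman errors of $f_t$ rolled in under $\pi_t$, plus the $\lambda T$ and $(\kappa+\kappa')/\lambda$ terms already visible in the statement.

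The second and most delicate step is to show that the likelihood function $L_t^h(f)$ is a valid, low-bias surrogate for the squared Bellman error $\BE^h(f,f;x^h,\pi_f(x^h))^2$ despite the double-sampling issue. Here I would analyze the inner-loop posterior $q_t$ as its own exponential-weights process: for fixed $f$, $q_t(g)\propto p_0(g)\exp(-\gamma\sum_{s<t}\Deltah_s^{h_s}(g,f)^2)$ performs online aggregation against the loss $\Deltah^2$, and a Freedman-type / variance argument (exploiting that $\gamma=0.1$ is small and the relevant quantities are bounded in $[0,1]$) shows that the $\tfrac{1}{\gamma}\ln\rE_{g\sim q_t}\exp(-\gamma\Deltah_t(g,f)^2)$ term behaves like $\min_g$ of the Bellman-residual loss up to an $O(1/\gamma)\cdot(\kappa(\epsilon)+\ln T)$ additive slack. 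The key algebraic identity — the same one underlying \citet{antos2008learning} — is that $\rE[\Deltah^h_t(f,f)^2 - \Deltah^h_t(g,f)^2 \mid x^h,a^h]$ with $g = \cT^h f$ equals $\BE^h(f,f;x^h,a^h)^2$ exactly, which is where Completeness (Assumption~\ref{ass:complete}) is essential: it guarantees $\cT^h f\in\cF$ so the soft-min over $g$ can actually reach the conditional mean. Putting these together converts $\sum_t \rE_{f\sim p_t} L_t^{h_t}(f)$ into (a lower bound by) the cumulative expected squared Bellman error of $f\sim p_t$ under the roll-in distribution induced by the \emph{independent} sample $f'$ from the previous rounds' posteriors, up to controlled slack.

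The third step is the covariance / effective-rank bookkeeping that produces $\tilde\epsilon(\lambda/\eta)$. Having reduced everything to controlling $\sum_t \rE_{f,f'\sim p_t}\rE_{x^{h}\sim\pi_{f'}|x^1}[\BE^h(f,f;x^{h},\pi_f(x^h))^2]$ versus the "regret" quantity $\sum_t \rE_{f,f'\sim p_t}\rE_{x^{h}\sim\pi_{f'}|x^1}|\BE^h(f,f;x^h,\pi_f(x^h))|$, I would invoke Assumption~\ref{ass:bellman} to write the latter as $\langle\psi^{h-1}(f',x^1),u^h(f,x^1)\rangle$ and Assumption~\ref{assumption:embedding} to write the squared-error roll-in as a quadratic form in $\phi^h$. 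A standard elliptical-potential argument against the accumulated covariances $\Sigma^h$ and $\tilde\Sigma^h$, truncated at the regularization levels defining $\br(\epsilon_1)$ and $\dc(\epsilon_2)$ (Definitions~\ref{def:br} and \ref{def:dc}), together with Cauchy–Schwarz balanced by the free parameter $\mu$, yields exactly the four terms in $\tilde\epsilon(\lambda/\eta) = \inf_{\mu>0}[8(\lambda/\eta)\dc(\epsilon_2)H\mu^2 + 2\mu H\epsilon_2 B_2 + \mu^{-1}\br(\epsilon_1) + \epsilon_1 H B_1]$; the $\epsilon_1 H B_1$ and $\epsilon_2 H B_2$ pieces come from the $\lambda$-truncation error bounded using $\|u^h\|\le B_1$ and $\|w^h\|\le B_2$, and the extra $H$ factors come from the uniform choice of $h_t$. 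The main obstacle I anticipate is step two: cleanly controlling the two-timescale interaction so that the inner-loop slack ($1/\gamma$ scale) does not get multiplied by the outer-loop horizon $T$, and ensuring the bias from replacing exact minimization over $g$ by a soft-min/posterior-sample is genuinely $O(\epsilon)$-small rather than contaminating the leading term — this is precisely why the learning rates must be separated ($\gamma$ constant, $\eta$ shrinking with the complexity) and why the decoupled second sample $f'_t$, independent of $f_t$, is needed to align the roll-in measure in the likelihood with the roll-out measure in the regret bound.
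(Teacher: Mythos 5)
Your overall architecture matches the paper's: a value-difference decomposition with the optimism term $-\lambda\Delta f(x_t^1)$, a log-partition/aggregation bound for the outer posterior $p_t$ (using realizability and the set $\cF(\epsilon,Q_\star)$ as comparator), an analysis of the inner posterior $q_t$ as a soft-min whose bias is controlled via completeness and the Antos-et-al identity, and a Cauchy--Schwarz decoupling balanced by $\mu$ that produces the four terms of $\tilde\epsilon(\lambda/\eta)$, with the $\epsilon_1 HB_1$ and $\epsilon_2 HB_2$ pieces coming from the regularization truncation and the $H$ from the uniform draw of $h_t$. You also correctly identify why the two independent samples $f_t,f_t'$ are needed. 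Two points, however, deserve to be flagged.

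First, your step three mislabels the mechanism. The paper does \emph{not} run an elliptical-potential argument against covariances accumulated over rounds. The covariances $\Sigma^h(p,x^1)$ and $\tilde\Sigma^h(p,x)$ in Definitions~\ref{def:br} and~\ref{def:dc} are expectations over the \emph{current posterior} $p$, and $K^h(\lambda)$, $\tilde K^h(\lambda)$ take a supremum over all distributions $p$; the decoupling (Lemmas~\ref{lem:decouple-br} and~\ref{lem:decouple-dc}) is a single-round Cauchy--Schwarz exploiting that $f$ and $f'$ are i.i.d.\ from the same $p_t$, so the same covariance appears on both sides of the inner product. A telescoping $\log\det$ potential over $t$ would not obviously apply here, since the roll-in measure changes every round and nothing monotone is being accumulated; the per-round, worst-case-over-$p$ bound is exactly what the ``decoupling coefficient'' formulation buys. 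The ingredients you list (Cauchy--Schwarz, $\mu$, truncation at $\lambda$) are the right ones, but you would need to execute them per round against $\Sigma^h(p_t,x_t^1)$, not cumulatively.

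Second, the step you yourself identify as the main obstacle is the one your proposal does not actually resolve, and the naive route you sketch fails. Treating $q_t(\cdot\mid f)$ as a standard exponential-weights process for fixed $f$ gives an aggregation slack of order $(\kappa(\epsilon)+\kappa'(\epsilon))/\gamma$ \emph{per comparator}, but $f$ is itself drawn from the drifting outer posterior $p_t$, so the relevant inner KL-potential $\rE_{f\sim p_t}\hat H_t(f)$ is perturbed at every round by the outer update; a direct bound (the paper's Proposition~\ref{lem:inner-slow}) picks up an extra factor $\eta T$ on the slack and only yields an $\order(\epsilon^{-8})$ rate. Obtaining the slack $\order(\epsilon T+\kappa(\epsilon)+\kappa'(\epsilon))$ \emph{plus} a term $2c_1\gamma\sum_t\rE|\BE^{h_t}(f,f)|^2$ with a coefficient small enough to be absorbed by the outer-loop bound (Proposition~\ref{lem:inner}) requires the self-bounding argument of Lemmas~\ref{lem:hatH-prob}--\ref{lem:hatc-refined-2}, which bounds the multiplicative perturbation $e^{-\eta[(\hatd_t-\hatc_t(f))-\hatc_t]}-1$ of the outer posterior by $\eta$ times the very Bellman-error quantities being controlled. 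Naming a ``Freedman-type / variance argument'' does not supply this; without it the theorem's stated rate does not follow from your outline.
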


Note that the bound above is not a bound on the online regret of \alg, since the policy we execute at round $t$ is not $\pi_t$, but a mixture of $\pi_t$ and $\pi'_t$ (line~\ref{line:one-step} of Algorithm~\ref{alg:online_TS}). However, the bound implies a PAC guarantee as we can choose a policy $\pi_1,\ldots,\pi_T$ uniformly, and use a standard online-to-batch conversion~\citep{cesa2004generalization} to obtain a regret bound for the returned policy. 

To interpret the general guarantee of Theorem~\ref{thm:main}, we now consider some special cases where the parameters can be set optimally to simplify our result. We start with the well-studied setting of a finite function class, with Bellman rank and linear embedding dimensions being finite as well.

\begin{corollary}[Finite dimensional embeddings and finite $\cF$] Under conditions of Theorem~\ref{thm:main}, assume further that $|\cF| = N < \infty$, $\br(0) \leq d_1$ and $\dc(0) \leq d_2$. Then \alg with $p_0$ as uniform on $\cF$, $\gamma = 0.1$, $\eta = \order(1/\ln (NT))$ and $\lambda = T^{-3/4}H\,(d_1^2d_2)^{-1/4} (\ln (NT))^{1/2}$ returns a policy with an average regret at most $\order\left(H\,d_2^{1/4}\,\sqrt{\,d_1\ln (NT)}\,T^{-1/4}\right)$.
\label{cor:finite}
\end{corollary}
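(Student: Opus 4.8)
The plan is to read Corollary~\ref{cor:finite} off Theorem~\ref{thm:main}: substitute the finite-class values of all complexity quantities, carry out the inner minimization over $\mu$ hidden inside $\tilde\epsilon$, and then tune $\lambda$ (and the accuracy parameters) to balance the resulting terms; no new ingredient beyond Theorem~\ref{thm:main} is needed. First, with $p_0$ uniform on the $N$-point class $\cF$, the discussion following Definition~\ref{def:kappa} gives $\kappa(\epsilon)=\kappa'(\epsilon)=\ln N$ for every $\epsilon>0$. Also $\br^{h-1}(\cdot)$ and $\dc^{h}(\cdot)$ are non-increasing in their accuracy argument — enlarging the budget only enlarges the feasible set of $\lambda$ over which one takes the infimum of the non-increasing maps $K^{h-1}(\cdot)$, $\tilde K^{h}(\cdot)$ — so $\br(\epsilon_1)\le\br(0)\le d_1$ and $\dc(\epsilon_2)\le\dc(0)\le d_2$ for all $\epsilon_1,\epsilon_2\ge 0$. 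Accordingly I take the accuracy parameters $\epsilon,\epsilon_1,\epsilon_2$ all $\Theta(1/T)$; this is harmless precisely because finiteness of $\cF$ means shrinking them inflates neither $\kappa,\kappa'$ nor the effective ranks. The two residual terms $2\mu H\epsilon_2 B_2$ and $\epsilon_1 H B_1$ of $\tilde\epsilon$ then contribute at most $\order(\mu H B_2)$ and $\order(H B_1)$ to the total regret, which (since the minimizing $\mu$ found below is only $\order(T^{1/4})$) is sub-$T^{3/4}$ and does not affect the leading order.

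After this specialization, $\tilde\epsilon(\lambda/\eta)$ reduces, up to lower-order terms, to $\inf_{\mu>0}\big[8(\lambda/\eta)\,d_2 H\,\mu^{2}+d_1\,\mu^{-1}\big]$, a one-dimensional convex problem of the form $a\mu^2+b\mu^{-1}$ with minimizer $\mu_\star=(b/2a)^{1/3}$ and optimal value $\Theta(a^{1/3}b^{2/3})$; hence $\tilde\epsilon(\lambda/\eta)=\order\big((\lambda/\eta)^{1/3}(d_2 H)^{1/3}d_1^{2/3}\big)$. The hypothesis $\eta=\order(1/\ln(NT))$ is admissible since $\kappa(\epsilon)+\kappa'(\epsilon)+\ln T = 2\ln N+\ln T=\Theta(\ln(NT))$; writing $\iota:=\ln(NT)$ so that $1/\eta=\Theta(\iota)$, Theorem~\ref{thm:main} yields
\[
\rE\sum_{t=1}^{T}\regret(f_t,x_t^1)=\order\!\left(\frac{\iota}{\lambda}+\lambda T+(\lambda\iota)^{1/3}(d_2 H)^{1/3}d_1^{2/3}\,T\right)+(\text{lower order}).
\]
It remains to choose $\lambda$. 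Balancing the optimism cost $\iota/\lambda$ against the Bellman-residual term $(\lambda\iota)^{1/3}(d_2 H)^{1/3}d_1^{2/3}T$ gives $\lambda=\Theta\big(T^{-3/4}(d_1^2 d_2)^{-1/4}\iota^{1/2}\big)$ up to horizon factors — this is the value stated in the corollary — and with it the remaining $\lambda T$ term is of strictly lower order, so the cumulative bound becomes $\order\big(H\,d_2^{1/4}\sqrt{d_1\,\iota}\;T^{3/4}\big)$. Dividing by $T$ bounds the average of $\regret(f_t,x_t^1)$; since the policy actually executed in round $t$ is the concatenation of $\pi_t$ and $\pi_t'$ rather than $\pi_t$, I conclude via the online-to-batch step noted after Theorem~\ref{thm:main} — returning a policy drawn uniformly from $\{\pi_1,\dots,\pi_T\}$ and invoking \citep{cesa2004generalization} — to obtain the advertised $\order\big(H\,d_2^{1/4}\sqrt{d_1\ln(NT)}\;T^{-1/4}\big)$ bound on the returned policy's average regret.

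\emph{Main obstacle.} Conceptually nothing here is hard — the corollary is bookkeeping downstream of Theorem~\ref{thm:main}. The only care needed is the coupled tuning: choosing $\epsilon,\epsilon_1,\epsilon_2=\Theta(1/T)$ (benign only because $\cF$ is finite), solving the elementary $a\mu^2+b/\mu$ optimization, and balancing \emph{three} $\lambda$-dependent terms rather than two. Tracking the precise horizon dependence (and the roles of $B_1,B_2$) through these substitutions is the fiddliest part; the scalings in $N$, $d_1$, $d_2$, and $T$ fall out immediately from the substitutions and the one-line minimization above.
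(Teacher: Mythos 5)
Your proposal is correct and follows essentially the same route as the paper's (very terse) proof: specialize $\kappa=\kappa'=\ln N$, kill the residual terms in $\tilde\epsilon$ (the paper simply sets $\epsilon_1=\epsilon_2=0$ rather than $\Theta(1/T)$ with a monotonicity argument, but this is immaterial), solve the elementary $a\mu^2+b/\mu$ minimization to get $\mu=(d_1\eta/(d_2\lambda H))^{1/3}$, take $\eta$ at its cap $\Theta(1/\ln(NT))$, and balance $\lambda$. The only cosmetic divergence is that the paper's one-line proof mentions an $\eta^2 T\ln N/\lambda$ term in the $\eta$ optimization (inherited from an intermediate bound rather than the stated form of Theorem~\ref{thm:main}), whereas you work directly from the theorem as stated, which is the natural reading and yields the same parameter choices and final bound.
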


The bound has a favorable dependence on all the complexity parameters, with mild scaling in the horizon, dimensions and function class complexity. However, the average regret decays at a rate of $T^{-1/4}$. Consequently, the sample complexity to discover a policy with a regret at most $\epsilon$ is $\order\left(H^4(d_1 \ln (NT))^{2}\,|\cA|\,\epsilon^{-4}\right)$, which is suboptimal and slower than that of \olive in $\epsilon$ dependence. In the setting of a linear MDP, where $d_1 = d_2 = d$, the scaling with dimension is $d^3$. 

The main source of the suboptimality in $\epsilon$ is that our analysis relies on a decoupling argument that leverages Assumption~\ref{assumption:embedding} with an extra Cauchy-Schwarz inequality than in the typical analysis. The importance weighting over a uniform distribution in \olive does not lose an exponent in this step, and improving the analysis for our general scenario is an open direction for future work. Note however that all prior works that studied both finite action non-linear and infinite action (generalized) linear settings employ different algorithms for the two cases~\citep{jin2021bellman,du2021bilinear,foster2021statistical}, unlike our unified approach. When the linear embedding features are known, this suboptimality can be removed through an explicit experimental design in the feature space, as we show in the next section.

As a first example of our general result, we can instantiate the setting of Example~\ref{ex:comb}, where our sample complexity only scales with $\order(KL)$ instead of $\order(K^L)$. In the setting of Example~\ref{ex:basis}, we only depend on the size of the (potentially unknown) basis. 

Under the infinite-dimensional examples of Proposition~\ref{prop:spectral}, we note that the geometric case is almost identical to the finite dimensional setting up to log factors, since the \emph{effective rank} only scales as $\log(1/\epsilon)$, so that we can always make the additional terms coming from $\epsilon$ to be lower order. The polynomial case does yield qualitatively different results, which we show next.

\begin{corollary}[Polynomial spectral decay and finite $\cF$] Under conditions of Theorem~\ref{thm:main}, assume further that $|\cF| = N < \infty$, $B_1 = B_2 = B$ and $\br(\epsilon),\dc(\epsilon) \leq H\,d_q\epsilon^{-2q/(1-q)}$ for some $q \in (0,1)$. Then \alg with $p_0$ as uniform on $\cF$, $\gamma = 1/36$, along with appropriate settings of $\eta$ and $\lambda$ returns a policy with an average regret at most $\order\left(H(\ln (NT))^{\nicefrac{(1-q)^2}{2}}d^{\nicefrac{(3-q)(1-q)}{4}}\,T^{\nicefrac{-(1-q)^2}{4}} B^{\nicefrac{q(3-q)}{2}}\right)$.
\label{cor:poly}
\end{corollary}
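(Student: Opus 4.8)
The plan is to obtain Corollary~\ref{cor:poly} as a direct instantiation of Theorem~\ref{thm:main}: first control the complexity terms for a finite class, then substitute the polynomial-decay bounds on $\br(\cdot)$ and $\dc(\cdot)$ into $\tilde{\epsilon}(\lambda/\eta)$, and finally optimize the free parameters $\mu,\epsilon_1,\epsilon_2$ inside $\tilde{\epsilon}$ and the optimism coefficient $\lambda$ outside. Since $\cF$ is finite and $p_0$ is uniform, $N(\epsilon)=N$ so $\kappa'(\epsilon)=\ln N$, and by completeness the set $\cF(\epsilon,f)$ is nonempty (it contains $\cT^h f\in\cF$), so $p_0(\cF(\epsilon,f))\geq 1/N$ and $\kappa(\epsilon)\leq \ln N$. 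The constraint $\eta\leq c/(\kappa(\epsilon)+\kappa'(\epsilon)+\ln T)$ then permits $\eta=\Theta(1/\ln(NT))$, which is the origin of the $\ln(NT)$ factors and gives $\lambda/\eta=\Theta(\lambda\ln(NT))$; we also take $\epsilon=\Theta(1/T)$ as required.

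Next I substitute the spectral-decay bounds. Writing $\beta:=2q/(1-q)$ so that $\br(\epsilon_1)\leq H d_q\epsilon_1^{-\beta}$ and $\dc(\epsilon_2)\leq H d_q\epsilon_2^{-\beta}$,
\[
\tilde{\epsilon}(\lambda/\eta)\;\leq\;\inf_{\mu,\epsilon_1,\epsilon_2>0}\Bigl[\,8(\lambda/\eta)H^2 d_q\mu^2\epsilon_2^{-\beta}\;+\;2\mu H B\epsilon_2\;+\;\mu^{-1}H d_q\epsilon_1^{-\beta}\;+\;H B\epsilon_1\,\Bigr].
\]
Each inner minimization over $\epsilon_1$ and over $\epsilon_2$ is a one-variable balance $\min_\epsilon(a\epsilon^{-\beta}+b\epsilon)\asymp a^{1/(\beta+1)}b^{\beta/(\beta+1)}$ with $1/(\beta+1)=(1-q)/(1+q)$; after the exponent arithmetic the bound collapses to
\[
\tilde{\epsilon}(\lambda/\eta)\;\lesssim\; d_q^{\frac{1-q}{1+q}}B^{\frac{2q}{1+q}}\,\inf_{\mu>0}\Bigl[(\lambda/\eta)^{\frac{1-q}{1+q}}H^{\frac{2}{1+q}}\mu^{\frac{2}{1+q}}\;+\;H\,\mu^{-\frac{1-q}{1+q}}\Bigr].
\]
The remaining minimization over $\mu$ is again a single-variable balance (an ordinary cubic when $q=0$), and substituting $\lambda/\eta=\Theta(\lambda\ln(NT))$ yields $\tilde{\epsilon}(\lambda/\eta)\lesssim C\lambda^{a}$ with $a:=(1-q)^2/((1+q)(3-q))\in(0,1)$ and $C\asymp d_q^{(1-q)/(1+q)}B^{2q/(1+q)}(\ln(NT))^{a}H^{4/((1+q)(3-q))}$.

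For the outer tuning, Theorem~\ref{thm:main} now gives cumulative regret $\lesssim\ln(NT)/\lambda+\lambda T+C\lambda^{a}T$; since $a<1$ the middle term is lower-order at the optimum, so I balance $\ln(NT)/\lambda$ against $C\lambda^{a}T$, which gives $\lambda\asymp(\ln(NT)/(CT))^{1/(1+a)}$ and cumulative regret of order $(\ln(NT))^{a/(1+a)}C^{1/(1+a)}T^{1/(1+a)}$. Dividing by $T$ for the online-to-batch conversion turns the $T$-power into $T^{-a/(1+a)}$; using the identity $1+a=4/((1+q)(3-q))$ this is exactly $T^{-(1-q)^2/4}$, and raising the $d_q$, $B$, $H$ and $\ln(NT)$ factors of $C$ (together with the $(\ln(NT))^{a/(1+a)}$ prefactor) to the power $1/(1+a)=(1+q)(3-q)/4$ produces precisely $d^{(3-q)(1-q)/4}$, $B^{q(3-q)/2}$, $H$ and $(\ln(NT))^{(1-q)^2/2}$, matching the stated bound. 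The value $\gamma=1/36$ used in the corollary is within the range of $\gamma$ for which the proof of Theorem~\ref{thm:main} goes through.

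The main obstacle is not any single estimate but the bookkeeping across the three nested one-variable optimizations: their exponents interact, and one must verify consistency of the resulting choices of $(\epsilon_1,\epsilon_2,\mu,\lambda)$ — in particular that $\epsilon_1,\epsilon_2$ are small enough (and compatible with the $\epsilon=\order(1/T)$ regime) for the decay bounds to be meaningful, that $\mu$ lands where the $\mu$-balance is the binding tradeoff, and that the $\lambda T$ term is genuinely dominated — before the exponent algebra cleanly reproduces the claimed powers of $T$, $d$, $B$, $H$ and $\ln(NT)$.
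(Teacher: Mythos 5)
Your proposal is correct and follows essentially the same route as the paper: instantiate Theorem~\ref{thm:main} with $\kappa(\epsilon),\kappa'(\epsilon)\leq\ln N$ and $\eta=\Theta(1/\ln(NT))$, then optimize $\epsilon_1,\epsilon_2$ given $\mu$, then $\mu$, then $\lambda$, recovering the same parameter choices ($\epsilon_1=(d_q/(\mu B))^{(1-q)/(1+q)}$, $\epsilon_2=(\lambda H\mu d_q/(\eta B))^{(1-q)/(1+q)}$, $\mu=(\eta/(\lambda H))^{(1-q)/(3-q)}$) and the stated exponents via the identity $(1+q)(3-q)+(1-q)^2=4$. Your explicit one-variable balances also make the paper's auxiliary assumption $\mu>1$ unnecessary for identifying the leading terms, and your observation that the $\lambda T$ term is dominated at the optimum correctly closes the argument.
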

The result matches that of Corollary~\ref{cor:finite} for $q = 0$ corresponding to the case of finite dimensions. More generally, we allow scaling to infinite dimensional states and actions, as long as the Bellman rank and linear embedding assumptions have a low intrinsic dimension. 

Our results have a qualitatively similar flavor to those of \cite{Zhang2021FeelGoodTS}, but for two important differences. \cite{Zhang2021FeelGoodTS} do not need to account for the residual variance term in the likelihood due to the deterministic dynamics assumption, and hence they do not incur the loss in rates that we suffer. They also do not work under the low Bellman rank assumption on the problem, which significantly limits the class of problems where their guarantees hold. 

We finally illustrate the benefits of our contextual formulation by studying mixtures of MDPs with a small Bellman rank.

\begin{corollary}[Mixture of low-rank MDPs]
    Consider a collection $M$ different MDPs $\{\cM_i\}_{i=1}^M$ over the same state and action space, each with a Bellman rank at most $d_1$ and linear embedding dimension $d_2$. Let $x^1\sim \cD$ where $\cD$ is uniform over $[M]$, with the subsequent transitions happening according $\cM_i$. Suppose $|\cF| = N$. Then under the parameter settings of Corollary~\ref{cor:finite}, \alg returns a policy with a regret at most $\order\left(H\,d_2^{1/4}\,\sqrt{\,d_1\ln (NT)}\,T^{-1/4}\right)$.
    \label{cor:contextual}
\end{corollary}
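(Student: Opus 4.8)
The plan is to prove Corollary~\ref{cor:contextual} by a reduction that makes no new demands on the algorithm or its analysis: I will verify that the mixture $\{\cM_i\}_{i=1}^M$ under the uniform context distribution is a single Contextual MDP in the sense of Section~\ref{sec:structure} whose structural parameters $\br(\cdot),\dc(\cdot),B_1,B_2$ are exactly those of one component — in particular, independent of $M$ — and then invoke Corollary~\ref{cor:finite} verbatim. The crucial observation, and precisely the feature of the \emph{contextual} formulation of Assumption~\ref{ass:bellman} highlighted right after its statement, is that the context $x^1$ encodes the identity $i\in[M]$ of the active MDP, and, since every $x^h$ carries $x^1$ along, so does each subsequent state; hence the factorizations in Assumptions~\ref{ass:bellman} and~\ref{assumption:embedding} may legitimately depend on $i$, so the low-rank structures of the $M$ components only ever need to be \emph{routed} by the context, never aggregated across components.

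Concretely, fix $h$ and, for each $i$, let $\psi^{h-1}_i(f'),u^h_i(f)$ be the Bellman-rank factorization of $\cM_i$ and $w^h_i(f,x^h),\phi^h_i(x^h,a^h)$ its linear embedding of the Bellman error; since a $d$-dimensional inner-product space is isometric to $\R^d$ with the standard inner product, I may take all components to use the common ambient spaces $\R^{d_1}$ and $\R^{d_2}$ (padding with zeros), with $\|u^h_i(f)\|_2\le B_1$ and $\|w^h_i(f,x^h)\|\le B_2$. Conditioned on $x^1$, hence on $i=i(x^1)$, the Bellman operator of the mixture equals $\cT^h_i$ and the roll-in law $\rE_{x^h\sim\pi_{f'}\mid x^1}$ is the $\cM_i$-law, so setting $\psi^{h-1}(f',x^1):=\psi^{h-1}_{i}(f')$, $u^h(f,x^1):=u^h_{i}(f)$, $\phi^h(x^h,a^h):=\phi^h_{i}(x^h,a^h)$ and $w^h(f,x^h):=w^h_{i}(f,x^h)$ establishes both Assumptions~\ref{ass:bellman} and~\ref{assumption:embedding} for the mixture with the same $B_1,B_2$. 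Realizability and completeness are inherited from the hypotheses of Theorem~\ref{thm:main} — a single $f\in\cF$ can encode distinct per-context behavior because $x^h$ exposes $x^1$.

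It then remains to bound the effective dimensions. For any measure $p$ on $\cF$ and any $x^1$, the matrix $\Sigma^h(p,x^1)=\rE_{f'\sim p}\,\psi^{h}_{i(x^1)}(f')\otimes\psi^{h}_{i(x^1)}(f')$ is supported on the at most $d_1$-dimensional range of $\psi^h_{i(x^1)}$, so $\trace\big((\Sigma^h(p,x^1)+\lambda I)^{-1}\Sigma^h(p,x^1)\big)=\sum_j \sigma_j/(\sigma_j+\lambda)\le d_1$ for every $\lambda>0$; taking the supremum over $(p,x^1)$, $K^{h-1}(\lambda)$ is at most the corresponding per-step factorization dimension of a single component, so the mixture's $\br(\cdot)$ (and, identically from $\tilde{\Sigma}^h$, its $\dc(\cdot)$) coincides with that of a single $\cM_i$ — crucially, with no dependence on $M$. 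Feeding $\br(0)\le d_1$, $\dc(0)\le d_2$ and $|\cF|=N$ into Corollary~\ref{cor:finite} with the stated $p_0,\gamma,\eta,\lambda$ yields the claimed average regret $\order\big(H\,d_2^{1/4}\sqrt{d_1\ln(NT)}\,T^{-1/4}\big)$.

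I expect no substantive obstacle in the calculations; the one point that deserves care is the claim that the context $x^1$ genuinely reveals $i$, which is what permits stitching the per-component factorizations into $x^1$-indexed ones without any blow-up. Were the mixture component latent rather than observable, Assumption~\ref{ass:bellman} would instead have to hold for the Bellman error marginalized over the posterior on $i$ — a genuinely different and harder object — so this corollary is really a statement about how cheaply the observed-context case is handled. The only remaining subtlety is a bookkeeping alignment between the per-step and horizon-aggregated readings of ``Bellman rank $d_1$''; under the per-step reading one picks up the extra factor of $H$ that already appears in the stated bound.
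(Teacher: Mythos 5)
Your proposal is correct and follows exactly the route the paper intends (the paper gives no separate proof of Corollary~\ref{cor:contextual}, relying on the remark after Assumption~\ref{ass:bellman} that the factorizations may depend on $x^1$): you define the $x^1$-indexed factorizations $\psi^{h-1}(f',x^1)=\psi^{h-1}_{i(x^1)}(f')$, $\phi^h(x^h,a^h)=\phi^h_{i(x^1)}(x^h,a^h)$, observe that for each fixed context the covariances $\Sigma^h(p,x^1)$ and $\tilde\Sigma^h(p,x)$ live in a $d_1$- (resp.\ $d_2$-) dimensional range so the suprema defining $K^{h}$ and $\tilde K^h$ are $M$-independent, and then invoke Corollary~\ref{cor:finite}. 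Your closing caveats (observability of $i$ via $x^1$ being carried in every $x^h$, and the per-step versus horizon-aggregated reading of $d_1$) are exactly the right points of care and match the paper's own conventions.
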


Comparing Corollaries~\ref{cor:contextual} and~\ref{cor:finite}, we observe that the mixture setting poses no extra challenge. In contrast, the mixture problem has a Bellman rank scaling with $M$, as we need to concatenate the respective embeddings for each $\cM_i$, causing a na\"ive application of \olive to incur an extra $M^2$ term in the sample complexity. We believe the difference arises since the average Bellman error of \olive mixes samples across different contexts, while we use the squared Bellman error which can leverage the structure for individual transitions more effectively. Of course, this comes at the price of an additional completeness assumption, which would be interesting to eliminate in future work. %We also observe the GOLF algorithm of~\citet{jin2021bellman} does not directly handle nonlinear contextual bandits in their $Q$-type Bellman error setting, although 
%it employs squared Bellman error similar to this paper (however, the $Q$-type Bellman error 
%it includes linear contextual bandit problems as special cases. Similar to \olive, the $V$-type version of GOLF relies on one-step exploration with uniform randomization over actions.

%We now give an overview of our main analysis techniques. 

We now describe an improvement to our results for the setting of known features in Assumption~\ref{assumption:embedding}.

\section{Experimental Design for Known Linear Embedding Features}
\label{sec:design}

In this section, we study a special case of our setting where the features $\phi(x,a)$ in Assumption~\ref{assumption:embedding} are known, and finite dimensional, with $\phi(x,a) \in \R^{d_2}$. For this setting, we consider an adaptation of \alg which replaces the two sample strategy with experimental design in the feature space. The algorithm is presented in Algorithm~\ref{alg:online_TS_design}.

\begin{algorithm}[tb]
	\caption{\algdesignlong (\algdesign)}
	\label{alg:online_TS_design}
		\begin{algorithmic}[1]
		    \REQUIRE Function class $\cF$, prior $p_0\in\Delta(\cF)$, learning rates $\eta, \gamma$ and optimism coefficient $\lambda$.
		    \STATE Set $S_0 = \emptyset$.
			\FOR{$t=1,\ldots,T$}
			    \STATE Observe $x_t^1 \sim \cD$ and draw $h_t \sim \{1,\ldots,H\}$ uniformly at random.
			    \STATE Define $q_t(g) = p(g | f,S_{t-1}) \propto p_0(g)\exp(-\gamma\sum_{s=1}^{t-1} \Deltah_s^{h_s}(g,f)^2)$.\COMMENT{Inner loop TS update}
			    \STATE Define $L_t^h(f) = \eta\Deltah_t^h(f,f)^2 + \frac{\eta}{\gamma}\ln\rE_{g\sim q_t}\left[\exp(-\gamma\Deltah_t^h(g,f)^2)\right]$.\COMMENT{Likelihood function}
			    \STATE Define $p_t(f) = p(f|S_{t-1}) \propto p_0(f)\exp(\sum_{s=1}^{t-1}(\lambda f(x_s^1) - L_s^{h_s}(f))$ as the posterior. \COMMENT{Outer loop Optimistic TS update}
			    \STATE Draw $f_t \sim p_t$ from the posterior. Let $\pi_t = \pi_{f_t}$ and execute $a_t^h=\pi_t(x_t^h)$ for $h=1,\ldots,h_t-1$ to observe $x_t^{h_t}$. \label{line:fdraw-des}
			    \STATE Let $\rho_t\in\Delta(\cA)$ be a $G$-optimal design for $\phi(x_t^{h_t},a)_{a\in\cA}$ (Equation~\ref{eq:g-opt}). Draw \mbox{$a^t_{h_t} \sim \rho_t$} and observe $r_t^{h_t}$ and $x_t^{h_t+1}$.\COMMENT{$G$-optimal design}\label{line:adraw}
			    \STATE Update $S_t = S_{t-1} \cup \{x_t^h, a_t^h, r_t^h, x_t^{h+1}\}$ for $h = h_t$.
			 \ENDFOR
			 \RETURN $(\pi_1,\ldots,\pi_T)$.
		\end{algorithmic}
\end{algorithm}

Before we delve into the pseudocode, recall that the $G$-optimal design, given a set of vectors $\{\phi(x,a)\}_{a\in\cA}$ is a distribution $\rho(x)\in\Delta(\cA)$ over the action space, given by~\citep[see e.g.][]{fedorov2013theory,kiefer1960equivalence}.

\begin{equation}
    \rho(x) = \argmin_{\rho\in\Delta(\cA)}\underbrace{\max_{a\in\cA} \|\phi(x,a)\|_{\Sigma_x(\rho)^{-1}}^2}_{g(x,\rho)} \quad\mbox{where}\quad \Sigma_x(\rho) = \rE_{a\sim\rho} \phi(x,a)\phi(x,a)^\top.
    \label{eq:g-opt}
\end{equation}
If $\Sigma_x(\rho)$ is not full rank, then we can replace $\Sigma_x(\rho)^{-1}$ by the corresponding pseudo-inverse. 
Furthermore, by the Kiefer-Wolfowitz theorem~\citep{kiefer1960equivalence}, it is known that the design $\rho(x)$ satisfies $g(x,\rho(x)) =\text{rank}(\{\phi(x,a):a \in \cA\}) \leq d_2$. The criterion $g(\rho)$ corresponds to worst prediction variance at some action $a$, of an ordinary least squares estimator given samples drawn from $\rho(x)$. For intuition in the finite action setting, where $\phi(x,a) = e_a$, $\rho(x)$ corresponds to a uniform distribution over the action set and $d_2 = |\cA|$ is the variance of importance sampling. 

With this context, \algdesign is a relatively natural adaptation of \alg, when $\phi(x,a)$ is a known feature map. Concretely, we no longer use the two sample scheme. Instead, we only draw one function $f_t\sim p_t$ in Line~\ref{line:fdraw-des} and execute the first $h_t-1$ actions using the corresponding greedy policy. Having observed $x_t^{h_t}$, we now choose the action at time $t$ using a $G$-optimal design in the feature space for \emph{this particular state} in Line~\ref{line:adraw}. That is, the action sampling distribution $\rho_t$ corresponds to $\rho(x^{h_t}_t)$ from Equation~\ref{eq:g-opt}, and the design is done individually at each state. We then observe the reward and next transition for this action, which gets recorded into our dataset as before. 

We have the following general guarantee for \algdesign.

\begin{theorem}
    Under Assumptions~\ref{ass:realizable}-\ref{assumption:embedding}, suppose further that $\dc(0) = d_2$ and the features $\phi^h(x,a)$ are known. Suppose we run \algdesign (Alg.~\ref{alg:online_TS_design}) with parameters $\gamma =0.1$ and $\lambda \leq \eta \leq c/(\kappa(\epsilon) + \kappa'(\epsilon) + \ln T)$ for a universal constant $c$. Then choosing any $\epsilon = \order(1/T)$, we have 
    \begin{align*}
    \rE\; \sum_{t=1}^T \regret(f_t,x_t^1) &= \order\left(\frac{\kappa(\epsilon) + \kappa'(\epsilon)}{\lambda} + \lambda T + T\left(\epsilon_1 HB_1 + \frac{2\lambda\br(\epsilon_1)d_2 H}{\eta}\right)\right).
    \end{align*}
\label{thm:main-des}
\end{theorem}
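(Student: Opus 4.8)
\textbf{Proof proposal for Theorem~\ref{thm:main-des}.}

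The plan is to mirror the analysis of Theorem~\ref{thm:main}, keeping the outer-loop optimistic Thompson sampling machinery essentially unchanged, and only modifying the step where the two-sample decoupling is replaced by the $G$-optimal design. First, I would set up the standard FGTS-style regret decomposition for the outer posterior $p_t$: using the optimism term $\lambda f(x_s^1)$ together with realizability (Assumption~\ref{ass:realizable}), the cumulative regret $\sum_t \regret(f_t,x_t^1)$ is controlled by (i) a prior-mass / covering term $(\kappa(\epsilon)+\kappa'(\epsilon))/\lambda$ arising from the online learning (aggregating) analysis of the exponential-weights-type update defining $p_t$, (ii) a term $\lambda T$ from the magnitude of the optimism bias, and (iii) a term that measures how well the likelihood $L_s^{h_s}(f)$ certifies small Bellman residuals for $f_t$ along the executed trajectories. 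This is exactly the structure already present in Theorem~\ref{thm:main}; the inner-loop TS update $q_t$ and the associated soft-min surrogate for the double-sampling correction (Lines~\ref{line:inner-update}--\ref{line:likelihood}) are identical in both algorithms, so the bias-correction lemma controlling $\Ehat[\Deltah^2] - \BE^2$ and the concentration of the inner loop carry over verbatim.

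The crux is term (iii), and here the design replaces the second sample. After the online-learning argument shows that the aggregated likelihood forces $\rE_{f_s\sim p_s}\rE_{a\sim\rho_s}[\BE^{h_s}(f_t,f_t;x^{h_s},a)^2]$ to be small on average (of order $\eta^{-1}$ times the per-round regret contribution), I would invoke Assumption~\ref{assumption:embedding} to write $\BE^h(f,f;x^h,a) = \langle w^h(f,x^h),\phi^h(x^h,a)\rangle$ and then use the defining property of the $G$-optimal design: for the distribution $\rho_t = \rho(x_t^{h_t})$ we have $\max_{a}\|\phi^h(x_t^{h_t},a)\|_{\Sigma_{x}(\rho_t)^{-1}}^2 \le d_2$ by Kiefer--Wolfowitz. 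Hence for \emph{any} action $a$, in particular $a=\pi_{f_t}(x_t^{h_t})$, the squared Bellman error at the greedy action is bounded by $d_2$ times $\rE_{a\sim\rho_t}\BE^h(f_t,f_t;x^{h_s},a)^2$, i.e. $\BE^h(f_t,f_t;x^{h},\pi_{f_t}(x^{h}))^2 \le d_2\, w^h(f_t,x^h)^\top \Sigma_x(\rho_t) w^h(f_t,x^h)$. This is the step that avoids the extra Cauchy--Schwarz of the two-sample analysis: the design gives a \emph{pointwise} variance bound rather than a bound only in expectation over an independent policy, so we pick up a clean factor $d_2$ and no loss of an $\epsilon$-exponent. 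I would then feed the resulting bound on $\rE[\BE^{h_s}(f_t,f_t;x^{h_s},\pi_{f_t}(x^{h_s}))^2]$ into the telescoping/performance-difference argument via Assumption~\ref{ass:bellman} and Definition~\ref{def:br}: averaging over $h_t$ uniform in $[H]$ and applying the effective-rank bound with regularization $\lambda$-parameter chosen to balance against $\epsilon_1$, the Bellman residuals sum to at most $\epsilon_1 H B_1 + 2\lambda\,\br(\epsilon_1)\,d_2\,H/\eta$ per the displayed quantity, after absorbing the $\eta^{-1}$ from the online-learning step and the $d_2$ from the design.

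Assembling the three pieces gives the stated bound $\order\big((\kappa(\epsilon)+\kappa'(\epsilon))/\lambda + \lambda T + T(\epsilon_1 HB_1 + 2\lambda\br(\epsilon_1)d_2 H/\eta)\big)$. The main obstacle, and the place I would be most careful, is making the pointwise design bound interact correctly with the fact that the likelihood only certifies the residual of $f_t$ against the roll-in distribution generated by \emph{previous} samples $f_s$ (and their designed actions), not against $\rho_t$ directly — so one still needs a decoupling between the roll-in policy $\pi_{f_s}$ for steps $1,\dots,h_t-1$ and the evaluation of $f_t$'s residual at step $h_t$, exactly as in \alg, with the design handling only the one-step action exploration at $h_t$. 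Concretely, the online-learning analysis must be run so that the "data distribution" at round $t$ is the mixture over $s<t$ of (roll-in by $\pi_{f_s}$, then $a\sim\rho_s$), and one then needs the standard stability/low-switching property of the aggregating posterior $p_t$ to argue that $f_t$'s certified residual transfers to this mixture; the $G$-optimal design is state-dependent, so some care is needed to ensure $\rho_s$ is measurable with respect to the history and that the variance bound $d_2$ holds uniformly over states. A secondary technical point is verifying that the inner-loop bias correction is unaffected by the change from sampling $a\sim\pi_{f'}$ to $a\sim\rho$ — it is, since that correction only concerns the next-state sampling $x^{h+1}\sim P(\cdot\mid x^h,a^h)$ and treats $a^h$ as given — but this should be stated explicitly.
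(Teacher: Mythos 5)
Your proposal is correct and follows essentially the same route as the paper: the outer-loop and inner-loop online-learning propositions are reused unchanged, the Bellman-rank (one-step-back) decoupling still handles the roll-in mismatch, and the two-sample action decoupling is replaced by exactly the pointwise Kiefer--Wolfowitz bound $\BE^h(f,f;x^h,\pi_f(x^h))^2 \le d_2\,\rE_{a\sim\rho(x^h)}\BE^h(f,f;x^h,a)^2$ (the paper's design-based decoupling lemma), which removes the extra Cauchy--Schwarz and yields the $d_2/\eta$ factor after choosing $\mu_1$ so that $\lambda H\mu_1 d_2 = 0.5\eta$. The caveats you flag (state-dependent design measurability, the inner-loop correction being agnostic to how $a^h$ is drawn) are exactly the points the paper addresses, so no gap remains.
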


Further optimizing over the choice of $\lambda$, we see that:
\begin{align*}
    \frac{1}{T}\rE\; \sum_{t=1}^T \regret(f_t,x_t^1) &= \order\left(\epsilon_1 HB_1 + (\kappa(\epsilon) + \kappa'(\epsilon) + \ln T) \sqrt{\frac{\br(\epsilon_1) d_2H}{T}}\right).
\end{align*}

Thus, in the setting of Corollary~\ref{cor:finite}, we immediately get a $1/\sqrt{T}$ rate in the above bound, leading to a $\order(d_1d_2H\ln^2 (NT)/\epsilon^2)$ sample complexity. This bound is superior or comparable to that for \olive in all the problem parameters, except for an extra dependency on $\ln N$. This extra dependency is caused by the two timescale learning rates (discrepancy between $\eta$ and $\gamma$) in the online minimax game analysis, which might be possible to improve in future work. For tabular problems with $S$ states and $A$ actions, where $d_1 = S$, $d_2 = A$ and $\ln N = \tilde{\order}(SAH)$, the bounds for \algdesign as well as \olive scale as $S^3A^2$, with an additional $H^2$ term in $\olive$ compared to the bound of \algdesign. 

For linear MDPs with finite dimensional features, $d_1 = d_2 = d$ and $\ln N = \order(d)$, so that our bound scales as $\order(d^4)$, which is a factor of $d$ worse relative to  LSVI-UCB~\citep{jin2020provably}, and a factor of $d^2$ worse relatively to \citep{zanette2020learning}. 
One reason for the suboptimality in $d$ is due to our choice of $V$-type Bellman rank instead of $Q$-type Bellman rank to allow non-linear scenarios. For the nicer setting of $Q$-type Bellman rank, an analysis of a single sample based Thompson sampling is recently carried out in \cite{DMZZ2021-neurips}, leading to a result that matches that of \citep{zanette2020learning} for linear MDPs. 
Another reason for the suboptimality is because our MDP model allows long range contextual dependency on  $x_t^1$, which is not allowed in most prior works. If we remove this dependency by considering only non-contextual MDP models, then we can replace the online regreet analysis of this paper by the uniform convergence analysis of \citep{DMZZ2021-neurips}. Doing so avoids the slow-fast learning rate issue in our online minimax analysis and implies a sample complexity of $\order(d_1d_2H\ln (NT)/\epsilon^2)$. However, the technique cannot be used to analyze double-posterior sampling, and thus we do not consider it in this paper.

Note that the recent work of~\citet{foster2021statistical} also analyzes design-based approaches for model based RL, but both their problem setting and algorithmic details differ significantly from ours. The result of Theorem~\ref{thm:main-des} demonstrates that the suboptimality in $\epsilon$ in our more general result of Theorem~\ref{thm:main} stems purely from the harder setting of unknown linear embedding features. We leave the development of corresponding results for spectral decay and combinatorial action settings to the reader. 

In terms of analysis, the only change is that we are able to do different handling of a decoupling step using the property of optimal design in the analysis of Theorem~\ref{thm:main-des}, and the error terms in the linear embedding can be ignored due to the finite dimensional assumption. The rest of our arguments stay the same, and we provide a proof in Appendix~\ref{sec:proof-des}.

\section{Proof Sketch of Theorem~\ref{thm:main}}
\label{sec:proof-main}

In this section, we provide the analysis for our main result on the sample complexity of \alg. To begin, we recall a standard result on the online regret of any value-based RL algorithm.

\begin{proposition}[\citet{jiang2017contextual}]
 Given any $f\in\cF$ and $x^1\in\cX^1$, we have 
 \begin{align*}
  \regret(f,x^1) =&   \sum_{h=1}^H  \; \rE_{(x^h,a^h) \sim \pi_{f}|x^1}  \BE^{h}(f,f,x^h,a^h)   -  \Delta f(x^1), ~~ \mbox{where}~~\Delta f(x^1)= f(x^1) - Q^1_\star(x^1).
  \end{align*}
  \label{prop:value-decomposition}
\end{proposition}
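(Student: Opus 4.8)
The proof is a telescoping (performance-difference) argument, in the spirit of~\citet{jiang2017contextual}. The plan is to express both $R(\pi_f,x^1)$ and $R(\pi_\star,x^1)$ via the same identity and subtract. First I would fix $f\in\cF$ and $x^1$, and unroll the trajectory distribution induced by rolling in with the greedy policy $\pi_f$ from $x^1$. The key observation is that along such a trajectory $a^h = \pi_f(x^h)$, so $f(x^h,a^h) = \max_a f(x^h,a) = f(x^h)$. Rewriting the definition~\eqref{eq:berror} as $f(x^h,a^h) = \BE^h(f,f,x^h,a^h) + \rE[r^h + f(x^{h+1})\mid x^h,a^h]$ and taking the expectation over $(x^h,a^h)\sim\pi_f\mid x^1$ gives, for each $h\in[H]$,
\[
\rE_{x^h\sim\pi_f|x^1}f(x^h) = \rE_{(x^h,a^h)\sim\pi_f|x^1}\BE^h(f,f,x^h,a^h) + \rE_{(x,a,r)\sim\pi_f|x^1}[r^h] + \rE_{x^{h+1}\sim\pi_f|x^1}f(x^{h+1}),
\]
where I used that the inner conditional expectation over $(r^h,x^{h+1})$ given $(x^h,a^h)$, averaged against the step-$h$ occupancy of $\pi_f$, reproduces exactly the step-$h$ reward expectation and the step-$(h{+}1)$ state occupancy of $\pi_f$.

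Next I would telescope this identity over $h=1,\dots,H$, starting from $f(x^1) = f(x^1,\pi_f(x^1))$ and closing the recursion at $h=H$ with the terminal convention $f(x^{H+1})\equiv 0$ (consistent with $Q^{H+1}_\star = 0$). This yields
\[
f(x^1) = \sum_{h=1}^H \rE_{(x^h,a^h)\sim\pi_f|x^1}\BE^h(f,f,x^h,a^h) + R(\pi_f,x^1).
\]
I would then apply the very same identity with $f$ replaced by $Q_\star$: by~\eqref{eq:qstar} we have $\BE^h(Q_\star,Q_\star,x,a) = 0$ for all $x,a,h$ and $\pi_{Q_\star} = \pi_\star$, so the display collapses to $Q^1_\star(x^1) = R(\pi_\star,x^1)$. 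Subtracting the two identities and recalling $\Delta f(x^1) = f(x^1) - Q^1_\star(x^1)$ together with $\regret(f,x^1) = R(\pi_\star,x^1) - R(\pi_f,x^1)$ gives the claimed formula.

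I do not expect a serious obstacle here; the argument is routine. The only point requiring care is the bookkeeping of the nested expectations in the step-$h$ identity above — verifying that the conditional expectation over $(r^h,x^{h+1})$ inside the Bellman error, once integrated against the step-$h$ occupancy measure of $\pi_f$, produces precisely the step-$(h{+}1)$ state occupancy of $\pi_f$, so that the sum telescopes cleanly — together with the (standard) use of the terminal condition $f(x^{H+1})\equiv 0$ to terminate the recursion at step $H$, and the fact that rolling in with $\pi_f$ makes $f(x^h,a^h) = f(x^h)$.
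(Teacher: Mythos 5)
Your proof is correct and follows exactly the route the paper indicates: a telescoping of the Bellman-error identity along trajectories of $\pi_f$, applied once to $f$ and once to $Q_\star$ (where all Bellman errors vanish, giving $Q^1_\star(x^1)=R(\pi_\star,x^1)$), then subtracting. The bookkeeping of the step-$h$ occupancies and the terminal convention $f(x^{H+1})\equiv 0$ are handled correctly, so nothing further is needed.
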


The proposition is a consequence of a simple telescoping argument. The RHS looks related to the likelihood function in our update (line~\ref{line:outer-update} in Algorithm~\ref{alg:online_TS}), but there are a few important differences. First, we do not have access to the Bellman error, which is instead approximated through difference of the TD term $\Deltah_t(f,f)^2$ and the residual variance measured using $\Deltah_t(g,f)^2$. If the posterior of $g$ concentrates around the Bellman projection of $f$ onto $\cF$, then we can expect our likelihood to resemble $\BE^h(f,f,x^h,a^h)^2$. The presence of a squared term instead of the linear dependence in Proposition~\ref{prop:value-decomposition} is typical to algorithms which use completeness~\citep{jin2021bellman}. However, there are two more significant issues. On the RHS of Proposition~\ref{prop:value-decomposition}, the function $f$ whose Bellman error is measured is identical to the one whose greedy policy picks all the actions. On the other hand, in our algorithm, we control the regression loss of a function $f$ that is different from the roll-in policy. For non-linear functions $f$, prior works~\citep{jiang2017contextual,agarwal2020flambe} control this change in measure over the states using the low-rank property, while the distribution of the final action is corrected by importance weighting over the finite action set. For linear functions, change of measure over both $x^h$ and $a^h$ can be done using a similar use of the low-rank property, without explicit weighting over the actions~\citep{jin2020provably,du2021bilinear,jin2021bellman}. 

In our case, we adopt a slightly different approach to analyze the Bellman error term in Proposition~\ref{prop:value-decomposition}. We first apply the low-rank property to \emph{decouple} the roll-in policy from the $f$ being evaluated. We subsequently use the linear embeddability assumption to \emph{decouple} the action selection at step $h$. This part of our analysis resembles that of~\citep{Zhang2021FeelGoodTS} for the contextual bandit setting, which is the reason we adopt the name \emph{decoupling coefficient} as their work, for the measure of the linear embeddability dimension. We call these two results decoupling lemmas, which can be found in Appendix~\ref{sec:decoupling}.  Using the two decoupling coefficients, we can obtain the following result.

\begin{proposition}[Decoupling]
  We have
  \begin{align*}
    \lambda \rE \; \regret(f_t,x_t^1)\leq \rE \; \rE_{f|S_{t-1}} \left[-\lambda \Delta f(x_t^1)
    + 0.5 \eta \BE^{h_t}(f,f,x_t^{h_t},a_t^{h_t})^2\right]
    +\lambda \tilde{\epsilon}(\lambda/\eta),
  \end{align*}
  where $\tilde{\epsilon}(\lambda/\eta) = \inf_{\mu>0} \left[ 8(\lambda/\eta) \dc(\epsilon_2)H\mu^{2} + 2\mu H \epsilon_2 B_2 + \mu^{-1} \br(\epsilon_1) +\epsilon_1 H B_1    \right]$.
  \label{lem:decouple}
\end{proposition}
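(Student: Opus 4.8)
The plan is to start from the telescoping identity of Proposition~\ref{prop:value-decomposition}, which writes $\regret(f_t,x_t^1)$ as $\sum_h \rE_{(x^h,a^h)\sim\pi_{f_t}|x_t^1}\BE^h(f_t,f_t,x^h,a^h) - \Delta f_t(x_t^1)$. Multiplying by $\lambda$ and using that $h_t$ is uniform over $[H]$ (so that $\rE_{h_t}$ recovers $\frac1H\sum_h$), the Bellman-error sum becomes $\lambda H\,\rE_{h_t}\rE_{(x^{h_t},a^{h_t})\sim\pi_{f_t}|x_t^1}\BE^{h_t}(f_t,f_t,x^{h_t},a^{h_t})$. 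The term on the right-hand side of the claimed inequality, however, involves the \emph{squared} Bellman error evaluated on samples drawn from the roll-in policies $\pi_{f_s}$, $s<t$ (through $a_t^{h_t}$), not from $\pi_{f_t}$ itself. Bridging this gap is exactly the role of the two decoupling lemmas, so the core of the argument is to apply them in sequence.

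The first step is the \emph{roll-in decoupling}: I would invoke Assumption~\ref{ass:bellman} to write $\rE_{x^{h}\sim\pi_{f_t}|x_t^1}\BE^h(f_t,f_t,x^h,\pi_{f_t}(x^h)) = \langle\psi^{h-1}(f_t,x_t^1),u^h(f_t,x_t^1)\rangle$, and similarly express the analogous quantity when the roll-in is governed by a posterior sample $f\sim p_t$ (conditioned on $S_{t-1}$) in terms of $\psi^{h-1}(f,x_t^1)$. The decoupling lemma (from Appendix~\ref{sec:decoupling}) then trades the linear form $\langle\psi^{h-1}(f_t,\cdot),u^h(f,\cdot)\rangle$ against $\mu^{-1}\br^{h-1}(\epsilon_1)$ plus $\mu$ times a quadratic term $\rE_{f'\sim p_t}\|\psi^{h-1}(f',x_t^1)\|^2$-weighted expectation of $\BE^h(f,f,\cdot)$, up to the realizability/cover slack $\epsilon_1 H B_1$ arising from truncating $u^h$ at scale $\epsilon_1$ and using $\|u^h\|\le B_1$. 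The net effect is to replace the expectation over states drawn from $\pi_{f_t}$ by an expectation over states drawn from a \emph{posterior-sampled} roll-in policy, which is what the collected data in $S_{t-1}$ actually looks like in expectation. This is where the second independent sample $f_t'$ versus $f_t$ in Line~\ref{line:one-step} matters: the two samples being i.i.d.\ from $p_t$ is what lets the quadratic residual term be written against the same posterior that generated the roll-in.

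The second step is the \emph{action decoupling}: having reduced to $\rE_{x^{h_t}\sim\pi_{f'}}\BE^{h_t}(f,f,x^{h_t},\pi_f(x^{h_t}))$ with $f'\sim p_t$, I would apply Assumption~\ref{assumption:embedding} to write $\BE^{h_t}(f,f,x^{h_t},a^{h_t}) = \langle w^{h_t}(f,x^{h_t}),\phi^{h_t}(x^{h_t},a^{h_t})\rangle$, and use the second decoupling lemma to trade the mismatch between playing $a^{h_t}=\pi_f(x^{h_t})$ and playing $a^{h_t}\sim\pi_{f'}(x^{h_t})$ (the actual roll-out action under the other sample) against $8(\lambda/\eta)\dc(\epsilon_2)H\mu^2$ plus the squared-Bellman-error term $0.5\eta\,\BE^{h_t}(f,f,x_t^{h_t},a_t^{h_t})^2$ on the honestly collected sample, with the truncation slack $2\mu H\epsilon_2 B_2$ coming from capping $w^{h_t}$ at $\epsilon_2$ and using $\|w^{h_t}\|\le B_2$. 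Combining the two steps, summing over $h$ (absorbed by the uniform $h_t$), taking expectations over the posterior and the data, and collecting the four error contributions gives precisely $\tilde\epsilon(\lambda/\eta) = \inf_{\mu>0}[8(\lambda/\eta)\dc(\epsilon_2)H\mu^2 + 2\mu H\epsilon_2 B_2 + \mu^{-1}\br(\epsilon_1) + \epsilon_1 HB_1]$, with the optimization over $\mu$ left free; the $-\lambda\Delta f(x_t^1)$ term is carried through unchanged from Proposition~\ref{prop:value-decomposition} and appears inside the expectation over $f\sim p_t$ after the decoupling replaces $f_t$ by a generic posterior sample $f$.

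The main obstacle I anticipate is getting the \emph{bookkeeping of the two Cauchy--Schwarz-style trade-offs to line up with the stated constants}, in particular the factor $0.5\eta$ on the squared Bellman error and the factor $8(\lambda/\eta)$ on $\dc(\epsilon_2)H\mu^2$: the action-decoupling step is where the extra Cauchy--Schwarz (relative to \olive's importance weighting) is spent, and one must be careful that the quadratic residual term produced by the \emph{first} decoupling is exactly the quantity whose coefficient the \emph{second} decoupling needs to control, so that the chain closes without a leftover term. A secondary subtlety is that the decoupling is applied pointwise in $x_t^1$ and then integrated, and that the effective-rank quantities $\br^{h-1}(\epsilon_1)$ and $\dc^{h_t}(\epsilon_2)$ in Definitions~\ref{def:br} and~\ref{def:dc} are defined as suprema over the posterior $p$ and over $x^1$/$x^{h_t}$, which is precisely what is needed to pull them out of the expectations — but one must check that the relevant covariance $\Sigma^{h-1}(p_t,x_t^1)$ (resp.\ $\tilde\Sigma^{h_t}(p_t,x_t^{h_t})$) is built from the \emph{same} distribution $p_t$ that appears in the quadratic term, which again traces back to the i.i.d.\ two-sample construction.
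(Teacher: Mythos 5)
Your plan is correct and follows essentially the same route as the paper: apply the telescoping identity of Proposition~\ref{prop:value-decomposition}, use the uniform draw of $h_t$ to reduce the sum over $h$ to a single expectation, then chain the Bellman-rank decoupling lemma (Lemma~\ref{lem:decouple-br}) and the linear-embedding decoupling lemma (Lemma~\ref{lem:decouple-dc}), relying on the mutual independence of $f_t$, $f_t'$, and the evaluated $f\sim p_t$. The constant bookkeeping you flag as the main obstacle is resolved exactly as you anticipate, by setting $\mu_1=\mu$ and $\mu_2=\eta/(4\lambda\mu_1 H)$ so that the squared-error coefficient becomes $0.5\eta$ and the $\dc(\epsilon_2)$ coefficient becomes $8(\lambda/\eta)H\mu^2$.
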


For optimal parameter settings, the bound scales with $\sqrt{\dc(\epsilon_2)\rE\rE_{f|S_{t-1}}\BE(f,f;x_t^{h_t},a_t^{h_t})^2}$, and this square root is responsible for our $O(\epsilon^{-4})$ rate.
In contrast with Proposition~\ref{prop:value-decomposition}, Proposition~\ref{lem:decouple} measures the squared Bellman error of functions $f$ according to the states $x_t^h$ and actions $a_t^h$ observed during the algorithm's execution, and which we can hope to control if the \alg updates converge to their respective optima for both the time scales. The remainder of our analysis focuses on this online learning component, details of which are presented in Appendix~\ref{sec:online}. We now give our main result to control the regret of the online learning process.

\begin{proposition}[Outer loop convergence]
Assume that  $\gamma=0.1$ and $\eta \leq 0.01$. Then we have
\begin{align*}
&-\sum_{t=1}^T \rE  \rE_{f|S_{t-1}}
\lambda \Delta f(x_t^1) + \eta
(1-6\eta) e^{-3\eta(1-6\eta)} \sum_{t=1}^T \rE \rE_{f|S_{t-1}} \BE^{h_t}(f,f;x_t^{h_t},a_t^{h_t})^2\\
\leq &
\eta (1+6\eta) e^{3\eta(1+6\eta)} \sum_{t=1}^T \rE \rE_{f,g|S_{t-1}} \BE^{h_t}(g,f;x_t^{h_t},a_t^{h_t})^2
+ \lambda \epsilon T +4
 \eta T \epsilon^2 +  \kappa(\epsilon) + 1.5 \lambda^2 T  .
\end{align*}
\label{lem:regret-hatc-Z}
\end{proposition}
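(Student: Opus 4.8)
The plan is to run an online Bayesian-aggregation (potential-function) argument for the outer posterior $p_t$, glued to a per-round moment-generating-function (MGF) estimate that trades the empirical squared temporal-difference losses $\Deltah^2$ for population squared Bellman errors and peels off the feel-good term. First I would introduce the potential $Z_t = \rE_{f\sim p_0}\exp(\sum_{s=1}^{t-1}(\lambda f(x_s^1) - L_s^{h_s}(f)))$, so that $p_t = p(\cdot\mid S_{t-1})$ is the $\exp(\lambda f(x_t^1) - L_t^{h_t}(f))$-reweighting of $p_{t-1}$ and $Z_{t+1}/Z_t = \rE_{f\mid S_{t-1}}\exp(\lambda f(x_t^1) - L_t^{h_t}(f))$ with $Z_1 = 1$; telescoping gives $\rE\ln Z_{T+1} = \sum_{t=1}^T \rE\ln\rE_{f\mid S_{t-1}}\exp(\lambda f(x_t^1) - L_t^{h_t}(f))$. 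For a lower bound on the left, restrict the prior expectation defining $Z_{T+1}$ to $\cF(\epsilon, Q_\star)$, which has prior mass at least $e^{-\kappa(\epsilon)}$ (Definition~\ref{def:kappa}) and is nonempty by realizability (Assumption~\ref{ass:realizable}); any $f$ there satisfies $f(x_t^1) \ge Q_\star^1(x_t^1) - \epsilon$, and by completeness $g = Q_\star$ is a near-minimizer of the inner loss $\sum_{s<t}\Deltah_s^{h_s}(g,f)^2$, so the inner-loop term in $L_t^{h_t}(f)$ nearly cancels $\eta\Deltah_t^{h_t}(f,f)^2$ and $L_t^{h_t}(f) \le 4\eta\epsilon^2$; summing yields $\rE\ln Z_{T+1} \ge \sum_t \rE[\lambda Q_\star^1(x_t^1)] - \lambda\epsilon T - 4\eta\epsilon^2 T - \kappa(\epsilon)$.

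The per-round bound on $\rE\ln\rE_{f\mid S_{t-1}}\exp(\lambda f(x_t^1) - L_t^{h_t}(f))$ is the heart of the argument. Fix $t$, condition on $S_{t-1}$, write $h = h_t$, and abbreviate $\xi := \Deltah_t^h(f,f)$ (a function of $r_t^h, x_t^{h+1}$) and $a_g := (g-f)(x_t^h, a_t^h)$. Two identities drive the estimate: $\Deltah_t^h(g,f) = \xi + a_g$, so $\Deltah_t^h(g,f)^2 - \Deltah_t^h(f,f)^2 = 2a_g\xi + a_g^2$; and, conditionally on $(x_t^h, a_t^h)$, $\rE[\xi] = \BE^h(f,f;x_t^h,a_t^h)$ while $a_g = \BE^h(g,f) - \BE^h(f,f)$. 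Since $x\mapsto x^{-\eta/\gamma}$ is convex and $\eta/\gamma \le 1$, Jensen's inequality gives $\exp(-L_t^h(f)) \le \rE_{g\sim q_t}\exp(\eta(2a_g\xi + a_g^2))$, hence $\exp(\lambda f(x_t^1) - L_t^h(f)) \le \exp(\lambda f(x_t^1))\rE_{g\sim q_t}\exp(\eta(2a_g\xi + a_g^2))$. Integrating out $(r_t^h, x_t^{h+1})$ with a bounded-difference MGF bound (using $|\Deltah_t^h(\cdot,\cdot)| \le 2$, $a_g^2 \le 2\BE^h(g,f)^2 + 2\BE^h(f,f)^2$, and $2a_g\BE^h(f,f) + a_g^2 = \BE^h(g,f)^2 - \BE^h(f,f)^2$) produces, pointwise in $f$,
\[
\rE_{r_t^h, x_t^{h+1}}\exp(\lambda f(x_t^1) - L_t^h(f)) \le \exp(\lambda f(x_t^1))\,\exp(-\eta(1-6\eta)\BE^h(f,f)^2)\,\rE_{g\sim q_t}\exp(\eta(1+6\eta)\BE^h(g,f)^2).
\]
Linearizing the two remaining exponentials via $e^{-z} \le 1 - ze^{-z}$ and $e^{z} \le 1 + ze^{z}$ (legitimate since all exponents are $\order(\eta)$, by boundedness of the Bellman errors) bounds this by $\exp(\lambda f(x_t^1))\,(1 - \eta(1-6\eta)e^{-3\eta(1-6\eta)}\BE^h(f,f)^2)\,\exp(\eta(1+6\eta)e^{3\eta(1+6\eta)}\rE_{g\sim q_t}\BE^h(g,f)^2)$.

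To finish, use concavity of $\ln$ to pull $\rE_{r_t^h, x_t^{h+1}}$ inside the logarithm, leaving $\ln\rE_{f\mid S_{t-1}}$ of the last product. Since $f(x_t^1)\in[0,1]$, $e^{\lambda f(x_t^1)} \le 1 + \lambda f(x_t^1)e^{\lambda}$; multiplying out, applying $\ln(1+x)\le x$, and using $\rE_{f\mid S_{t-1}}f(x_t^1)\le 1$ together with $\lambda(e^\lambda - 1)\le 1.5\lambda^2$ for the small $\lambda$ at hand, the feel-good factor contributes $\rE_{f\mid S_{t-1}}[\lambda f(x_t^1)] + 1.5\lambda^2$ (which also absorbs the $p_t$-variance of $f(x_t^1)$), the $(1 - \cdots)$ factor contributes $-\eta(1-6\eta)e^{-3\eta(1-6\eta)}\rE_{f\mid S_{t-1}}\BE^h(f,f)^2$, and the last factor contributes $+\eta(1+6\eta)e^{3\eta(1+6\eta)}\rE_{f\mid S_{t-1}}\rE_{g\sim q_t}\BE^h(g,f)^2$ (the cross terms being $\order(\eta\lambda)$, negligible). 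Summing over $t$, taking the outer expectation, using $\rE_{f\mid S_{t-1}}[\lambda f(x_t^1)] - \lambda Q_\star^1(x_t^1) = \lambda\,\rE_{f\mid S_{t-1}}\Delta f(x_t^1)$, and subtracting the lower bound from the first step rearranges exactly into the claimed inequality.

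The hard part will be the second step: managing the nested inner-loop term $\frac{\eta}{\gamma}\ln\rE_{g\sim q_t}\exp(-\gamma\Deltah_t^h(g,f)^2)$ so that the convexity reduction applies and, simultaneously, the double-sampling bias in $\Deltah^2$ is converted into population Bellman errors while losing only the multiplicative $(1\pm6\eta)e^{\pm3\eta(1\pm6\eta)}$ factors; this is where the delicate constant bookkeeping lives (and where the smallness conditions $\gamma = 0.1$, $\eta \le 0.01$ get used), and it must stay consistent with the same near-cancellation of the TD term exploited in the lower bound on $Z_{T+1}$.
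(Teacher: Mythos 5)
Your overall architecture coincides with the paper's: a telescoping log-partition potential, a lower bound on the terminal potential via the prior mass of $\cF(\epsilon,Q_\star)$ (yielding the $\lambda\epsilon T+4\eta T\epsilon^2+\kappa(\epsilon)$ slack, as in Lemma~\ref{lem:Z}), and a per-round moment bound converting empirical TD losses into population squared Bellman errors. Where you genuinely diverge is the per-round step. The paper first uses Jensen in the form $\hatc_t^{h}(f)\le\hatd_t^{h}(f)=\rE_{g\sim q_t}\hatd_t^{h}(g,f)$ to replace the soft-min by a plain average \emph{inside} the exponent, and then decouples the three factors (feel-good, $-\eta\hatd_t(f,f)$, $+\eta\hatd_t(f)$) via the three-way H\"older inequality $\rE[XYZ]\le(\rE X^3\,\rE Y^3\,\rE Z^3)^{1/3}$, applying Lemma~\ref{lem:BE-exp} with parameter $3\eta$ to each factor separately. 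You instead use convexity of $x\mapsto x^{-\eta/\gamma}$ to pull $\rE_{g\sim q_t}$ \emph{outside} the exponential, exploit the identity $\Deltah_t^h(g,f)^2-\Deltah_t^h(f,f)^2=2a_g\xi+a_g^2$ with $a_g=\BE^h(g,f;\cdot)-\BE^h(f,f;\cdot)$ deterministic given $(x_t^h,a_t^h)$, and integrate the noise directly. Both reductions are valid, and yours makes the decoupling of $g$ from the transition noise quite transparent; the price is that you must then multiply three factors and linearize, which is exactly what the H\"older split avoids.

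Two soft spots. First, the constants: the coefficients $(1\pm6\eta)e^{\pm3\eta(1\pm6\eta)}$ and $1.5\lambda^2$ in the statement are artifacts of the exponent-$3$ split (the $3\lambda$ and $\pm3\eta$ in the paper's display); your route produces coefficients of the same form but not these numbers, and the cross terms you dismiss as ``$\order(\eta\lambda)$, negligible'' cannot simply be dropped --- an additive $\eta\lambda T$ is not dominated by $\lambda^2T$ --- but must be absorbed multiplicatively into the main coefficients using $f(x_t^1)\le 1$ and the signs of the factors. Second, your claim that $L_t^{h_t}(f)\le 4\eta\epsilon^2$ pointwise for $f\in\cF(\epsilon,Q_\star)$ is false: one has $L_t^{h_t}(f)=\eta[\hatd_t^{h_t}(f,f)-\hatc_t^{h_t}(f)]$, and while $\rE[\hatd_t^{h_t}(f,f)\mid x_t^{h_t},a_t^{h_t}]=\BE^{h_t}(f,f;\cdot)^2\le4\epsilon^2$, the soft-min term $-\hatc_t^{h_t}(f)$ is only nonpositive \emph{in expectation} (Lemma~\ref{lem:hatdc0}); pointwise the squared noise does not cancel, and the mechanism is not that $q_t$ contains a good $g$ but the one-sided Jensen bound on the log-partition. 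The fix is the Gibbs variational principle applied to the terminal potential, exactly as in Lemma~\ref{lem:Z}, so that only $\rE_{f\sim p}\,\rE_{S_T}[L_s^{h_s}(f)]$ appears. With these repairs your argument proves the proposition up to the values of the absolute constants.
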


We prove Proposition~\ref{lem:regret-hatc-Z} in Appendix~\ref{sec:online-outer}.
The LHS of the proposition qualitatively resembles the RHS of Propositoin~\ref{lem:decouple}. Indeed, we subsequently set the constants so that they match. This means that the regret in Proposition~\ref{prop:value-decomposition} can be further upper bounded by the RHS of Proposition~\ref{lem:regret-hatc-Z}. The first term in the bound is intuitive. It bounds the Bellman residual of $f$ in terms of quality of the functions $g\sim p(\cdot \mid f,S_{t-1})$ in capturing the Bellman operator applied to $f$. If the inner loop of \alg converges at a reasonable rate, we expect this error to be small by the completeness assumption. The next three terms on the RHS are a bound on the log-partition function in our outer loop updates, which are controlled by using typical potential function arguments in the analysis of multiplicative updates. The final term can be controlled by appropriate setting of the optimism parameter $\lambda$.

We now give the main bound on the convergence of our inner updates to control the first term on the RHS of Proposition~\ref{lem:regret-hatc-Z}. 

\begin{proposition}
Assume that $\gamma=0.1$. 
Given any absolute constant $c_1>0$,
there exists an absolute constant $c_0$ such that when
$c_0 \eta  (\kappa(\epsilon)+\kappa'(\epsilon) +\ln T) \leq 0.5 c_1 \gamma < 0.2\gamma$ and $\lambda\leq\eta$, then
\begin{align*}
\rE\; & \sum_{t=1}^T \rE_{f,g|S_{t-1}} \BE^{h_t}(g,f;x_t^{h_t},a_t^{h_t})^2 \leq \order\Bigg(c_0   (\epsilon T + \kappa(\epsilon)+\kappa'(\epsilon)+1) 
+  c_1 (\lambda/\eta) \tilde{\epsilon}(\lambda/\eta) T
\\
& \left.\qquad \qquad + c_1 \sum_{t=1}^T \rE \; \rE_{f|S_{t-1}} |\BE^{h_t}(f,f;x_t^{h_t},a_t^{h_t})|^2
 + c_1 (\lambda/\eta) \sum_{t=1}^T \rE \;\regret(f_t,x_t^1)\right) ,
\end{align*}
where $\tilde{\epsilon}(\lambda/\eta)$ is defined in Proposition~\ref{lem:decouple}.
\label{lem:inner}
\end{proposition}

The proof of Proposition~\ref{lem:inner} is rather long and technical. For the reader's convenience, we first prove a simpler result which yields a worse $\order(\epsilon^{-8})$ sample complexity in Appendix~\ref{sec:online-inner-slow}. We then show how to improve the bound to obtain Proposition~\ref{lem:inner} in Appendix~\ref{sec:online-inner}.

With these results, we set both $c_1$ and $\eta$ sufficiently small so that $(1-6\eta)\exp(-3(1-6\eta)) - O(c_1 (1+6\eta) e^{3\eta(1+6\eta)}) \geq 0.5$,
$O(c_1 (1+6\eta) e^{3\eta(1+6\eta)}) \leq 0.5$,
along with the stated values of $\epsilon$ and $\gamma$. Plugging these into the bounds of our intermediate results and simplifying gives the result of Theorem~\ref{thm:main}. Finally, we summarize the proof of Corollary~\ref{cor:finite} and defer that of Corollary~\ref{cor:poly} to Appendix~\ref{sec:cor}.

\begin{proof}[Proof of Corollary~\ref{cor:finite}]
    We set the parameters as follows. Since $\epsilon_1 = \epsilon_2 = 0$, we minimize over $\mu$ to get $\mu = \left(d_1\eta/(d_2\lambda H)\right)^{1/3}$. We now optimize over the choice of $\eta$, for which the leading order terms are $\eta^2 T\ln N/\lambda + T\tilde{\epsilon}(\lambda/\eta)$. Then optimizing for $\lambda$ by including the $\ln N/\lambda$ term yields the stated guarantee.
\end{proof}

\section{Conclusion and Discussion}

In this paper, we combine the framework of low Bellman rank with a linear embedding assumption over the action space to introduce a new class of rich problems which encompasses all settings with finite actions and linear function approximation, and enables new ones such as combinatorial action spaces. We show that \alg solves this class of problems under the usual completeness and realizability assumptions on the value function class. We believe that the identification of this linear embedding structure over actions as the key enabler of \emph{one-step exploration} in the action space has the potential to apply to broader algorithmic approaches beyond \alg.

The most immediate direction for future work is to improve our sample complexity results by sharpening our decoupling results. Understanding if similar results are attainable without completeness is another challenge. Finally, it would be interesting to understand what structures beyond linear embeddability afford sample-efficiency, and study applications of these ideas to continuous control problems.

\section*{Acknowledgements}

We would like to thank Dylan Foster, Jian Qian and Sasha Rakhlin for identifying an error in the original proof of Proposition~\ref{lem:inner}, which we have subsequently addressed.

\bibliographystyle{plainnat}
\bibliography{myrefs}

\appendix

\section{Related Work}
\label{sec:related}

\mypar{Settings for sample-efficient RL.} There is substantial recent literature~\citep{jiang2017contextual,sun2019model,jin2020provably,yang2020reinforcement,du2019provably,du2021bilinear,jin2021bellman,foster2021statistical} on structural conditions that enable sample-efficient RL. Of these, the Bellman rank framework and its recent generalization in the Bilinear Classes model remain the broadest known frameworks for which provably sample-efficient methods are known. While Bellman rank itself makes no assumptions on the complexity of the action space (\citet{jiang2017contextual} show small Bellman rank for LQRs), the algorithm \olive developed for this setting crucially relies on importance sampling over a discrete action set. Ideas from these works have further been developed to a representation learning setting, where the transition dynamics of the MDP are linear in some \emph{unknown features} and the agent learns this map, given a class of candidate features~\citep{agarwal2020flambe,uehara2021finite,modi2021model}, which captures rich non-linear function approximation in the original state and action. \citet{jin2021bellman} developed the Bellman-Eluder dimension notion to better capture infinite action sets using a notion they call $Q$-type Bellman rank, while the original version of~\citet{jiang2017contextual} is termed the $V$-type Bellman rank. Note, however, that the GOLF algorithm of~\citet{jin2021bellman} for problems with a small $Q$-type Bellman rank, which scales to infinite action spaces, cannot be used for feature learning and does not capture all contextual bandit problems (see Section~\ref{sec:Qtype-lb} for a lower bound), showing the limitations of this approach in terms of the non-linearity it permits. For $V$-type Bellman-Eluder dimension, \citet{jin2021bellman} also rely on uniform exploration over actions similar to \olive. The techniques developed here do scale to feature learning, capture contextual bandits fully and apply to infinite action scenarios satisfying the linear embedding assumption. At the same time, our assumptions do not capture all problems with a small $Q$-type Bellman rank, so there are problems which GOLF can handle which we do not. That said, perhaps the most prominent example for GOLF in the infinite action setting is that of linear MDPs, where the linear embedding assumption made here holds.

\mypar{Continuous control.} Large action spaces are the standard framing in continuous control, where the action is typically a vector in $\R^d$ for some control input dimension $d$. While there has been a number of recent results at the intersection of learning and control~\citep[see e.g.][]{dean2020sample,mania2019certainty,agarwal2019online,simchowitz2020naive}, a large body of work typically focuses on highly structured settings such as the Linear Quadratic Regulator (LQR), where online exploration is is straightforward due to the presence of a Gaussian noise in the dynamics. More recent results~\citep{kakade2020information,mania2020active} do combine online control and exploration, they typically focus on model-based settings and still rely on access to good features. We note that~\citet{mhammedi2020learning} carry out feature learning for continuous control, but their setting does not have a small Bellman rank and hence is not admissible in our conditions either.

\mypar{Posterior sampling.} Posterior sampling methods for RL, motivated by Thompson sampling~\cite{thompson1933likelihood}, have been extensively developed and analyzed in terms of their expected regret under a Bayesian prior by many authors~\citep[see e.g.][]{osband2013more,russo2017tutorial,osband2016generalization} and are often popular as they offer a simple implementation heuristic through approximation by ensembles trained on random subsets of data. Worst-case analysis of TS in RL settings has also been done for both tabular~\citep{russo2019worst,agrawal2017posterior} and linear~\citep{zanette2020frequentist} settings. Our work is most closely related to the recent Feel-Good Thompson Sampling strategy proposed and analyzed in~\citep{Zhang2021FeelGoodTS}, primarily for bandits but also for RL problems with deterministic dynamics. They study problems with a similiar linear embeddability assumption, but the absence of any further structure like a small Bellman rank precludes their techniques for application to general stochastic dynamics. We also observe that FGTS retains the significant optimistic component of other exploration techniques like LSVI-UCB~\citep{jin2020provably} and \olive~\citep{jiang2017contextual}, which partly explains its success in worst-case settings. On the other hand, the approach has the remarkable property of working for both linear bandits and non-linear bandits with finite action sets with an identical algorithm and analysis, and we extend that benefit to RL in this work.

\mypar{Minimax objectives in RL.} FGTS algorithm uses a likelihood term for Thompson Sampling based on the TD error, which is well-known to have a bias in estimating the Bellman error~\citep{antos2008learning,sutton1998reinforcement} for stochastic dynamics. The usual technique of removing the residual variance from~\citet{antos2008learning} creates a minimax objective, which we also use in this paper. Other minimax formulations~\citep{dai2018sbeed} to remove this bias are also possible in general, but we find that the one from~\citet{antos2008learning} is the most natural under our structural assumptions. We also note that approach of keeping two timescales~\citep{borkar2009stochastic} used here has been used previously in offline RL for TD learning methods~\citep{sutton2009fast}, but its online analysis in TS appears novel, and different from the analysis in \cite{DMZZ2021-neurips}, which cannot handle general nonlinear feature learning considered here. 

\section{Lower bound for $Q$-type Bellman rank}
\label{sec:Qtype-lb}

We now instantiate a contextual bandit problem where the realizability assumption holds, but the $Q$-type Bellman rank grows linearly in the number of contexts. The construction is due to \citet{dann2018personal}, but has not appeared in the literature. Note that the $V$-type Bellman rank of~\citet{jiang2017contextual}, which we further generalize in this work, is always $1$ for a contextual bandit problem.
The lower bound is demonstrated using a typical hard instance for contextual bandit problems. Let us consider a context distribution which is uniform on $[N]$, where we have $N$ unique contexts. We have two actions $\{a_1, a_2\}$. We also have $|\cF| = N+1$ with the following structure.
\begin{align*}
    f^\star(x,a_1) &= f_{N+1}(x,a_1) = 0, \quad \mbox{and} f^\star(x,a_2) = f_{N+1}(x,a_2) = 0.5.
\end{align*}
For $i < N+1$, we have $f_i(x,a) = f^\star(x,a)$ when $x \ne i$, and $f_i(x,a_1) = 1$, $f_i(x,a_2) = 0.5$ so that $f_i$ makes incorrect predictions on the context $i$ for action $a_1$. Notice that the design of $\cF$ also ensures that for each context $i$, there is a policy $\pi_\star$ (greedy wrt $f^\star$) which picks the action $a_2$, while another policy $\pi_i$ (greedy wrt $f_i$) which picks action $a_1$. Since this is a problem with horizon $H=1$, the Bellman error is simply equal to $f(x,a) - f^\star(x,a)$ for any $x,a$. Let $\BE^Q(f,\pi) = \rE_{x,a\sim \pi} \BE(f,f,x,a)$ be the $Q$-type Bellman error. Then, we observe that for $1 \leq i,j\leq N$:
\begin{align*}
    \BE^Q(f_i,\pi_j) &= \frac{1}{N}\sum_{i=1}^N\rE_{a\sim \pi_j} \left[f(x,a) - f^\star(x,a)\right] = \frac{\mathbf{1}(j = i)}{N},
\end{align*}
where $\mathbf{1}(\cdot)$ is an indicator function. Hence, we see that the Bellman error matrix contains an identity submatrix of size $N$, so that its rank is at least $N$.

\section{Connection with sparse RL}
\label{sec:sparse-rl}

In this section, we formalize the relationship between representation learning and RL with sparsity. Concretely, let us consider two formulations.

\begin{equation}
    P(x'\mid x,a) = \phi^{lr}(x,a)^\top\mu^{lr}(x'),\quad \mbox{where}\quad (\phi^{lr},\mu^{lr})\in\Omega^{lr},\tag{$P1$}
    \label{eq:replearn}
\end{equation}
is the standard model-based representation learning formulation for RL~\citep{agarwal2020flambe,uehara2021finite}. Here we consider a slightly generalized setup where the feature maps $(\phi,\mu)$ are available as pairs from a joint set $\Omega^{lr}$ instead of separate classes for $\phi$ and $\mu$, which reduces to the more typical framing with separate classes for $\phi$ and $\mu$ when we set $\Omega^{lr}$ to be the cartesian product of their respective sets. However, all existing representation learning algorithms can handle this setup for a general $\Omega^{lr}$ without any extra difficulty. Let us denote $d^{lr} = \text{dim}(\phi^{lr}$, where we use the superscript $lr$ to denote the low-rank problem $P1$. The second formulation is the sparse linear MDP setup of~\citet{hao2021online}.

\begin{equation}
    P(x'\mid x,a) = \sum_{i\in \idxset} \phi^s(x,a)_i^\top\mu^s(x')_i, \quad \mbox{where $|\idxset| = k \ll d^s = \text{dim}(\phi^s)$,} \tag{$P2$} 
    \label{eq:splinear}
\end{equation}
with the feature maps $\phi^s,\mu^s$ considered known in $P2$, but the index set $\idxset$ is unknown. We now show that the two problems are equivalent when $d^{lr}\cdot|\Omega^{lr}| = d^s$ and $k = d^{lr}$, in that any solution to $P1$ can solve $P2$ at no additional sample cost, and vice versa. 

In one direction, We define the concatenated feature map for all $x,a$ and $x'$:
\begin{align*}
    \phi^{all}(x,a) &= (\phi_1(x,a),\phi_2(x,a),\ldots,\phi_N(x,a)), \quad \mbox{where $N = |\Omega^{lr}|$, and}\\\mu^{all}(x') &= (\mu_1(x'),\mu_2(x'),\ldots,\mu_N(x')),
\end{align*}
where $\phi_i,\mu_i$ refer to the $i_{th}$ feature maps in $\Omega^{lr}$. Clearly, the assumption $\phi^{lr},\mu^{lr}\in\Omega$ guarantees that $P(x'\mid x,a) = \phi^{all}(x,a)^\top \mu^{all}(x')$ with $\text{dim}(\phi^{all}) = d^{lr}|\Omega^{lr}| = d^s$, by construction. However, the transitions are also sparse in the features $(\phi^{all},\mu^{all})$, in that we can choose the $d^{lr}$ coordinates of $\phi^{all}, \mu^{all}$ which correspond to the index of $(\phi^{lr},\mu^{lr})$ in $\Omega^{lr}$. Consequently, any solution to $P2$ for $k = d^{lr}$ which uses samples that are $\order(\text{poly}(k\log d^s)$ can be used to solve $P1$ with a sample complexity of $\order(\text{poly}(d^{lr}\log (d^{lr}|\Omega^{lr}|))$, which is considered the standard goal of the representation learning problem $P1$.

In the other direction, let us say we have a solution to $P1$ with a sample complexity that is $\order(\text{poly}(d^{lr}\log (|\Omega^{lr}|))$. Then we given a pair of high-dimensional feature maps $\phi^s,\mu^s$, we define a class $\Omega^{lr}$ as follows:
\begin{align*}
    \Omega^{lr} = \{(\phi,\mu)~:~\phi(x,a) = \phi^s(x,a)_{\otheridx}, \mu(x') = \mu^s(x')_{\otheridx}, \quad \mbox{where $\otheridx \subseteq \{1,\ldots,d^s\}$ with $|\otheridx| = k$}\}.
\end{align*}

In words, we add features corresponding to every subset of size $k$ from the original $d^s$ features as a candidate representation to $\Omega^{lr}$. Then $|\Omega^{lr}| = {d^s\choose k} = \order((d^s)^k)$ and each feature map in $\Omega^{lr}$ has $d^{lr} = k$. Clearly the sparse linear MDP assumption in the definition of $P2$~\eqref{eq:splinear} implies that $\Omega^{lr}$ contains a pair $(\phi,\mu)$ under which the MDP is linear. Furthermore, our method for representation learning applied to the sparse linear MDP has a sample complexity of $\order(\text{poly}(k\log d^s))$.

The above reduction show that any obstacle to sparse linear MDP learning also results in a lower bound for representation learning. In particular, the construction of~\citet{hao2021online} precludes learning sparse linear MDPs where the action set has a cardinality $\order(d^s)$, so that we cannot expect to carry out representation learning in arbitrarily large action spaces without further assumptions.

\section{Examples satisfying Assumption~\ref{assumption:embedding}}
\label{sec:examples}

\begin{example}[Basis expansions in action space]
    For continuous action problems, \citet{asadi2021deep} study the class of deep radial basis value functions, where any $f\in\cF$ takes the form
    \begin{equation*}
        f(x,a;\theta) = \frac{\sum_{i=1}^N \exp(-\beta\|a - a_i(x;\theta)\|)v_i(x;\theta)}{\sum_{i=1}^N \exp(-\beta\|a - a_i(x;\theta)\|)},
    \end{equation*}
    where $a_i(x;\theta)$ are a (state and $\theta$-dependent) basis, while $v_i(x;\theta)$ are some reference values at these points. If we consider the case where the $a_i(x;\theta)$ are only dependent on the state $x$, but fixed across $\theta$, then this definition satisfies Assumption~\ref{assumption:embedding} with $w(f,x) = v_i(x;\theta)$ and $\phi(x,a)_i = \nicefrac{\exp(-\beta\|a - a_i(x;\theta)\|)}{\sum_{i=1}^N \exp(-\beta\|a - a_i(x;\theta)\|)}$. More generally, whenever there is a basis set of actions such that $f(x,a) = \sum_{i=1}^N \alpha_i(x;a) g_f(x,a_i)$ for some functions $\alpha_i$ and $g_f$, then the assumption holds. Such a basis can be obtained, for instance, by standard finite element approximation techniques such as triangulation in an appropriate metric or other finite element techniques in the action space~\citep{zienkiewicz2005finite}, as long as $f$ is sufficiently smooth in $a$.
    \label{ex:basis}
\end{example}

\begin{example}[Block MDP with latent state, action features]
    Several authors study the Block MDP model~\citep{du2019provably,misra2019kinematic}, which is a special case of MDPs with a small Bellman rank. In a block MDP, each observation $x\in\cX$ is emitted a \emph{unique} latent state $z\in\cZ$, so that there is an unknown one-to-one mapping $g^\star~:~\cX\to\cZ$ which maps an observed state to the latent state which generated it. Let $\psi(x^{h},a^h) \in \R^{|\cZ|\times|\cA|}$ be an indicator vector $e_{(g^\star(x^h),a^h)}$ with a one corresponding to the entry $(g^\star(x^h), a^h)$ and $0$ everywhere else. Then the MDP dynamics are linear in these features~\citep[see e.g.][]{zhang2022efficient}, and \olive solves this model using $\text{poly}(|\cZ|,|\cA|)$ samples with no explicit dependence on $|\cX|$ when the actions are finite, with computationally efficient approaches provided in later works~\citep{du2019provably,misra2019kinematic}. In the setup of this work, let us consider a more expressive model where $|\cA|$ can be infinite, but we are given a feature map $\zeta(z,a)\in\R^d$ such that $P(z^{h+1}|z^h,a^h)$ and the reward function are both linear in $\phi$, that is:
    \[
        r(x,a) = w_r^\top\zeta(g^\star(x),a),~\mbox{and}~ P(z^{h+1}=z_i|z^h,a^h) = w_P(i)^\top\zeta(z^h,a^h),
    \]
    for some $w_r,w_P(1),\ldots w_P(|\cZ|)\in\R^d$. Then it is easily shown that for any function $v~:~\cX\to\R$, we have $\rE[r(x^h,a^h) + v(x^{h+1}) \mid x^h,a^h] = w_v^\top\zeta(x^h,a^h)$ for some $w_v\in\R^d$. Consequently, Assumption~\ref{assumption:embedding} is satisfied when $f(x^h,a^h)$ is linearly embeddable in some features $\zeta'(x^h,a^h)$ and we set $\phi(x^h,a^h) = (\zeta'(x^h,a^h),\zeta(x^h,a^h))$ to be the concatenation of the two feature sets. Note that we require $\cF$ to satisfy completeness and realizability assumptions, so a natural choice is to pick $\zeta' = \zeta$ when $\zeta$ is known. Then our function class takes the form $\{f(x^h,a^h) = w(f,x^h)^\top \zeta(x^h,a^h)\}$ for some choice of the mapping $w$. For instance, we may pick $w(f,x) = w_f^TM(x)$, where $w_f\in \R^{|\cZ|}$ is a vector and $M(x)\in R^{|\cZ|\times d}$ is a matrix which encodes $g^\star(x)$ as a non-zero row in the matrix. As before, this generalizes the finite action scenario where $\zeta$ can be chosen to be indicator features of actions. More generally, this corresponds to problems where we have a reasonable embedding for actions in each latent state. This might happen, for instance, in control problems where given the agent's physical state, reasonable control policies can be designed by using action embeddings that are either random, or learned apriori~\citep[see e.g.][]{rajeswaran2017towards,kakade2020information}, but the physical position might not be available when the observation is based on vision or other sensory measurements.
\end{example}

\section{Proof of Proposition~\ref{prop:spectral}}
\label{sec:spectral}

We start by reviewing the bound on $\br(\epsilon)$ for the finite dimensional case. In this case, let $\{\lambda_i\}_{i=1}^d$ be the eigenvalues of $\Sigma^h(p,x^1)$ in decreasing order, for some distribution $p\in\Delta(\cF)$ and $x^1\in\cX$, where we recall that $\lambda_d \geq 0$ since $\Sigma^h(p,x^1)$ is a covariance matrix. Then we have
\begin{align*}
    \trace((\Sigma^h(p,x^1) + \lambda I)^{-1}\Sigma^h(p,x^1)) &= \sum_{i=1}^d \frac{\lambda_i}{\lambda + \lambda_i} \leq \sum_{i=1}^d 1 \leq d.
\end{align*}

\mypar{Proof for geometric decay case.} This follows effectively by reducing to the finite dimensional case. For any positive integer $n$, we have

\begin{align*}
    \trace((\Sigma^h(p,x^1) + \lambda I)^{-1}\Sigma^h(p,x^1)) &\leq n + \frac{\alpha^n}{\lambda (1-\alpha)}.
\end{align*}

Choosing $n = n_0$ such that $\frac{\alpha^{n_0}}{(1-\alpha)} = \epsilon^2/2$, we see that 

\begin{align*}
    \lambda K^h(\lambda) &\leq n\lambda + \frac{\epsilon^2}{2},
\end{align*}
so that we can set $\lambda = \epsilon^2/2n_0$. With these settings, we have 
\[
K^h(\lambda) \leq n_0 + \frac{2n_0\alpha^{n_0}}{\epsilon^2 (1-\alpha)} = 2n_0 = 2\frac{\log\tfrac{2}{\epsilon^2(1-\alpha)}}{\log\tfrac{1}{\alpha}},
\]
where the last equality follows from our setting of $n_0$.

\mypar{Proof for polynomial decay case.} In this case, the assumption on trace guarantees that 

\[  
    \sum_i \lambda_i^q \leq R_q.
\]

Now since $q \in (0,1)$, we have

\begin{align*}
    \trace((\Sigma^h(p,x^1) + \lambda I)^{-1}\Sigma^h(p,x^1)) &\leq \sum_i \frac{\lambda_i}{\lambda + \lambda_i}\\
    &\leq \sum_i \left(\frac{\lambda_i}{\lambda+\lambda_i}\right)^q \tag{$x^q \geq x$ for $x < 1$ and $q\in(0,1)$}\\
    &\leq \sum_i \frac{\lambda_i^q}{\lambda^q} \leq \frac{R_q^q}{\lambda^q}.
\end{align*}

With this, we have $\lambda K^h(\lambda) \leq \lambda^{1-q} R_q^q$, so that we choose 
\[
    \lambda_0 = \left(\epsilon^2R_q^{-q}\right)^{1/(1-q)}.
\]
With this choice, we get 
\begin{align*}
    K^h(\lambda_0) &\leq \frac{R_q^q}{\lambda_0^q} = \left(\epsilon^{-2}R_q\right)^{q/(1-q)}.
\end{align*}

\section{Proof of Corollary~\ref{cor:poly}}
\label{sec:cor}

\begin{proof}[Proof of Corollary~\ref{cor:poly}] 
    The calculations for setting the parameters in this case are a little more tedious. We work under the assumption $\mu > 1$, which is subsequently satisfied. Then optimizing the leading order terms under this assumption yields $\epsilon_1 = \left(\frac{d_q}{\mu B}\right)^{(1-q)/(1+q)}$ and $\epsilon_2 = \left(\frac{\lambda H\mu d}{\eta B}\right)^{(1-q)/(1+q)}$. Now the optimal setting of $\mu = (\eta/(\lambda H))^{(1-q)/(3-q)}$ under our assumption of $\mu \geq 1$. We now set $\eta$ as in Theorem~\ref{thm:main} and optimize to get $\lambda = \order\left(T^{-(3-q)(1+q)/4}
    (\ln N)^{(1-q^2+2q)/2}c_q^{-(3-q)(1+q)/4}\right)$, where $c_q = d_q^{(1-q)/(1+q)} H^{4/((3-q)(1+q))} B^{2q/(1+q)}$. Substituting this back into Theorem~\ref{thm:main} completes the proof.
\end{proof}

\section{Proofs of decoupling results}
\label{sec:decoupling}

In this section, we prove Lemma~\ref{lem:decouple}. This will be done across two smaller lemmas which provide one level of decoupling using the Bellman rank and another using the linear embedding. The first lemma's proof technique resembles the recently used "one-step back" inequalities in low-rank MDP literature~\citep{agarwal2020flambe,uehara2021finite}. Throughout this section, we use $u^\top v$ to denote $\langle u,v\rangle$ even for infinite dimensional vectors with a slight abuse of notation to improve readability.

\begin{lemma}[Bellman Rank Decoupling]
Consider  any distribution $p$ over $\cF$. The following inequality holds for any $\epsilon, \mu_1 > 0$.
\begin{align*}
&\rE_{f \sim p}  \rE_{(x^h,a^h) \sim \pi_f|x^1} 
|\BE^h(f,f;x^h,a^h)|\\
\leq& \sqrt{\br^{h-1}(\epsilon) \rE_{f' \sim p}\rE_{x^h \sim \pi_{f'}|x^1} [\rE_{f \sim p} 
\BE^h(f,f;x^h,\pi_{f}(x^h))^2]}
 + \epsilon B_1 \\
  \leq&
\mu_1        \rE_{f' \sim p}\rE_{x^h \sim \pi_{f'}|x^1} \rE_{f \sim p} 
[\BE^h(f,f;x^h,\pi_{f}(x^h))^2] +       \mu_1^{-1} \br^{h-1}(\epsilon)   
 + \epsilon B_1.
\end{align*}
\label{lem:decouple-br}
\end{lemma}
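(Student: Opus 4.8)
\textbf{Proof plan for Lemma~\ref{lem:decouple-br}.}

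The plan is to leverage Assumption~\ref{ass:bellman}, which factorizes the expected Bellman error as $\rE_{x^h \sim \pi_{f'}|x^1} \BE^h(f,f;x^h,\pi_f(x^h)) = \langle \psi^{h-1}(f',x^1), u^h(f,x^1)\rangle$, together with a regularized covariance argument in the $\psi$-space. First I would fix $x^1$ (everything is conditioned on it) and abbreviate $\psi(f') = \psi^{h-1}(f',x^1)$ and $u(f) = u^h(f,x^1)$. The quantity to bound is $\rE_{f\sim p}\langle \psi(f), u(f)\rangle$ — note the diagonal coupling, $f$ appears in both factors — which I will rewrite using $\rE_{f\sim p}\psi(f) = \rE_{f'\sim p}\psi(f')$ only after a Cauchy--Schwarz step. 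Concretely, introduce the regularizer $\Sigma := \Sigma^{h-1}(p,x^1) = \rE_{f'\sim p}\psi(f')\otimes\psi(f')$ and write, for any $\lambda > 0$,
\begin{align*}
\rE_{f\sim p}\langle \psi(f), u(f)\rangle
&= \rE_{f\sim p}\langle (\Sigma+\lambda I)^{-1/2}\psi(f),\ (\Sigma+\lambda I)^{1/2}u(f)\rangle\\
&\leq \sqrt{\rE_{f\sim p}\|\psi(f)\|_{(\Sigma+\lambda I)^{-1}}^2}\ \cdot\ \sqrt{\rE_{f\sim p}\|u(f)\|_{\Sigma+\lambda I}^2}
\end{align*}
by Cauchy--Schwarz (first over $f$, then over the inner product). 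The first factor is exactly $\sqrt{\trace((\Sigma+\lambda I)^{-1}\Sigma)} \leq \sqrt{K^{h-1}(\lambda)}$ by definition of $\Sigma$ and Definition~\ref{def:br}. The second factor is $\sqrt{\rE_{f\sim p}\langle u(f), \Sigma u(f)\rangle + \lambda \rE_{f\sim p}\|u(f)\|^2}$; the first term equals $\rE_{f\sim p}\rE_{f'\sim p}\langle \psi(f'), u(f)\rangle^2 = \rE_{f'\sim p}\rE_{f\sim p}(\rE_{x^h\sim\pi_{f'}|x^1}\BE^h(f,f;x^h,\pi_f(x^h)))^2$ by re-applying Assumption~\ref{ass:bellman}, and then Jensen's inequality pulls the square inside the $x^h$-expectation to give the bound $\rE_{f'\sim p}\rE_{x^h\sim\pi_{f'}|x^1}\rE_{f\sim p}\BE^h(f,f;x^h,\pi_f(x^h))^2$; the second term is at most $\lambda B_1^2$ by the norm bound in Assumption~\ref{ass:bellman}. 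Now pick $\lambda$ achieving the infimum in the definition of $\br^{h-1}(\epsilon)$, so that $K^{h-1}(\lambda)\leq \br^{h-1}(\epsilon)$ and $\lambda K^{h-1}(\lambda)\leq\epsilon^2$, hence $\lambda B_1^2 \leq (\epsilon^2/K^{h-1}(\lambda)) B_1^2$; combined with $\sqrt{a+b}\leq\sqrt a+\sqrt b$ and $\sqrt{K^{h-1}(\lambda)}\cdot\sqrt{\lambda B_1^2} = B_1\sqrt{\lambda K^{h-1}(\lambda)}\leq \epsilon B_1$, this yields the first displayed inequality. The second inequality is then just the AM--GM / Young's inequality $\sqrt{ab}\leq \mu_1 a + \mu_1^{-1}b/4$ (or the cruder $\sqrt{ab}\le\mu_1 a+\mu_1^{-1}b$) applied to split the square root, matching the stated form.

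The step I expect to be the main obstacle is getting the variance term handled cleanly across the two applications of Assumption~\ref{ass:bellman}: the first use expands the diagonal coupling $\langle\psi(f),u(f)\rangle$ and the second re-expresses $\langle u(f),\Sigma u(f)\rangle = \rE_{f'}\langle\psi(f'),u(f)\rangle^2$ back in Bellman-error language, and one must be careful that Jensen is applied in the correct direction (moving the square \emph{inside} $\rE_{x^h\sim\pi_{f'}}$, which only increases the quantity) so that the final bound is in terms of $\rE\,\BE^2$ and not $(\rE\,\BE)^2$. A secondary subtlety is the bookkeeping with the additive $\lambda I$ regularization: one needs the two properties $K^{h-1}(\lambda)\leq\br^{h-1}(\epsilon)$ and $\lambda K^{h-1}(\lambda)\leq\epsilon^2$ simultaneously at the chosen $\lambda$, which is exactly what the infimum in Definition~\ref{def:br} guarantees, so the argument goes through for that choice. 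Everything else — Cauchy--Schwarz, Jensen, Young — is routine.
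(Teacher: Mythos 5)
Your proposal is correct and follows essentially the same route as the paper's proof: factorize via Assumption~\ref{ass:bellman}, apply Cauchy--Schwarz with the regularized covariance $(\Sigma^{h-1}+\lambda I)^{\pm 1/2}$, bound the trace term by $K^{h-1}(\lambda)$, split off the $\lambda\|u\|^2$ piece with $\sqrt{a+b}\le\sqrt a+\sqrt b$, re-apply the factorization plus Jensen to land on $\rE\,\BE^2$, and choose $\lambda$ per Definition~\ref{def:br} so the residual is $\epsilon B_1$. The only cosmetic difference is that the paper phrases the Cauchy--Schwarz step as $\rE[u^\top v]\le\sqrt{\rE\|u\|_M^2\,\rE\|v\|_{M^{-1}}^2}$ rather than inserting $(\Sigma+\lambda I)^{\pm1/2}$ explicitly, which is the same computation.
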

\begin{proof}
We have
\begin{align*}
  & \rE_{f \sim p} \rE_{(x^h,a^h) \sim \pi_f|x^1} 
| \BE^h(f,f;x^h,a^h)| \\
=& \rE_{f \sim p} \rE_{x^{h} \sim \pi_f|x^1} 
  \rE_{x^{h}|x^{h-1},a^{h-1}} 
|\BE^h(f;x^h,\pi_f(x^h))|\\
=& \rE_{f \sim p} |u^h(f,x^1)^\top 
  \psi^{h-1}(f,x^1)| \tag{Assumption~\ref{ass:bellman}}\\
\leq& \sqrt{K^{h-1}(\lambda) \rE_{f \sim p} u^h(f,x^1)^\top (\Sigma^{h-1}(p,x^1)+ \lambda I)u^h(f,x^1)} \tag{$\rE[u^Tv] \leq \sqrt{\rE[\|u\|_M^2] \; \rE[ \|v\|_{M^{-1}}^2]}$ for psd $M$ by Cauchy-Schwarz and Definition~\ref{def:br}}\\
\leq& \sqrt{K^{h-1}(\lambda) \rE_{f \sim p} u^h(f,x^1)^\top (\rE_{f' \sim p} \psi^{h-1}(f',x^{1})\psi^{h-1}(f',x^{1})^\top )u^h(f,x^1)}\\
&+ \sqrt{\lambda K^{h-1}(\lambda) \rE_{f \sim p} \|u^h(f,x^1)\|_2^2} \tag{$\sqrt{a+b} \leq \sqrt{a} + \sqrt{b}$ for $a,b \geq 0$} \\
=& \sqrt{K^{h-1}(\lambda)  \rE_{f \sim p}\rE_{f' \sim p} (u^h(f,x^{1})^\top \psi^{h-1}(f',x^{1}))^2} 
 + \sqrt{\lambda K^{h-1}(\lambda,p) \rE_{f \sim p} \|u^h(f,x^1)\|_2^2} \\
 \stackrel{(a)}{=}& \sqrt{K^{h-1}(\lambda)  \rE_{f \sim p}\rE_{f' \sim p} (\rE_{(x^{h}) \sim \pi_{f'}|x^1} \BE^h(f,f;x^h,\pi_f(x^h)) )^2} 
 + \sqrt{\lambda K^{h-1}(\lambda,p) \rE_{f \sim p} \|u^h(f,x^1)\|_2^2} \\
\stackrel{(b)}{\leq}&
\sqrt{K^{h-1}(\lambda) \rE_{f \sim p} \rE_{f' \sim p}\rE_{(x^{h}) \sim \pi_{f'}|x^1}
\left( \BE^h(f,f;x^h,\pi_f(x^h)))^2 \right)} 
+ \sqrt{\lambda K^{h-1}(\lambda) \rE_{f \sim p} \|u^h(x^1,f)\|_2^2} .
\end{align*}
Here $(a)$ holds by using Assumption~\ref{ass:bellman} once more to rewrite the inner product as a Bellman error, while $(b)$ is a consequence of Jensen's inequality. Now we recall that 

\[
   \|u^h(f,x^1)\|_2 \leq B_1 .
\]
By taking the largest $\lambda$ so that $\lambda \sup_p K^{h-1}(\lambda,p) \leq
\epsilon^2$, we obtain the desired bound. 
\end{proof}

The next decoupling lemma separates the sampling of the action $a^h$ from the function $f$ whose Bellman error is being evaluated by using Assumption~\ref{ass:complete}, which is crucial for the analysis as explained in Section~\ref{sec:proof-main}. Note that in this particular derivation, we reduce squared Bellman error to the square root of a decoupled squared Bellman error, which loses rate. It may be possible to improve this reduction by a more careful analysis in future work. 
\begin{lemma}[Linear Embedding Decoupling]
We have for all $x^h \in \cX^h$ and probability measures $p$ on $\cF$ and $\mu_2,\epsilon_2 > 0$:
\begin{align*}
 \rE_{f \sim p} [
\BE^h(f,f;x^h,\pi_{f}(x^h))^2]
\leq &2\,\sqrt{\dc^h(\epsilon) \rE_{f \sim p,f'\sim p}\rE_{a^h\sim\pi_{f'}}\left[ \BE^h(f,f;x^h,a^h)^{2}\right]} + 2 \epsilon B_2\\
\leq & 2\mu_2\rE_{f \sim p,f'\sim p}\rE_{a^h\sim\pi_{f'}} \BE^h(f,f;x^h,a^h)^{2} +2\mu_2^{-1} \dc^h(\epsilon)     + 2\epsilon B_2.
\end{align*}
\label{lem:decouple-dc}
\end{lemma}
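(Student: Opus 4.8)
The plan is to follow exactly the template of the Bellman-rank decoupling in Lemma~\ref{lem:decouple-br}: rewrite the Bellman error as an inner product via Assumption~\ref{assumption:embedding}, apply Cauchy--Schwarz once in feature space against the regularized embedding covariance $\tilde{\Sigma}^h(p,x^h)+\lambda I$ and once over the random draw $f\sim p$, and then recognize the trace that comes out as $\tilde{K}^h(\lambda)$ via Definition~\ref{def:dc}. The single new ingredient is that the left-hand side here is a \emph{squared} Bellman error rather than a linear quantity, so the very first move is to linearize it: since rewards and values lie in $[0,1]$ we have $|\BE^h(f,f;x^h,a)|\le 2$ for all $f,a$, hence $\BE^h(f,f;x^h,\pi_f(x^h))^2\le 2\,|\BE^h(f,f;x^h,\pi_f(x^h))|$ and it suffices to bound $\rE_{f\sim p}|\BE^h(f,f;x^h,\pi_f(x^h))|$.

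Fix $x^h$ and abbreviate $w_f:=w^h(f,x^h)$ (so $\|w_f\|\le B_2$) and $\phi_f:=\phi^h(x^h,\pi_f(x^h))$, so that $\BE^h(f,f;x^h,\pi_f(x^h))=\langle w_f,\phi_f\rangle$ and $\rE_{f\sim p}[\phi_f\otimes\phi_f]=\tilde{\Sigma}^h(p,x^h)$. Put $M:=\tilde{\Sigma}^h(p,x^h)+\lambda I\succ 0$. Inserting $M^{1/2}M^{-1/2}$ and applying Cauchy--Schwarz first in feature space and then over $f\sim p$,
\[
\rE_{f\sim p}|\langle w_f,\phi_f\rangle|
\le \rE_{f\sim p}\big[\|w_f\|_M\,\|\phi_f\|_{M^{-1}}\big]
\le \sqrt{\rE_{f\sim p}\|w_f\|_M^2}\;\sqrt{\rE_{f\sim p}\|\phi_f\|_{M^{-1}}^2}.
\]
For the first factor, $\|w_f\|_M^2=w_f^\top\tilde{\Sigma}^h(p,x^h)w_f+\lambda\|w_f\|^2$, and expanding $\tilde{\Sigma}^h(p,x^h)$ with an \emph{independent} copy $f'\sim p$ supplying the action gives $\rE_{f\sim p}[w_f^\top\tilde{\Sigma}^h(p,x^h)w_f]=\rE_{f\sim p,f'\sim p}\rE_{a\sim\pi_{f'}}\BE^h(f,f;x^h,a)^2=:A$, so this factor is at most $A+\lambda B_2^2$. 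For the second factor, $\rE_{f\sim p}\|\phi_f\|_{M^{-1}}^2=\trace\big(M^{-1}\rE_{f\sim p}[\phi_f\otimes\phi_f]\big)=\trace\big(M^{-1}\tilde{\Sigma}^h(p,x^h)\big)\le \tilde{K}^h(\lambda)$ by Definition~\ref{def:dc}.

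Combining these, for every $\lambda>0$,
\[
\rE_{f\sim p}\,\BE^h(f,f;x^h,\pi_f(x^h))^2\le 2\sqrt{(A+\lambda B_2^2)\,\tilde{K}^h(\lambda)}\le 2\sqrt{A\,\tilde{K}^h(\lambda)}+2B_2\sqrt{\lambda\,\tilde{K}^h(\lambda)}.
\]
Restricting to $\lambda$ in the feasible set $\{\lambda>0:\lambda\tilde{K}^h(\lambda)\le\epsilon^2\}$, the last term is at most $2B_2\epsilon$, and taking the infimum over that set while using $\dc^h(\epsilon)=\inf\{\tilde{K}^h(\lambda):\lambda\tilde{K}^h(\lambda)\le\epsilon^2\}$ yields the first claimed inequality $2\sqrt{\dc^h(\epsilon)\,A}+2\epsilon B_2$; the second follows from AM--GM, $2\sqrt{\dc^h(\epsilon)\,A}\le\mu_2 A+\mu_2^{-1}\dc^h(\epsilon)\le 2\mu_2 A+2\mu_2^{-1}\dc^h(\epsilon)$.

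I do not expect a genuine technical obstacle; the step that matters is the linearization $\BE^h(\cdot)^2\le 2|\BE^h(\cdot)|$, which is lossy by design. It is precisely this reduction that forces a square root around the decoupled squared error $A$, and — as the paper notes — is the source of the extra exponent in $\epsilon$ relative to \olive, whose importance weighting over a uniform action distribution avoids exactly this step. The only routine checks are that the Hilbert-space manipulations $M^{1/2}$, $M^{-1}$ and the trace are legitimate, which holds since $M\succeq\lambda I$ is boundedly invertible and $\tilde{\Sigma}^h(p,x^h)$ is trace class whenever $\dc(\epsilon)$ is finite.
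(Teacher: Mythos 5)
Your proof is correct and follows essentially the same route as the paper's: linearize via $\BE^2\le 2|\BE|$, apply Cauchy--Schwarz against $\tilde{\Sigma}^h(p,x^h)+\lambda I$ in feature space and over $f\sim p$, identify the trace with $\tilde{K}^h(\lambda)$, split off the $\lambda\|w\|^2$ term with $\sqrt{a+b}\le\sqrt{a}+\sqrt{b}$, and optimize $\lambda$ over the feasible set defining $\dc^h(\epsilon)$. The only (immaterial) difference is presentational ordering; your final AM--GM step even gives a slightly tighter constant than the stated bound.
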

\begin{proof}
We have
\begin{align*}
&\rE_{f\sim p} \; |\BE^h(f,f;x^h,\pi_{f}(x^h))| = \rE_{f\sim p} \; |w^h(f,x^h)^\top \phi^h(x^h,\pi_{f}(x^h))| \tag{Assumption~\ref{assumption:embedding}}\\
&\leq \left[\rE_{f \sim p} w^h(f,x^h)^\top (\tilde{\Sigma}^h+ \tilde{\lambda} I) w^h(f,x^h)\right]^{1/2}
\left[\rE_{f \sim p} \phi^h(x^h,\pi_{f}(x^h))^\top 
(\tilde{\Sigma}^h+ \tilde{\lambda} I)^{-1}
\phi^h(x^h,\pi_{f}(x^h))\right]^{1/2} \tag{Cauchy-Schwarz} ,
\end{align*}
where $\tilde{\Sigma}^h$ 
is short for $\tilde{\Sigma}^h(p,x^h)$.
 Therefore we have
 \begin{align*}
 &
 \rE_{f \sim p} [\BE^h(f,f;x^h,\pi_{f}(x^h))]^2 \\
 \leq& 2\,\rE_{f \sim p} [|\BE^h(f,f;x^h,\pi_{f}(x^h))|] \tag{$|\BE^h(f,f;x^h,\pi_{f}(x^h)) \leq 2|$ since $r^h, f(x,a)\in[0,1]$}\\
\leq&  
2\,\sqrt{\left[ \rE_{f \sim p} (w^h(f,x^h)^\top (\tilde{\Sigma}^h+ \tilde{\lambda} I) w^h(f,x^h))\right]
\left[ \rE_{f \sim p} (\phi^h(x^h,\pi_{f}(x^h))^\top 
(\tilde{\Sigma}^h+ \tilde{\lambda} I)^{-1}
\phi^h(x^h,\pi_{f}(x^h)))\right]} \\
\stackrel{(a)}{\leq} & 
2\,\sqrt{\left[ \rE_{f \sim p,f'\sim p} \rE_{a^h\sim\pi_{f'}} (w^h(f,x^h)^\top (\phi^h(x^h,a^h)\phi^h(x^h,a^h)^\top+ \tilde{\lambda} I) w^h(f,x^h))\right]
\tilde{K}^h(\tilde{\lambda})} \\
\stackrel{(b)}{=} & 
2\,\sqrt{\left[\rE_{f \sim p,f'\sim p} \rE_{a^h\sim\pi_{f'}} (\BE^h(f,f;x^h,a^h)^2+ \tilde{\lambda}  \|w^h(f,x^h)\|_2^2)\right]
\tilde{K}^h(\tilde{\lambda})} \\
\leq &
 2\,\sqrt{\left[ \rE_{f \sim p,f'\sim p} \rE_{a^h\sim\pi_{f'}} (
\BE^h(f,f;x^h,a^h)^2 )\right] \tilde{K}_{q_2}^h(\tilde{\lambda})}
       + 2\sqrt{\tilde{\lambda}  \tilde{K}^h(\tilde{\lambda})
      \rE_{f \sim p} \|w^h(f,x^h)\|_2^2} \tag{$\sqrt{a+b} \leq \sqrt{a} + \sqrt{b}$} .
 \end{align*}
 Here $(a)$ holds due to the definition of $\tilde{K}^h$ (Definition~\ref{def:dc}).
 Note that
 \[
 \|w^h(f,x^h)\|_2 \leq B_2 .
 \]
 By taking the largest $\tilde{\lambda}$ so that $\tilde{\lambda} \sup_p \tilde{K}^{h-1}(\tilde{\lambda},p) \leq
\epsilon^2$, we obtain the first desired bound. The second bound follows by Cauchy-Schwarz inequality.
\end{proof}

\mypar{Proof of Proposition~\ref{lem:decouple}.} Using the two lemmas above, it is easy to establish Proposition~\ref{lem:decouple}.

\begin{proof}[Proof of Proposition~\ref{lem:decouple}]
Since $x_t^1$ is randomly drawn from $\cD$, it is independent of $f_t$. Therefore we have
  \begin{align*}
   &\lambda \rE \; \regret(f_t,x_t^1)+\lambda \rE \rE_{f|S_{t-1}} \;\Delta f^1(x_t^1)
   = \lambda \rE \; \regret(f_t,x_t^1)+\lambda \rE \;\Delta f_t^1(x_t^1)
   \\
    =& \lambda \sum_{h=1}^H \rE \; \rE_{(x,a)\sim \pi_{f_t}|x_t^h}
     \BE^{h}(f_t,f_t,x^{h},a^h)
     = \lambda H\rE \; 
     \BE^{h_t}(f_t,f_t,x_t^{h_t},a_t^{h_t})\\
    \leq&   \lambda\left[ \epsilon_1 H B_1 +  \mu_1^{-1} \br(\epsilon_1) +
          H \mu_1 \rE \;  \rE_{f|S_{t-1}} \BE^{h_t}(f,f,x_t^{h_t},\pi_{f}(x_t^{h_t}))^2\right]\\
    \leq & \lambda\left[2\mu_1 \mu_2 H \rE\;
 \rE_{f|S_{t-1}} \BE^{h_t}(f,f;x_t^{h_t},a_t^{h_t})^{2 } + 
2\mu_1 \mu_2^{-1} \dc(\epsilon_2)
           + 2\epsilon_2 \mu_1 H B_2 + \epsilon_1 H B_1 + \mu_1^{-1} \br(\epsilon_1)\right] .
  \end{align*}
  The first equality used the independence of $x_t^1$ and $f_t$. The second equality used
  Proposition~\ref{prop:value-decomposition}.  The third equality used the fact that $h_t$ is uniformly drawn from $[H]$. The first inequality  used Lemma~\ref{lem:decouple-br}. The second inequality used Lemma~\ref{lem:decouple-dc}.
  Note that the last two steps crucially use that the function $f_t$ used to draw $x_t^{h_t}$, $f'_t$ to draw $a_t^{h_t}$ and $f\sim p$ being evaluated are all mutually independent. Taking $\mu_1=\mu$ and $\mu_2= \eta/(4\lambda\mu_1 H)$, we obtain the desired
  result. 
  \end{proof}

\section{Convergence of the online learning in \alg}
\label{sec:online}

In this section, we study the convergence of the online learning updates for the $f$ and $g$ functions. To do so, it is helpful to define some additional notation. 

Define $\sigma=2$, and let
\[
\hat{\epsilon}_t^h(f)= \cT^h f(x_t^h,a_t^h) - [r_t^h + f^{h+1}(x_t^{h+1})]
\]
be the noise in the Bellman residuals. We observe that $|\hat{\epsilon}_t^h(f)| \leq \sigma$ for all $t\in[T]$, $h\in[H]$ and $f\in\cF$ under our normalization assumptions. For convenience, we use $\sigma$ to denote this upper bound on the $\hat{\epsilon}_t^h(f)$ to avoid carrying constants. We also have the following observations about the noise.

For each $h \geq 1$, we also define:
\begin{align}
\hatd_t^h(g,f) &= \hat{\Delta}_t^h(g,f)^2 - \hat{\epsilon}_t^h(f)^2,\nonumber\\
\hatd_{t}^h(f) &= \rE_{g \sim p(g|f,S_{t-1})}\hatd_t^h(g,f),\quad\hspace*{\fill}\quad\mbox{and}\nonumber\\
\hatc_{t}^h(f) &= -\frac1{\gamma} \ln \rE_{g \sim p(g|f,S_{t-1})} \exp(-\gamma\hatd_t^h(g,f)).
\label{eq:delta-defs}
\end{align}
Here $\hatd_t^h(g,f)$ measures how well $g$ captures the Bellman residual of $f$ and $\hatd_t^h(f)$ measures the quality of our posterior distribution over $g$ in doing so. Finally $\hatc_t^h(f)$ is a log-partition function for the posterior. We further define the log-partition function for the posterior over $f$:
\[
Z(S_t) = - \ln \rE_{f \sim {p}_0} \exp\left(\sum_{s=1}^t \lambda \Delta f(x_s^1)-\eta \sum_{s=1}^t  [\hatd_s^{h_s}(f,f) - \hatc_{s}^{h_s}(f)]\right) ,
\]
where
\[
\Delta f(x^1)= f(x^1) - Q_\star^1(x^1) .
\]

We also introduce the following definition
\[
\hatc_t = \hatc_t^{h_t} = -\frac1\eta \ln \rE_{f|S_{t-1}} \exp\left(\lambda \Delta f(x_t^1)-\eta [\hatd_t^{h_t}(f,f) - \hatc_{t}^{h_t}(f)]\right) ,
\]
which is the normalization factor of $p(f|S_t)/p(f|S_{t-1})$.

Let $\psi(z)=(e^z-z-1)/z^2$. It is known that $\psi(z)$ is an increasing function of $z$. We organize the rest of this section as follows. We begin with some basic properties of Bellman residuals, then analyze the convergence of outer and inner updates respectively.

\subsection{Properties of Bellman residuals}

\begin{lemma}
For any $f$ that may depend on $S_{t-1}$:
\[
\rE_{x_t^{h+1},r_t^h|x_t^h, a_t^h,h_t=h} \hat{\epsilon}_t^h(f) = 0,
\]
and for any constant $b_t$ independent of $\hat{\epsilon}_t^h(f)$:
\[
\rE_{x_t^{h+1},r_t^h|x_t^h, a_t^h,h_t=h} \exp(b_t \hat{\epsilon}_t^h(f)) \leq 
\exp ( b_t^2 \sigma^2/2) .
\]
\label{lem:noise}
\end{lemma}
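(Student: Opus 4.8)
\textbf{Proof plan for Lemma~\ref{lem:noise}.}

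The statement is a routine martingale-difference / sub-Gaussianity fact given the normalization conventions set up just above it, so the plan is short. Recall from the definition that $\hat{\epsilon}_t^h(f) = \cT^h f(x_t^h,a_t^h) - [r_t^h + f^{h+1}(x_t^{h+1})]$, and that by definition of the Bellman operator $\cT^h f(x_t^h,a_t^h) = \rE[r_t^h + f(x_t^{h+1},\pi_f(x_t^{h+1})) \mid x_t^h, a_t^h]$ (here $f^{h+1}(x_t^{h+1})$ is shorthand for $f(x_t^{h+1},\pi_f(x_t^{h+1})) = \max_a f(x_t^{h+1},a)$). For the first claim, I would simply take the conditional expectation $\rE_{x_t^{h+1}, r_t^h \mid x_t^h, a_t^h, h_t = h}$ of $\hat{\epsilon}_t^h(f)$: since $f$ is fixed conditionally on $S_{t-1}$ and $(x_t^h,a_t^h)$ are measurable w.r.t.\ the conditioning, the term $\cT^h f(x_t^h,a_t^h)$ is constant under the conditional expectation, and it equals exactly $\rE[r_t^h + f^{h+1}(x_t^{h+1}) \mid x_t^h,a_t^h]$ by the definition of $\cT^h$. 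Hence the conditional mean of $\hat{\epsilon}_t^h(f)$ is zero. The one subtlety to state carefully is that $f$ may depend on $S_{t-1}$ but not on the fresh randomness $(x_t^{h+1}, r_t^h)$ drawn at round $t$; this is what makes the conditioning legitimate.

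For the second claim, the key observation is the boundedness bound noted in the text: $|\hat{\epsilon}_t^h(f)| \le \sigma$ for all $t,h,f$ (this follows from $r^h \in [0,1]$, $\sum_h r^h \in [0,1]$, and $f \in [0,1]$, giving $\sigma = 2$). A centered random variable bounded in an interval of length $2\sigma$ is $\sigma$-sub-Gaussian; more precisely, since $\hat{\epsilon}_t^h(f)$ is conditionally mean-zero and takes values in $[-\sigma,\sigma]$, Hoeffding's lemma gives $\rE[\exp(b_t \hat{\epsilon}_t^h(f)) \mid x_t^h, a_t^h, h_t = h] \le \exp(b_t^2 (2\sigma)^2/8) = \exp(b_t^2 \sigma^2/2)$, which is exactly the claimed inequality. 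The hypothesis that $b_t$ is ``independent of $\hat{\epsilon}_t^h(f)$'' is used so that $b_t$ can be treated as a constant (it is $S_{t-1}$-measurable, or more generally measurable w.r.t.\ the conditioning $\sigma$-algebra) when pulling it through the conditional expectation.

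I do not expect any real obstacle here — the only thing to be careful about is bookkeeping of the conditioning $\sigma$-algebra (what is ``known'' when: $S_{t-1}$, the index $h_t = h$, and the roll-in up to $(x_t^h, a_t^h)$, versus what is fresh: $r_t^h, x_t^{h+1}$), and invoking the standard form of Hoeffding's lemma with the correct constant for a bounded centered variable. One could alternatively prove the MGF bound directly: write $\hat{\epsilon} = \hat{\epsilon}$, use $e^{bz} \le \cosh(b\sigma) + \frac{z}{\sigma}\sinh(b\sigma)$ for $z \in [-\sigma,\sigma]$ (the secant-line bound for the convex function $z \mapsto e^{bz}$), take conditional expectations to kill the linear term by the first part, and bound $\cosh(b\sigma) \le e^{b^2\sigma^2/2}$. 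Either route gives the result in a couple of lines.
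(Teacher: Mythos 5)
Your proposal is correct and follows essentially the same route as the paper: the first identity is immediate from the definition of $\cT^h$ as the conditional expectation of $r_t^h + f(x_t^{h+1},\pi_f(x_t^{h+1}))$ given $(x_t^h,a_t^h)$ (with $f$ measurable with respect to $S_{t-1}$ and hence fixed under the conditioning), and the second is Hoeffding's lemma for a conditionally mean-zero variable bounded in $[-\sigma,\sigma]$, giving $\exp(b_t^2(2\sigma)^2/8)=\exp(b_t^2\sigma^2/2)$. The paper gives exactly this argument in two sentences; your write-up is just a more explicit version of it.
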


The first equality follows since the conditional expectation only acts over the MDP rewards and dynamics, which are conditionally independent of any $f$ that even depends on $S_{t-1}$. The second bound is a consequence of the sub-Gaussian bound for bounded random variables in the proof of Hoeffding's inequality.

\begin{lemma}
Given any $g,f$. We have
\[
-\sigma^2 \leq \hatd_t^h(g,f) = \BE^h(g,f;x_t^h,a_t^h)^2
+ 2 \hat{\epsilon}_t^h(f)\BE^h(g,f;x_t^h,a_t^h)\leq 1 +2\sigma  .
\]
Therefore
\[
\max\big(|\hatd_t^h(f,f)-\hatc_t^h(f)|, |\hatd_t^h(f,f)-\hatd_t^h(f)|\big) \leq (1+\sigma)^2 .
\]
Moreover for any $t\in[T]$, $h\in[H]$, $x\in\cX$ and $a\in\cA$, we have 
\[
\rE [\hatd_t^{h}(g,f) \mid x_t^h, a_t^h] =  \BE^{h}(g,f;x_t^h,a_t^h)^2 , \quad \rE [\hatd_t^{h}(g,f)^2 \mid x_t^h, a_t^h] \leq  (1 +4\sigma^2) \BE^{h}(g,f;x_t^h,a_t^h)^2 ,
\]
and for any $c > 0$,
\begin{align*}
\rE[\exp(c\hatd_t^h(g,f) \mid x_t^h, a_t^h] &\leq \exp(c(1+2c\sigma^2)\BE(g,f; x_t^h, a_t^h)^2),\\  \rE[\exp(-c\hatd_t^h(g,f) \mid x_t^h, a_t^h] &\leq \exp(-c(1-2c\sigma^2)\BE(g,f; x_t^h, a_t^h)^2)
\end{align*}
\label{lem:BE-exp}
\end{lemma}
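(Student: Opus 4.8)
The plan is to establish Lemma~\ref{lem:BE-exp} through a sequence of elementary manipulations, starting from the definition $\hatd_t^h(g,f) = \hat{\Delta}_t^h(g,f)^2 - \hat{\epsilon}_t^h(f)^2$ in~\eqref{eq:delta-defs}. First I would expand $\hat{\Delta}_t^h(g,f) = g(x_t^h,a_t^h) - r_t^h - f(x_t^{h+1})$ and relate it to the true Bellman residual. Writing $\BE^h(g,f;x_t^h,a_t^h) = g(x_t^h,a_t^h) - \cT^h f(x_t^h,a_t^h)$ and $\hat{\epsilon}_t^h(f) = \cT^h f(x_t^h,a_t^h) - [r_t^h + f(x_t^{h+1})]$, we see that $\hat{\Delta}_t^h(g,f) = \BE^h(g,f;x_t^h,a_t^h) + \hat{\epsilon}_t^h(f)$. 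Squaring and subtracting $\hat{\epsilon}_t^h(f)^2$ gives exactly the claimed identity $\hatd_t^h(g,f) = \BE^h(g,f;x_t^h,a_t^h)^2 + 2\hat{\epsilon}_t^h(f)\BE^h(g,f;x_t^h,a_t^h)$. For the two-sided bound: the lower bound $-\sigma^2$ follows because $\hatd_t^h(g,f) = \hat{\Delta}_t^h(g,f)^2 - \hat{\epsilon}_t^h(f)^2 \geq -\hat{\epsilon}_t^h(f)^2 \geq -\sigma^2$; the upper bound uses $\hat{\Delta}_t^h(g,f)^2 \leq 1$ (since both $g(x,a)$ and $r^h + f(x^{h+1})$ lie in $[0,1]$, wait — actually $r^h + f(x^{h+1})$ could be up to $2$, so $|\hat{\Delta}_t^h| \leq 2$; need to be careful here and perhaps the intended bound uses $|\BE| \leq 1$ and $|\hat\epsilon| \leq \sigma = 2$ giving $\hatd \leq 1 + 2\sigma$). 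The $(1+\sigma)^2$ bound on $|\hatd_t^h(f,f) - \hatc_t^h(f)|$ and $|\hatd_t^h(f,f) - \hatd_t^h(f)|$ then follows from the range bounds combined with the observation that $\hatc_t^h(f)$ and $\hatd_t^h(f)$ are (log-)averages of $\hatd_t^h(g,f)$ over $g$, hence also lie in $[-\sigma^2, 1+2\sigma]$, and $(1+2\sigma) + \sigma^2 = (1+\sigma)^2$.

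Next I would compute the conditional moments. The key fact is Lemma~\ref{lem:noise}: $\rE[\hat{\epsilon}_t^h(f) \mid x_t^h, a_t^h] = 0$ for any $f$ measurable with respect to $S_{t-1}$, and $\hat{\epsilon}_t^h(f)$ is conditionally $\sigma$-subgaussian. Since $g, f$ here are fixed (or $S_{t-1}$-measurable), taking conditional expectation of the identity kills the cross term, yielding $\rE[\hatd_t^h(g,f) \mid x_t^h, a_t^h] = \BE^h(g,f;x_t^h,a_t^h)^2$. For the second moment, expand $\hatd_t^h(g,f)^2 = \BE^4 + 4\hat\epsilon^2\BE^2 + 4\hat\epsilon\BE^3$ (suppressing arguments); the cross term vanishes in expectation, $\rE[\hat\epsilon^2] \leq \sigma^2$, and $\BE^4 \leq \BE^2$ since $|\BE| \leq 1$, giving the stated $(1 + 4\sigma^2)\BE^2$ bound.

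The exponential moment bounds are the most delicate part and the step I expect to require the most care. For $\rE[\exp(c\,\hatd_t^h(g,f)) \mid x_t^h, a_t^h]$, I would factor out $\exp(c\,\BE^2)$ (which is deterministic given $x_t^h, a_t^h$) and be left with $\exp(c\,\BE^2)\,\rE[\exp(2c\,\hat\epsilon\,\BE) \mid x_t^h, a_t^h]$. Applying the subgaussian bound from Lemma~\ref{lem:noise} with $b_t = 2c\,\BE$ gives $\rE[\exp(2c\,\hat\epsilon\,\BE)] \leq \exp(2c^2\,\BE^2\sigma^2 \cdot \tfrac{1}{2} \cdot 2) $ — I need to track the exact constant: Lemma~\ref{lem:noise} gives $\exp(b_t^2\sigma^2/2) = \exp(2c^2\BE^2\sigma^2)$. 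This does not obviously collapse to the clean $\exp(c(1+c/2)\BE^2)$ claimed, unless one uses $\sigma = 2$ more cleverly or the range bound $|\BE| \leq 1$ to replace the subgaussian parameter. The resolution is likely that one should not use the crude subgaussian constant but rather a sharper one: for $|\hat\epsilon| \leq \sigma$ one has $\rE[\exp(s\hat\epsilon)] \leq \exp(s^2\sigma^2/8 \cdot \text{something})$ — actually Hoeffding for a mean-zero variable in an interval of length $2\sigma$ gives $\rE[e^{s\hat\epsilon}] \le e^{s^2\sigma^2/2}$, so the constant is as I stated. I suspect the claimed bound actually wants $c$ small and uses $2c^2\sigma^2 = 2c^2 \cdot 4 \leq$ something, or there is an implicit rescaling of $\sigma$. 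I would carefully reconcile this, possibly concluding the intended statement holds with $\sigma$ absorbed appropriately, or restate with the honest constant $\exp(c\,\BE^2 + 2c^2\sigma^2\BE^2)$; the lower-tail bound $\rE[\exp(-c\,\hatd)] \leq \exp(-c(1-c/2)\BE^2)$ is handled symmetrically, factoring $\exp(-c\BE^2)$ and bounding $\rE[\exp(-2c\hat\epsilon\BE)] \leq \exp(2c^2\sigma^2\BE^2)$, so that $-c + 2c^2\sigma^2 \leq -c(1 - c/2)$ would need $2c\sigma^2 \leq 1/2$, i.e.\ a smallness condition on $c$ relative to $\sigma$. The main obstacle, then, is getting the constants in the exponential-moment inequalities to match the stated form, which will force either a smallness assumption on $c$ (satisfied in the applications where $c$ is $O(\eta)$ or $O(\gamma)$) or a more refined subgaussian estimate; everything else is routine algebra off the decomposition $\hat{\Delta} = \BE + \hat{\epsilon}$.
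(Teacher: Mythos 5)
Your proposal is correct and follows essentially the same route as the paper: the identity $\hat{\Delta}_t^h(g,f)=\BE^h(g,f;x_t^h,a_t^h)+\hat{\epsilon}_t^h(f)$, the range bounds, the vanishing cross term for the conditional mean and second moment, and the sub-Gaussian bound of Lemma~\ref{lem:noise} applied to the cross term with $b_t=\pm 2c\,\BE$ for the exponential moments. Your worry about the exponential-moment constant is well-founded but is a defect of the statement rather than of your proof: the honest constant is $c(1\pm 2c\sigma^2)$ exactly as you computed, and that is the constant the paper actually invokes downstream (the exponents $-\gamma(1-2\gamma\sigma^2)$ in Lemma~\ref{lem:hatdc0} and $3\eta\pm 18\eta^2\sigma^2$ in Proposition~\ref{lem:regret-hatc-Z}), so the ``$c/2$'' in the lemma as printed is a typo and you should simply state the bound with $2c\sigma^2$ in place of $c/2$.
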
 
\begin{proof}
We note that
\[
\hatd_t^h(g,f) = \BE^h(g,f;x_t^h,a_t^h)^2
+ 2 \hat{\epsilon}_t^h(f)\BE^h(g,f;x_t^h,a_t^h) .
\]
This implies the first result. 
We note that Lemma~\ref{lem:noise} implies that
\[
\BE^h(g,f;x,a) = \rE_{r_t^h,x_{t}^{h+1}|x_t^h=x,a_t^h=a}
\hatd_t^h(g,f) .
\]
This implies the second result. 
Moreover, Lemma~\ref{lem:noise} implies that
\begin{align*}
    \rE_{r_t^h,x_{t}^{h+1}|x_t^h=x,a_t^h=a}
\hatd_t^h(g,f)^2
=& \BE^h(g,f;x,a)^4 + 4 \rE_{r_t^h,x_{t}^{h+1}|x_t^h=x,a_t^h=a} \hat\epsilon_t^h(g,f)^2 \BE^h(g,f;x,a)^2 \\
\leq& (1+4\sigma^2) \BE^h(g,f;x,a)^2 .
\end{align*}
This implies the third result. The last two inequalities follow by using the first part of the lemma and then using the sub-Gaussian exponential bound from Lemma~\ref{lem:noise} with $b_t = 2c\BE(g,f; x_t^h, a_t^h)$ and $b_t = -2c\BE(g,f; x_t^h, a_t^h)$ respectively for the two bounds.
\end{proof}

\subsection{Convergence of outer updates}
\label{sec:online-outer}

In this section, we build the necessary results to prove Lemma~\ref{lem:regret-hatc-Z}. 
We begin with the following bound for $\hatc_t^h(f)$.
\begin{lemma}
We have 
\[
\hatd_t^h(f) - \gamma \psi(\gamma\sigma^2)\rE_{g|f,S_{t-1}} \hatd_t^h(g,f)^2\leq \hatc_t^h(f) \leq  \hatd_t^h(f) .
\]
Assume that
\[
\alpha'=\exp(\gamma(1-2\gamma\sigma^2))/(1-2\gamma\sigma^2) >0 .
\]
Then for any $f$ that may depend on $S_{t-1}$:
\[
 \rE_{x_t,a_t,r_t|h_t=h} \; \rE_{g|f,S_{t-1}} \BE^{h}(g,f;x_t^{h},a_t^{h})^2
\leq  \alpha'  \rE_{x_t,a_t,r_t|h_t=h} \;  \hatc_t^{h}(f) .
\]
\label{lem:hatdc0}
\end{lemma}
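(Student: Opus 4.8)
The plan is to prove the two displays one after the other; each reduces to pairing a Jensen-type inequality (in the direction governed by the sign of the decreasing map $-\gamma^{-1}\ln(\cdot)$) with one elementary scalar estimate.

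\emph{Sandwich bound for $\hatc_t^h(f)$.} The upper bound $\hatc_t^h(f)\le\hatd_t^h(f)$ is immediate from Jensen applied to the convex map $z\mapsto e^{-\gamma z}$: $\rE_{g|f,S_{t-1}}e^{-\gamma\hatd_t^h(g,f)}\ge e^{-\gamma\rE_{g|f,S_{t-1}}\hatd_t^h(g,f)}=e^{-\gamma\hatd_t^h(f)}$, after which we take $-\gamma^{-1}\ln$ of both sides. For the lower bound I would use the identity $e^{z}=1+z+\psi(z)z^{2}$ together with the facts that $\psi>0$ and $\psi$ is increasing. Since Lemma~\ref{lem:BE-exp} gives $\hatd_t^h(g,f)\ge-\sigma^{2}$, we have $-\gamma\hatd_t^h(g,f)\le\gamma\sigma^{2}$, hence $e^{-\gamma\hatd_t^h(g,f)}\le 1-\gamma\hatd_t^h(g,f)+\psi(\gamma\sigma^{2})\,\gamma^{2}\hatd_t^h(g,f)^{2}$. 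Integrating against $p(g|f,S_{t-1})$, using $\ln u\le u-1$, and dividing by $-\gamma$ gives exactly $\hatc_t^h(f)\ge\hatd_t^h(f)-\gamma\psi(\gamma\sigma^{2})\,\rE_{g|f,S_{t-1}}\hatd_t^h(g,f)^{2}$.

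\emph{The inequality relating $\BE^{2}$ and $\hatc_t^h(f)$.} Fix $h$ and condition on $x_t^{h},a_t^{h}$. By definition $e^{-\gamma\hatc_t^h(f)}=\rE_{g|f,S_{t-1}}e^{-\gamma\hatd_t^h(g,f)}$, and since the measure $p(g|f,S_{t-1})$ depends only on $f$ and $S_{t-1}$, not on the round-$t$ reward/transition, Fubini gives $\rE[e^{-\gamma\hatc_t^h(f)}\mid x_t^{h},a_t^{h}]=\rE_{g|f,S_{t-1}}\rE[e^{-\gamma\hatd_t^h(g,f)}\mid x_t^{h},a_t^{h}]$. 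Writing $B=\BE^{h}(g,f;x_t^{h},a_t^{h})$ (a constant with respect to the noise) and using the decomposition $\hatd_t^h(g,f)=B^{2}+2\hat{\epsilon}_t^h(f)\,B$ from Lemma~\ref{lem:BE-exp} together with the sub-Gaussian bound of Lemma~\ref{lem:noise} with $b_t=-2\gamma B$, the inner conditional expectation is at most $\exp(-\beta B^{2})$ with $\beta:=\gamma(1-2\gamma\sigma^{2})$; here $\beta>0$ is precisely what the hypothesis $\alpha'>0$ encodes. Next use that $g,\cT^{h}f\in[0,1]$ under Assumption~\ref{ass:complete}, so $B^{2}\in[0,1]$, and apply the chord inequality $e^{-\beta t}\le 1-(1-e^{-\beta})t$ valid for $t\in[0,1]$; integrating over $g$ yields $\rE[e^{-\gamma\hatc_t^h(f)}\mid x_t^{h},a_t^{h}]\le 1-(1-e^{-\beta})\,\rE_{g|f,S_{t-1}}B^{2}$.

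To finish, undo the exponential: Jensen gives $e^{-\gamma\rE[\hatc_t^h(f)\mid x_t^{h},a_t^{h}]}\le\rE[e^{-\gamma\hatc_t^h(f)}\mid x_t^{h},a_t^{h}]$, and combining with the previous bound and $\ln(1-u)\le -u$ gives $\rE[\hatc_t^h(f)\mid x_t^{h},a_t^{h}]\ge\gamma^{-1}(1-e^{-\beta})\,\rE_{g|f,S_{t-1}}B^{2}$. The elementary inequality $e^{\beta}-1\ge\beta$ yields $1-e^{-\beta}\ge\beta e^{-\beta}$, so $\gamma^{-1}(1-e^{-\beta})\ge\gamma^{-1}\beta e^{-\beta}=(1-2\gamma\sigma^{2})e^{-\gamma(1-2\gamma\sigma^{2})}=1/\alpha'$. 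Taking expectation over $x_t^{h},a_t^{h}$ conditional on $h_t=h$ (the right-hand side does not involve the round-$t$ noise, so this coincides with $\rE_{x_t,a_t,r_t\mid h_t=h}$) and rearranging gives the claim. The routine-but-error-prone part is tracking the direction of each inequality through the two $-\gamma^{-1}\ln$ steps and through the chord/Jensen bounds; the one genuinely structural input is $|B|\le 1$, which uses completeness (Assumption~\ref{ass:complete}), not merely $\cF\subseteq\{\cX\times\cA\to[0,1]\}$, to guarantee $\cT^{h}f\in[0,1]$.
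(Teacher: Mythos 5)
Your proof is correct and follows essentially the same route as the paper's: Jensen's inequality to move the round-$t$ noise expectation inside the log-partition function, the sub-Gaussian bound to reduce $\rE[e^{-\gamma\hatd_t^h(g,f)}\mid x_t^h,a_t^h]$ to $e^{-\gamma(1-2\gamma\sigma^2)\BE^h(g,f;x_t^h,a_t^h)^2}$, and a linearization of $e^{-\beta t}$ on $[0,1]$ producing exactly the constant $1/\alpha'$ (plus the same $\psi$-based expansion for the sandwich bound). The only cosmetic differences are that you use the chord bound $e^{-\beta t}\le 1-(1-e^{-\beta})t$ followed by $1-e^{-\beta}\ge \beta e^{-\beta}$ where the paper invokes $e^{-z}-1\le -e^{-z'}z$ for $0\le z\le z'$ directly, and that you explicitly justify $\BE^h(g,f;x,a)^2\le 1$ via completeness forcing $\cT^h f\in[0,1]$, a point the paper leaves implicit.
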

\begin{proof}
We have
\begin{align*}
    -\gamma \hatc_t^h(f)
=&    \ln \;\rE_{g \sim p(g|f,S_{t-1})} \exp(-\gamma \hatd_t^h(g,f))\\
\leq&\rE_{g \sim p(g|f,S_{t-1})}  \exp(-\gamma \hatd_t^h(g,f))-1\tag{$\ln z \leq z-1$}\\
=&\rE_{g \sim p(g|f,S_{t-1})}  \left[-\gamma \hatd_t^h(g,f) + \gamma^2\psi(-\gamma\hatd_t^h(g,f)) \hatd_t^h(g,f)^2\right] \tag{$\psi(z)$ is increasing in $z$}\\
\leq&\rE_{g \sim p(g|f,S_{t-1})}
\left[-\gamma \hatd_t^h(g,f) + \gamma^2\psi(\gamma\sigma^2) \hatd_t^h(g,f)^2\right]. \tag{Jensen's inequality}
\end{align*}

Now conditioned on $h_t=h$, we have
\begin{align*}
&- \gamma \rE_{x_t,a_t,r_t}  \;  \hatc_t^{h}(f)\\
\leq & \rE_{x_t,a_t,r_t} \ln \;\rE_{g \sim p(g|f,S_{t-1})} \rE_{x_{t}^{h+1},r_t^h|x_t^h,a_t^h}e^{-\gamma \hatd_t^h(g,f)} \tag{Jensen's inequality}\\
\leq& \rE_{x_t,a_t,r_t} \ln \;\rE_{g \sim p(g|f,S_{t-1})}  e^{-\gamma (1-2\gamma \sigma^2)\BE^h(g,f,x_t,a_t)^2} \tag{Lemma~\ref{lem:BE-exp}}\\
\leq& \rE_{x_t,a_t,r_t}\; \left[\rE_{g \sim p(g|f,S_{t-1})}  e^{-\gamma (1-2\gamma \sigma^2)\BE^h(g,f,x_t,a_t)^2} -1\right]\tag{$\ln z \leq z-1$}\\
\leq& -\rE_{x_t,a_t,r_t}\; \rE_{g \sim p(g|f,S_{t-1})}  e^{-\gamma (1-2\gamma \sigma^2)} \gamma(1-2\gamma\sigma^2) \BE^h(g,f,x_t,a_t)^2 .
\end{align*}
Here the last inequality used
$e^{-z}-1 \leq -e^{-z'} z$ for $0 \leq z \leq z'$. 
This implies the desired bound.
\end{proof}

\begin{lemma}
We have
\[
\hatc_t \leq  \rE_{f|S_{t-1}}   \left[-\frac{\lambda}{\eta} \Delta f (x_t^1) + \hatd_t^{h_t}(f,f) - \hatc_{t}^{h_t}(f)\right] ,
\]
and 
\[
|\hatc_t| \leq (\lambda/\eta) + (1+\sigma)^2 .
\]
\label{lem:hatc-bound}
\end{lemma}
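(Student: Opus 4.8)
The key observation is that $\hatc_t = -\frac1\eta \ln \rE_{f|S_{t-1}} \exp(-\eta[\hatd_t^{h_t}(f,f) - \hatc_{t}^{h_t}(f)])$ is precisely a soft-minimum (a scaled cumulant generating function) of the random variable $X(f) := \hatd_t^{h_t}(f,f) - \hatc_{t}^{h_t}(f)$ taken over $f \sim p(\cdot\mid S_{t-1})$. Both claims then reduce to elementary properties of this operator, and the only place where the structure of the problem enters is through a bound already established upstream.

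For the first inequality, I would invoke Jensen's inequality for the convex function $z \mapsto e^{-\eta z}$, which gives $\rE_{f|S_{t-1}}\exp(-\eta X(f)) \geq \exp(-\eta\, \rE_{f|S_{t-1}} X(f))$. Taking $\ln$ (increasing) and then multiplying by $-1/\eta$ (which flips the inequality, since $\eta>0$) yields $\hatc_t \leq \rE_{f|S_{t-1}} X(f) = \rE_{f|S_{t-1}}[\hatd_t^{h_t}(f,f) - \hatc_{t}^{h_t}(f)]$, exactly as claimed.

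For the second inequality, the only nontrivial ingredient is the pointwise bound $|\hatd_t^h(f,f) - \hatc_t^h(f)| \leq (1+\sigma)^2$ supplied by Lemma~\ref{lem:BE-exp}, valid for every $f$ and every $h$, in particular for all $f$ in the support of $p(\cdot\mid S_{t-1})$ with $h = h_t$. Writing $M := (1+\sigma)^2$, this means $e^{-\eta M} \leq \exp(-\eta X(f)) \leq e^{\eta M}$ pointwise; these bounds are preserved under the expectation over $f$, so $\rE_{f|S_{t-1}}\exp(-\eta X(f)) \in [e^{-\eta M}, e^{\eta M}]$, and applying the monotone decreasing map $-\frac1\eta\ln(\cdot)$ gives $\hatc_t \in [-M, M]$, i.e. $|\hatc_t| \leq (1+\sigma)^2$.

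I do not anticipate any genuine obstacle here: both parts are textbook facts about log-partition functions, and the boundedness of $\hatd_t^h(f,f) - \hatc_t^h(f)$ — the sole point where the Bellman-residual structure is used — has already been proved in Lemma~\ref{lem:BE-exp}. The one thing worth being careful about is sign bookkeeping when dividing by $-\eta$, but that is purely mechanical.
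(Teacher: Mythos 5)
Your proof is correct and matches the paper's argument exactly: the first inequality is Jensen's inequality applied to the convex map $z\mapsto e^{-\eta z}$, and the second follows by sandwiching the soft-minimum using the pointwise bound $|\hatd_t^{h}(f,f)-\hatc_t^{h}(f)|\leq(1+\sigma)^2$ from Lemma~\ref{lem:BE-exp}. No issues.
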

\begin{proof}
  The first inequality follows from Jensen's inequality. The second
  inequality follows from Lemma~\ref{lem:BE-exp}.
\end{proof}

We also require a bound on the log-partition function. 
\begin{lemma}
  If $2\gamma \sigma^2<1$, then for all $t \leq T$:
 \[
 \rE \; Z(S_t) \leq  \lambda \epsilon T +4
 \eta T \epsilon^2 +  \kappa(\epsilon) .
 \]
 \label{lem:Z}
 \end{lemma}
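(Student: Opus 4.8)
## Proof plan for Lemma~\ref{lem:Z}

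\textbf{Setup and overall strategy.} The plan is to bound the log-partition function $Z(S_t)$ by a telescoping/potential argument, and then control the resulting terms using the realizability assumption (Assumption~\ref{ass:realizable}) together with the concentration properties of Bellman residuals established in Lemma~\ref{lem:noise} and Lemma~\ref{lem:BE-exp}. Recall that
\[
Z(S_t) = -\ln \rE_{f\sim p_0}\exp\left(\sum_{s=1}^t \lambda\Delta f(x_s^1) - \eta\sum_{s=1}^t [\hatd_s^{h_s}(f,f) - \hatc_s^{h_s}(f)]\right).
\]
The key observation is that we can lower bound the expectation over $f\sim p_0$ by restricting to the subset $\cF(\epsilon, Q_\star) := \bigcap_h \cF(\epsilon, Q_\star^h)$ (more precisely, a small $L_\infty$-ball around $Q_\star$ intersected with the small-Bellman-error set), which has prior mass at least $e^{-\kappa(\epsilon)}$ by Definition~\ref{def:kappa}, since $Q_\star\in\cF$. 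For any $f$ in this set, $\Delta f(x^1) = f(x^1) - Q_\star^1(x^1)$ is at least $-\epsilon$ (since $f$ is $\epsilon$-close to $Q_\star$ in $L_\infty$), and the Bellman residual $\BE^h(f,f;x,a)$ is at most $\epsilon$ in absolute value.

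\textbf{Key steps in order.} First, restrict the prior expectation: $\rE_{f\sim p_0}[\cdots] \geq p_0(\cF(\epsilon,Q_\star))\cdot \inf_{f\in\cF(\epsilon,Q_\star)} \exp(\cdots)$, giving
\[
Z(S_t) \leq \kappa(\epsilon) + \sup_{f\in\cF(\epsilon,Q_\star)} \sum_{s=1}^t \left(-\lambda\Delta f(x_s^1) + \eta[\hatd_s^{h_s}(f,f) - \hatc_s^{h_s}(f)]\right).
\]
Second, bound $-\lambda\Delta f(x_s^1) \leq \lambda\epsilon$ termwise, contributing $\lambda\epsilon t \leq \lambda\epsilon T$. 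Third, handle the $\hatd_s^{h_s}(f,f) - \hatc_s^{h_s}(f)$ term: by Lemma~\ref{lem:hatdc0}, $\hatc_t^h(f) \geq \hatd_t^h(f) - \gamma\psi(\gamma\sigma^2)\rE_{g|f}\hatd_t^h(g,f)^2$, so
\[
\hatd_s^{h_s}(f,f) - \hatc_s^{h_s}(f) \leq \hatd_s^{h_s}(f,f) - \hatd_s^{h_s}(f) + \gamma\psi(\gamma\sigma^2)\rE_{g|f}\hatd_s^{h_s}(g,f)^2.
\]
Fourth, take expectations over the data. Since $f\in\cF(\epsilon, Q_\star)$ is (nearly) the true value function, $g = f$ is an (approximate) minimizer of the inner loss, so $\hatd_s^{h_s}(f,f) - \hatd_s^{h_s}(f) \leq 0$ up to $O(\epsilon^2)$ slack; more carefully, by Lemma~\ref{lem:BE-exp} we have $\rE[\hatd_s^{h}(g,f) \mid x_s^h,a_s^h] = \BE^h(g,f;x_s^h,a_s^h)^2 \leq \epsilon^2$ when $g\approx f\approx Q_\star$ (using completeness/realizability so the Bellman residual is small for this particular $f$), and $\rE[\hatd_s^h(f,f)\mid x_s^h,a_s^h] = \BE^h(f,f;x_s^h,a_s^h)^2 \leq \epsilon^2$. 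The squared-residual terms $\rE_{g|f}\hatd_s^{h_s}(g,f)^2 \leq (1+4\sigma^2)\epsilon^2 = O(\epsilon^2)$ by Lemma~\ref{lem:BE-exp}. Collecting, each summand contributes at most $O(\eta\epsilon^2)$ in expectation (with the constant $4$ absorbing $\sigma = 2$, $\gamma\leq 0.1$, and $\psi(\gamma\sigma^2)$), so the total is at most $4\eta T\epsilon^2$. Adding the three contributions yields $\rE\,Z(S_t) \leq \kappa(\epsilon) + \lambda\epsilon T + 4\eta T\epsilon^2$.

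\textbf{Main obstacle.} The delicate point is step four: the supremum over $f\in\cF(\epsilon, Q_\star)$ sits \emph{outside} the expectation over the data, whereas the bounds from Lemma~\ref{lem:BE-exp} are conditional expectations over $(x_s^{h+1}, r_s^h)$ given $(x_s^h, a_s^h)$. One must therefore either (i) replace the supremum over a continuum by a supremum over the $L_\infty$-cover $f_1,\ldots,f_N$ (absorbing the cover error into the $\epsilon$ slack and incurring at most $\kappa'(\epsilon) = \ln N(\epsilon)$ — though the stated bound suggests this is folded into $\kappa(\epsilon)$ by choosing the prior mass set appropriately), or (ii) argue that for the \emph{specific} near-optimal $f$'s in $\cF(\epsilon, Q_\star)$, the exponential-moment bounds hold uniformly because the Bellman residual is uniformly $O(\epsilon)$. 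The cleanest route is to note $\hatd_s^{h_s}(f,f) - \hatc_s^{h_s}(f) \leq \hatd_s^{h_s}(f,f)$ pointwise (the second inequality of Lemma~\ref{lem:hatdc0}, $\hatc_t^h(f)\leq \hatd_t^h(f)$, combined with Jensen $\hatd_t^h(f) \geq$ something — actually one uses $\hatc_t^h(f)\geq$ a lower bound), then bound $\rE\exp(-\eta\sum_s(\cdots))$ directly via a martingale argument over $s$, peeling off one term at a time using the conditional sub-Gaussian bound of Lemma~\ref{lem:noise} applied to the linear-in-noise part $2\hat\epsilon_s^{h_s}(f)\BE^{h_s}(f,f;x_s^{h_s},a_s^{h_s})$, with the quadratic part $\BE^{h_s}(f,f;\cdot)^2\leq\epsilon^2$ handled deterministically. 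This martingale peeling is the technical heart; everything else is bookkeeping with the constants $\sigma=2$, $\gamma=0.1$, $\eta\leq 0.01$.
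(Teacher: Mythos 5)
Your core idea --- restrict attention to the realizable neighborhood $\cF(\epsilon,Q_\star)$, pay $\kappa(\epsilon)$ for its prior mass, and bound the remaining terms by $\lambda\epsilon T + O(\eta\epsilon^2 T)$ --- is the right one, and matches the paper's. But the step where you pull a \emph{supremum} over $f\in\cF(\epsilon,Q_\star)$ outside the data expectation (via $\rE_{f\sim p_0}[\cdot]\geq p_0(A)\inf_{f\in A}[\cdot]$) creates a genuine gap, and the obstacle you flag at the end is not mere bookkeeping. Writing $\hatd_s^{h_s}(f,f)=\BE^{h_s}(f,f;x_s,a_s)^2+2\hat\epsilon_s^{h_s}(f)\BE^{h_s}(f,f;x_s,a_s)$, the quadratic part is $\leq 4\epsilon^2$ deterministically, but the linear noise part is only mean-zero for \emph{fixed} $f$; under a supremum it contributes $O(\eta\epsilon T)$ per round rather than vanishing, which destroys the $4\eta T\epsilon^2$ term unless you execute the uniform-concentration/covering argument you only sketch (and that route would also inject $\kappa'(\epsilon)$, which the stated bound does not contain). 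The paper avoids this entirely by using the Gibbs variational inequality $Z(S_t)\leq \rE_{f\sim p}\,L(f,S_t)+\rE_{f\sim p}\ln\frac{p(f)}{p_0(f)}$ for the \emph{data-independent} restricted prior $p(f)\propto p_0(f)\,I(f\in\cF(Q_\star,\epsilon))$: since $p$ does not depend on $S_t$, Fubini lets one apply the conditional-expectation identities of Lemma~\ref{lem:BE-exp} for each fixed $f$, the noise term vanishes, and the KL term is exactly $\kappa(\epsilon)$.

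A second, related issue is your treatment of $-\hatc_s^{h_s}(f)$. You invoke the pointwise lower bound $\hatc_t^h(f)\geq \hatd_t^h(f)-\gamma\psi(\gamma\sigma^2)\rE_{g|f}\hatd_t^h(g,f)^2$ and then claim $\rE_{g|f}\hatd_s^{h_s}(g,f)^2=O(\epsilon^2)$; but $g$ is drawn from the \emph{inner posterior} $q_s(\cdot\mid f)$, not from $\cF(\epsilon,f)$, so $\BE^{h}(g,f;\cdot)$ need not be small and this term is not $O(\epsilon^2)$ in general. The correct move --- and the place where the hypothesis $2\gamma\sigma^2<1$ actually enters --- is the \emph{second} part of Lemma~\ref{lem:hatdc0}: in expectation over the data, $\rE\,\hatc_t^{h}(f)\geq (\alpha')^{-1}\rE\,\rE_{g|f,S_{t-1}}\BE^h(g,f;x_t^h,a_t^h)^2\geq 0$, so one simply drops $-\hatc_s^{h_s}(f)$ after taking expectations, leaving only $\rE\,\hatd_s^{h_s}(f,f)=\BE^{h_s}(f,f;x_s,a_s)^2\leq 4\epsilon^2$.
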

 \begin{proof}
 For any probability distribution $p$ on $\cF$, we have
 \begin{align*}
     \rE_{S_t} \; Z(S_t)\leq &  \rE_{f \sim p} \rE_{S_t}
     \left(\sum_{s=1}^t -\lambda \Delta f(x_s^1)+\eta \sum_{s=1}^t [\hatd_s^{h_s}(f,f) - \hatc_{s}^{h_s}(f)]\right)
     + \rE_{f \sim p} \ln \frac{p(f)}{p_0(f)} \\
     \leq& \rE_{f \sim p} \rE_{S_t}
     \left(\sum_{s=1}^t -\lambda \Delta f(x_s^1)+\eta \sum_{s=1}^t \sum_{h=1}^H \hatd_t^h(f,f) \right)
     + \rE_{f \sim p} \ln \frac{p(f)}{p_0(f)} .
 \end{align*}
 The first inequality used the fact that $p(f|S_t)$ is the minimizer of the right hand side over $p$. 
 The second inequality used Lemma~\ref{lem:hatdc0} and $2\gamma \sigma^2<1$.
 Using Lemma~\ref{lem:BE-exp}, we further obtain
 \[
 \rE_{S_t} Z(S_t)\leq
 \rE_{f \sim p} 
     \left[\rE_{S_t}\left(\sum_{s=1}^t -\lambda \Delta f(x_s^1)+\eta \sum_{s=1}^t  \BE^{h_s}(f,f,x_s,a_s)^2 \right)
     + \rE_{f \sim p} \ln \frac{p(f)}{p_0(f)} \right] .
 \]
 We now recall our definition of the set $\cF(f,\epsilon)$ for any $f\in\cF$ from Definition~\ref{def:kappa} as the set of all functions which capture the Bellman error of $f$ up to an error $\epsilon$. Then we have $\forall f \in \cF(Q_\star,\epsilon)=\prod_h \cF(Q_\star^h,\epsilon)$
 \[
 |\Delta f(x_s^1)| \leq \epsilon, 
 \qquad \BE^h(f,f,x_t,a_t) \leq 2 \epsilon .
 \]
  We can now take 
 \[
 p(f) = \frac{p_0(f) I(f \in \cF(Q_\star,\epsilon))}{p_0(\cF(Q_\star,\epsilon))} .
 \]
 It implies the desired bound.
 \end{proof}

We are now ready to prove Proposition~\ref{lem:regret-hatc-Z}.

\mypar{Proof of Proposition~\ref{lem:regret-hatc-Z}}~\\
\begin{proof}
Let us define $\alpha=6\eta \sigma^2 < 1$
Lemma~\ref{lem:hatdc0} implies that
\[
\hatc_t^{h_t}(f) \leq \hatd_t^{h_t}(f) .
\]
Therefore 
\begin{align*}
    &\rE [Z(S_{t-1}) - Z(S_t)] = \rE \left[\ln \rE_{f|S_{t-1}} \exp\left(\lambda \Delta f(x_t^1) -\eta [\hatd_t^{h_t}(f,f) - \hatc_t^{h_t}(f)] \right) \right]\\
    \leq& \rE \left[\ln \rE_{f|S_{t-1}} \exp\left(\lambda \Delta f(x_t^1)-\eta [\hatd_t^{h_t}(f,f) - \hatd_t^{h_t}(f)] \right) \right] \tag{Lemma~\ref{lem:hatdc0}}\\
    \leq& \frac{1}{3}\left[\ln \rE \rE_{f|S_{t-1}} \exp\left(3\lambda \Delta f(x_t^1) \right) +\ln \rE \rE_{f|S_{t-1}}  \exp\left(-3\eta \hatd_t^{h_t}(f,f)\right) +\ln \rE \rE_{f|S_{t-1}}\exp\left( 3\hatd_t^{h_t}(f)\right) \right],
\end{align*}
where the last inequality follows from Jensen's inequality to show that $\rE[XYZ] \leq \sqrt[3]{\rE[X^3]\rE[Y^3]\rE[Z^3]}$ for non-negative random variables $X,Y,Z$. Now we further simplify each term by using the last two bounds in Lemma~\ref{lem:BE-exp}, by taking a conditional expectation with respect to $x_t^{h+1}, r_t^h$, conditioned on $x_t^h, a_t^h$ to obtain 
\begin{align*}
    &\rE [Z(S_{t-1}) - Z(S_t)]\\
    \leq& \frac{1}{3}\rE \rE_{f|S_{t-1}} \left(3\lambda \Delta f(x_t^1)+ 9\lambda^2/2\right) 
    +\frac13 \ln \rE \rE_{f|S_{t-1}} \exp \left(-(3\eta- 18\eta^2\sigma^2) \BE^{h_t}(f,f,x_t,a_t)^2\right)\\
    &+\frac13 \ln \rE \rE_{f|S_{t-1}} \exp\left( (3\eta+18\eta^2\sigma^2)\rE_{g|f,S_{t-1}} \BE^{h_t}(g,f,x_t,a_t)^2\right) \\
    \leq& \frac{1}{3}\rE \rE_{f|S_{t-1}} \left(3\lambda \Delta f(x_t^1)+ 9\lambda^2 \right) 
    -\eta (1-\alpha) e^{-3\eta(1-\alpha)} \rE \rE_{f|S_{t-1}}  \BE^h(f,f,x_t,a_t)^2\\
    &+\eta (1+\alpha) e^{3\eta(1+\alpha)} \rE \rE_{f|S_{t-1}} \rE_{g|f,S_{t-1}} \BE^h(g,f,x_t,a_t)^2.
\end{align*}
The last inequality used  $\alpha = 6\eta \sigma^2 < 1$ and
\[
    \ln \rE_\xi \exp(-z(\xi)) \leq \rE_\xi \exp(-z(\xi)) - 1 \leq - \exp(-\max_\xi z(\xi)) \rE_\xi z(\xi),
\]
again using $e^{-z} - 1 \leq -ze^{-z'}$ for $0 \leq z \leq z'$. Similarly, we also get $\ln \rE_\xi \exp(z(\xi)) \leq \exp(\max_\xi z(\xi)) \rE_\xi z(\xi)$ for $z(\xi)\geq 0$. 
By summing over $t=1$ to $t=T$, and note that $Z(S_0)=0$, 
 we obtain the desired bound. 
\end{proof}

\subsection{Simplified convergence analysis of the inner updates}
\label{sec:online-inner-slow}

In this section, we establish the following result, which is simpler to prove than Proposition~\ref{lem:inner}.

\begin{proposition}[Inner loop convergence]
Suppose $\gamma < 1/2$ and $\lambda+\eta(1+\sigma^2)\leq 0.1$. Then we have for all $\epsilon > 0$:
\begin{align*}
    \frac{1}{2}\rE \rE_{f,g|S_{t-1}} \BE^{h_t}(g,f;x_t^{h_t},a_t^{h_t})^2 \leq &
    \epsilon T (\epsilon+3) (6+10 \lambda T + 90\eta T) +  25(\lambda +9 \eta) T \\
    & +  \frac{1 + 3\lambda T + 25 \eta T}{\gamma} (\kappa(\epsilon)+\kappa'(\epsilon)) .
\end{align*}
\label{lem:inner-slow}
\end{proposition}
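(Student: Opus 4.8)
The plan is to track how the log-partition function $\hatc_t^{h_t}(f)$ from the inner-loop posterior over $g$ certifies convergence of $\rE_{g|f,S_{t-1}}\BE^{h_t}(g,f;x_t^{h_t},a_t^{h_t})^2$, and then to bound the accumulated certificate by a telescoping/potential argument on the outer dataset $S_t$. Concretely, I would introduce the inner log-partition sum $W(S_t) = -\ln \rE_{g\sim p_0}\exp\bigl(-\gamma\sum_{s=1}^t \hatd_s^{h_s}(g,f)\bigr)$ (for a fixed reference $f$, or after a union over an $L_\infty$-cover $f_1,\ldots,f_N$, which is where $\kappa'(\epsilon)$ enters). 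Lemma~\ref{lem:hatdc0} already gives the key one-step inequality $\rE_{x_t,a_t,r_t}\rE_{g|f,S_{t-1}}\BE^{h_t}(g,f)^2 \leq \alpha'\,\rE_{x_t,a_t,r_t}\hatc_t^{h_t}(f)$ with $\alpha' = \exp(\gamma(1-2\gamma\sigma^2))/(1-2\gamma\sigma^2)$, which for $\gamma = 0.1$, $\sigma = 2$ is a bounded constant. So summing over $t$, the left side of the proposition is controlled (up to the constant $\alpha'$ and a factor $1/2$) by $\sum_{t=1}^T \rE\,\hatc_t^{h_t}(f)$.

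Next I would bound $\sum_t \hatc_t^{h_t}(f)$ by the telescoping of $W(S_t)$. Writing $\hatc_t^{h_t}(f) = -\tfrac1\gamma\ln\rE_{g|f,S_{t-1}}\exp(-\gamma\hatd_t^{h_t}(g,f))$ and noting $p(g|f,S_t) \propto p(g|f,S_{t-1})\exp(-\gamma\hatd_t^{h_t}(g,f))$, we get $\gamma\sum_{t=1}^T \hatc_t^{h_t}(f) = W(S_T) - W(S_0) = W(S_T)$. Then $W(S_T)$ is upper-bounded in the standard way by choosing the comparator distribution $p$ supported on the near-optimal set $\cF(Q_\star,\epsilon)$ (or rather $\cF(\epsilon, f)$ for the relevant $f$): this gives $W(S_T) \leq \rE_{g\sim p}\bigl[\gamma\sum_{t=1}^T \hatd_t^{h_t}(g,f)\bigr] + \mathrm{KL}(p\|p_0)$. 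Here the KL term is $\kappa(\epsilon)$ and the $\hatd$-sum over a near-Bellman-consistent $g$ contributes $\order(\gamma\epsilon(\epsilon+3)T)$ using $\rE[\hatd_t^h(g,f)\mid x_t^h,a_t^h] = \BE^h(g,f)^2 \leq$ small, plus the martingale deviation of the noise cross-term $2\hat\epsilon_t^h(f)\BE^h(g,f)$, controlled via the sub-Gaussian bound of Lemma~\ref{lem:noise} / the exponential-moment bounds of Lemma~\ref{lem:BE-exp}. Converting the "in expectation over the next-state draw" statements to "in expectation over all of $S_t$" requires inserting an extra exponential-moment step (bounding $\rE\exp(c\hatd)$ rather than $\rE\hatd$), which inflates the constants — this is the source of the $\eta T$ factors multiplying $\kappa,\kappa',\epsilon$ in the stated bound and of the crude leading constants like $219\eta T$. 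The union over the cover accounts for the discrepancy between the fixed $f$ in the inner posterior and the random $f\sim p(f|S_{t-1})$ drawn in the outer loop, costing $\kappa'(\epsilon)$ and an $L_\infty$-Lipschitz perturbation of $\hatd$ that is $\order(\epsilon)$ per round.

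I expect the main obstacle to be the bookkeeping in the step that converts conditional (one-step) exponential-moment bounds into an unconditional bound on $\rE\,W(S_T)$: one has to peel off the rounds one at a time, at each step replacing $\exp(-\gamma\hatd_t)$ by its conditional expectation given $(x_t^h,a_t^h)$ and picking up a $\BE^h$-squared term with a worse constant, and then further handling the fact that $\hatd_t$ for $s>t$ inside $W$ is not yet conditioned — this is why a tower of $\eta T$-type factors appears rather than a clean additive $\kappa + \kappa'$. Getting the inequality to close (i.e. ensuring the $\BE^h(g,f)^2$ terms generated on the right can be absorbed, rather than growing), is delicate and is precisely the reason the sharper Proposition~\ref{lem:inner} is deferred; for this simplified version one simply does not try to absorb them against $\rE|\BE^{h_t}(f,f)|^2$ and instead pays the full $\order(\eta T)$ price, which gives the stated $\order(\epsilon^{-8})$ sample complexity downstream. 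Everything else — the use of Lemma~\ref{lem:hatdc0}, Lemma~\ref{lem:BE-exp}, the comparator-distribution choice, and the telescoping — is routine once the conditioning order is fixed.
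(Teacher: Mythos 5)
Your first step is the paper's: Lemma~\ref{lem:hatdc0} converts $\rE\,\rE_{f,g|S_{t-1}}\BE^{h_t}(g,f;x_t^{h_t},a_t^{h_t})^2$ into $\alpha'\,\rE\,\rE_{f|S_{t-1}}\hatc_t^{h_t}(f)$ with $\alpha'=\order(1)$, and the remaining work is to bound $\sum_{t=1}^T\rE\,\rE_{f|S_{t-1}}\hatc_t^{h_t}(f)$. The gap is in how you bound that sum. The telescoping identity $\gamma\sum_{t=1}^T\hatc_t^{h_t}(f)=W(S_T)$ is valid only for a \emph{single fixed} $f$, since $\gamma\hatc_t^{h_t}(f)$ is the increment of the inner log-partition function for that particular $f$. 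But the quantity to be controlled has $f$ drawn from a \emph{different} posterior $p_{t-1}$ at each round. Your proposed fix --- a union bound over the $L_\infty$-cover --- does not close this: you would end up needing $\sum_t\sup_j\hatc_t^{h_t}(f_j)$, and the supremum does not commute with the sum over $t$ (the maximizing cover element changes with $t$, and $\hatc_t^{h_t}(f_j)$ is not pointwise nonnegative, only nonnegative in conditional expectation), so $\sum_t\sup_j\hatc_t^{h_t}(f_j)$ can vastly exceed $\sup_j W(S_T;f_j)/\gamma$. In the paper the cover plays a different role: it appears inside Lemma~\ref{lem:hatH} to bound $\rE\sup_f\exp(\hat{H}_t(f))$ at a single time $t$, not to freeze $f$ across the telescope.

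The paper's actual device (Lemma~\ref{lem:hatc}) is a \emph{coupled} potential $\rE_{f\sim p_t}\hat{H}_t(f)$, where $\hat{H}_t(f)=\rE_{g\sim\pp(\cdot|f)}\ln\frac{\pp(g|f)}{p(g|f,S_t)}$ is the KL divergence between a comparator supported on $\cF(\epsilon,f)$ and the inner posterior. Its one-step change splits into (i) the inner-posterior update evaluated at the old outer posterior, which after rearranging yields $\rE_{f\sim p_{t-1}}\hatc_t^{h_t}(f)\leq\rE_{f\sim p_{t-1}}\rE_{g\sim\pp(\cdot|f)}\hatd_t^{h_t}(g,f)+\gamma^{-1}\big[(1+\eta\alpha'')\rE_{f\sim p_{t-1}}\hat{H}_{t-1}(f)-\rE_{f\sim p_t}\hat{H}_t(f)\big]+\eta(1+\sigma)^2\alpha''$, and (ii) drift terms caused by the outer posterior moving from $p_{t-1}$ to $p_t$, each controlled by $\big|\exp(-\eta[(\hatd_t^{h_t}(f,f)-\hatc_t^{h_t}(f))-\hatc_t])-1\big|\leq\eta\alpha''$. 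Accumulated over $T$ rounds against $\sup_{t,f}\hat{H}_t(f)=\order(\kappa(\epsilon)+\kappa'(\epsilon)+\gamma\epsilon T)$, this drift is exactly what produces the $(1+35\eta T)(\kappa(\epsilon)+\kappa'(\epsilon))/\gamma$ and $219\eta T$ terms in the statement --- not, as you suggest, the conversion of conditional exponential-moment bounds into unconditional ones (that conversion lives entirely inside Lemma~\ref{lem:hatH}, where rounds are peeled off one at a time, and costs only the additive $\order(\gamma\epsilon(\epsilon+1+\sigma)t)$ and $\kappa'(\epsilon)$ terms). Without some mechanism for tracking the interaction between the evolving outer posterior and the inner KL, your argument does not produce a bound of the stated form.
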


We first show how this immediately yields an $\order(T^{-1/8})$ sample complexity of our algorithm, before proving the statement through a series of lemmas. Though we eventually supercede this analysis with a sharper one, several intermediate results will be reused and we believe that the simpler analysis of this proposition illustrates the main ideas of solving the nested minimax setup.

We now state a form of Theorem~\ref{thm:main}, using Proposition~\ref{lem:inner-slow}.

\begin{theorem}
    Under Assumptions~\ref{ass:realizable}-\ref{assumption:embedding}, suppose we run \alg (Algorithm~\ref{alg:online_TS}) with some parameters $\gamma \leq \nicefrac{1}{36}$ and $\eta \leq 0.01$. Then choosing any $\epsilon \leq 0.6/T^2$, we have  
    \begin{align*}
    \rE\; \sum_{t=1}^T \regret(f_t,x_t^1) &= \order\left(\frac{\eta}{\lambda}(1+\lambda T + \eta T)(\kappa(\epsilon) + \kappa'(\epsilon)) + \lambda T + \frac{\kappa(\epsilon)}{\lambda} + \tilde{\epsilon}(\lambda/\eta) T\right),
    \end{align*}
    where $\tilde{\epsilon}(\lambda/\eta) = \inf_{\mu>0} \left[ 8(\lambda/\eta) \dc(\epsilon_2)H\mu^{2} + 2\mu H \epsilon_2 B_2 + \mu^{-1} \br(\epsilon_1) +\epsilon_1 H B_1    \right]$.
\label{thm:main-slow}
\end{theorem}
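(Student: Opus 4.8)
The plan is to obtain Theorem~\ref{thm:main-slow} by chaining the three intermediate results already established and then collecting terms; there is no new argument here, only bookkeeping. First I would sum the decoupling bound of Proposition~\ref{lem:decouple} over $t\in[T]$, which gives
\begin{align*}
\lambda\,\rE\sum_{t=1}^T\regret(f_t,x_t^1)
\leq \sum_{t=1}^T\rE\,\rE_{f|S_{t-1}}\Bigl[-\lambda\Delta f(x_t^1) + \tfrac12\eta\,\BE^{h_t}(f,f;x_t^{h_t},a_t^{h_t})^2\Bigr]
+ \lambda T\,\tilde\epsilon(\lambda/\eta).
\end{align*}
Next I would rearrange Proposition~\ref{lem:regret-hatc-Z} so that it reads as an upper bound on $-\sum_t\rE\,\rE_{f|S_{t-1}}\lambda\Delta f(x_t^1)$ of the form $-c_1\eta\sum_t\rE\,\rE_{f|S_{t-1}}\BE^{h_t}(f,f)^2 + c_2\eta\sum_t\rE\,\rE_{f,g|S_{t-1}}\BE^{h_t}(g,f)^2 + \lambda\epsilon T + 4\eta T\epsilon^2 + \kappa(\epsilon) + 1.5\lambda^2 T$, with $c_1=(1-6\eta)e^{-3\eta(1-6\eta)}$ and $c_2=(1+6\eta)e^{3\eta(1+6\eta)}$, and substitute it into the display above. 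This leaves a net coefficient $(\tfrac12-c_1)\eta$ on $\sum_t\rE\,\rE_{f|S_{t-1}}\BE^{h_t}(f,f)^2$.

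The key observation is that for $\eta\le 0.01$ one has $c_1>\tfrac12$, so that term has a nonpositive coefficient and may simply be dropped. I would then bound the surviving $\sum_t\rE\,\rE_{f,g|S_{t-1}}\BE^{h_t}(g,f)^2$ using the simplified inner-loop convergence estimate of Proposition~\ref{lem:inner-slow} (whose left side I read as the sum over $t$), arriving at
\begin{align*}
\lambda\,\rE\sum_{t=1}^T\regret(f_t,x_t^1)
\leq 2c_2\eta\Bigl[\epsilon T(\epsilon+3)(6+99\eta T) + 219\eta T + \tfrac{1+35\eta T}{\gamma}(\kappa(\epsilon)+\kappa'(\epsilon))\Bigr]
+ \lambda\epsilon T + 4\eta T\epsilon^2 + \kappa(\epsilon) + 1.5\lambda^2 T + \lambda T\,\tilde\epsilon(\lambda/\eta).
\end{align*}
Finally I would divide through by $\lambda$, use $c_2=\order(1)$ for small $\eta$ together with $\epsilon\le 0.6/T$ (so that $\epsilon T$ and $\epsilon^2T^2$ are $\order(1)$) to absorb the lower-order pieces, and treat $\gamma\le 1/36$ as a fixed constant. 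The $\tfrac{\eta}{\lambda}(1+\eta T)(\kappa(\epsilon)+\kappa'(\epsilon))$ term then arises from $\tfrac{c_2\eta}{\lambda}$ times the $(1+35\eta T)(\kappa(\epsilon)+\kappa'(\epsilon))/\gamma$ contribution of Proposition~\ref{lem:inner-slow}; the $\tfrac{\kappa(\epsilon)}{\lambda}$ and $\lambda T$ terms come from dividing $\kappa(\epsilon)+1.5\lambda^2 T$ by $\lambda$; and $T\tilde\epsilon(\lambda/\eta)$ is carried through verbatim, which yields exactly the stated bound.

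The step I expect to require the most care is the constant-matching: I need the $\tfrac12\eta$ coefficient on the Bellman-error term produced by the decoupling step (Proposition~\ref{lem:decouple}) to be dominated by the $c_1\eta$ coefficient produced by the outer-loop analysis (Proposition~\ref{lem:regret-hatc-Z}), and verifying $c_1>\tfrac12$ uniformly is precisely what forces $\eta\le 0.01$. I would also check that the hypotheses of all three propositions on $(\gamma,\eta)$ hold simultaneously ($\gamma\le 1/36$, $\eta\le 0.01$ suffice), and flag that the extra $\eta T$-dependent terms introduced by the cruder inner-loop bound of Proposition~\ref{lem:inner-slow} — in place of the sharper Proposition~\ref{lem:inner} — are exactly the source of the looser rate obtained from this version of the theorem; they do not change the shape of the bound, only its eventual optimization over $(\eta,\lambda)$. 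Beyond this there is no genuine obstacle: all the substance lives in the three propositions, and this theorem is their mechanical combination.
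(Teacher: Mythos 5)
Your proposal is correct and follows exactly the paper's route: the paper obtains this theorem by combining Propositions~\ref{prop:value-decomposition}, \ref{lem:decouple}, \ref{lem:regret-hatc-Z} and \ref{lem:inner-slow}, with the only substantive check being that $\eta\le 0.01$ makes $(1-6\eta)e^{-3\eta(1-6\eta)}\ge 0.5$ so the $\tfrac12\eta\,\BE(f,f)^2$ term from the decoupling step is absorbed by the outer-loop bound. Your reading of the left-hand side of Proposition~\ref{lem:inner-slow} as a sum over $t$ is also the intended one (its proof sums over $t$), so the bookkeeping goes through as you describe.
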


To get this result, we set $\eta < 0.01$ so that $(1-6\eta)\exp(-3(1-6\eta)) \geq 0.5$ along with the stated values of $\epsilon$ and $\gamma$. Plugging these into the bounds of Propositions~\ref{prop:value-decomposition}, \ref{lem:decouple}, \ref{lem:regret-hatc-Z} and \ref{lem:inner-slow}, and simplifying gives the result of Theorem~\ref{thm:main-slow}. Further assuming the conditions of Corollary~\ref{cor:finite}, we can choose $\mu = \left(d_1\eta/(d_2\lambda H^2)\right)^{1/3}$, $\eta = 1/\sqrt{T}$, $\gamma = 1/36$ and $\lambda = T^{-7/8}(d_1^2d_2H^2)^{-1/4} (\ln N)^{3/4}$ to get a sample complexity of $\order\left(H(\ln N)^{1/4}(d_1^2 d_2 H)^{1/4}\,T^{-1/8}\right)$.

We begin the proof of Proposition~\ref{lem:inner-slow} with a result that carries out a potential function analysis for the inner updates. We recall our definition of $q_t = p(g|f,S_t)$ in line~\ref{line:inner-update} of Algorithm~\ref{alg:online_TS}, which will be repeatedly used in this section. 

\begin{lemma}
Assume that $2\gamma \sigma^2 \leq 1$.
Let
\[
\pp(g|f) = \frac{p_0(g) I(g \in \cF(\epsilon,f))}{p_0(\cF(\epsilon,f))} , \quad \mbox{and}\quad \hat{H}_t(f) =  \rE_{g \sim \pp(\cdot|f)}\ln \frac{\pp(g|f)}{p(g|f,S_t)}.
\]
Then for all $t$:
\[
\rE_{S_t} \; \sup_{f} \hat{H}_t(f) \leq  \ln \rE_{S_t} \; \sup_{f} \exp(\hat{H}_t(f)) \leq
\kappa(\epsilon) +
\kappa'(\epsilon) + 4\gamma \epsilon (\epsilon+1+\sigma) t .
\]
\label{lem:hatH}
\end{lemma}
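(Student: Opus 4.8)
The plan is to dispatch the left inequality by Jensen: $\sup_f \hat{H}_t(f) = \ln \sup_f \exp(\hat{H}_t(f))$ and $\ln$ is concave, so $\rE_{S_t}\sup_f \hat{H}_t(f) \le \ln \rE_{S_t}\sup_f \exp(\hat{H}_t(f))$. It then remains to show $\rE_{S_t}\sup_f \exp(\hat{H}_t(f)) \le \exp\!\big(\kappa(\epsilon)+\kappa'(\epsilon)+4\gamma\epsilon(\epsilon+1+\sigma)t\big)$.

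First I would put $\hat{H}_t(f)$ in closed form. Since $\Deltah_s^{h_s}(g,f)^2 = \hatd_s^{h_s}(g,f)+\hat\epsilon_s^{h_s}(f)^2$ and the $\hat\epsilon$-terms are $g$-independent, the inner posterior is $p(g|f,S_t) = p_0(g)\exp(-\gamma\sum_{s\le t}\hatd_s^{h_s}(g,f))/W_t(f)$ with $W_t(f):=\rE_{g\sim p_0}\exp(-\gamma\sum_{s\le t}\hatd_s^{h_s}(g,f))$; substituting, for $g$ in the support of $\pp(\cdot|f)$,
\[
\ln\frac{\pp(g|f)}{p(g|f,S_t)} = \ln W_t(f) - \ln p_0(\cF(\epsilon,f)) + \gamma\sum_{s\le t}\hatd_s^{h_s}(g,f),
\]
whence $\hat{H}_t(f) = \ln W_t(f) - \ln p_0(\cF(\epsilon,f)) + \gamma\sum_{s\le t}\rE_{g\sim\pp(\cdot|f)}\hatd_s^{h_s}(g,f)$. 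Now $-\ln p_0(\cF(\epsilon,f))\le\kappa(\epsilon)$ by Definition~\ref{def:kappa}, and for $g\in\cF(\epsilon,f)$ the decomposition $\hatd_s^{h_s}(g,f)=\BE^{h_s}(g,f;x_s^{h_s},a_s^{h_s})^2+2\hat\epsilon_s^{h_s}(f)\BE^{h_s}(g,f;x_s^{h_s},a_s^{h_s})$ from Lemma~\ref{lem:BE-exp}, together with $|\BE^{h_s}(g,f;\cdot)|\le\epsilon$ and $|\hat\epsilon_s^{h_s}(f)|\le\sigma$, gives $\rE_{g\sim\pp(\cdot|f)}\hatd_s^{h_s}(g,f)\le\epsilon(\epsilon+2\sigma)$ for every realization of $S_t$. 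Hence $\exp(\hat{H}_t(f))\le \exp\!\big(\kappa(\epsilon)+\gamma\epsilon(\epsilon+2\sigma)t\big)\,W_t(f)$, and everything reduces to bounding $\rE_{S_t}\sup_f W_t(f)$.

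For that I would combine covering with a supermartingale argument. Covering: for any $f$ choose a cover point $f_i$ with $\|f-f_i\|_\infty\le\epsilon$ (Definition~\ref{def:kappa}); since $\cT^h$ is nonexpansive in $\|\cdot\|_\infty$ (because $\max_a$ is $1$-Lipschitz), one gets $|\Deltah_s^{h_s}(g,f)-\Deltah_s^{h_s}(g,f_i)|\le\epsilon$ and $|\hat\epsilon_s^{h_s}(f)-\hat\epsilon_s^{h_s}(f_i)|\le 2\epsilon$, so expanding $\hatd = \Deltah^2-\hat\epsilon^2$ and using the uniform bounds $|\Deltah_s^{h_s}|,|\hat\epsilon_s^{h_s}|\le 1+\sigma$ yields $-\hatd_s^{h_s}(g,f)\le -\hatd_s^{h_s}(g,f_i)+c_0(1+\sigma)\epsilon$ for an absolute constant $c_0$; therefore $W_t(f)\le e^{c_0(1+\sigma)\gamma\epsilon t}\sum_{i\le N}W_t(f_i)$, and $\sup_f W_t(f)\le e^{c_0(1+\sigma)\gamma\epsilon t}\sum_{i\le N}W_t(f_i)$. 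Supermartingale: for each fixed $i$ and fixed $g$, conditionally on $S_{t-1}$ and $(h_t,x_t^{h_t},a_t^{h_t})$ the only fresh randomness in $\hatd_t^{h_t}(g,f_i)$ is $(r_t^{h_t},x_t^{h_t+1})$, so Lemma~\ref{lem:BE-exp} with parameter $\gamma$ (and $\gamma<2$, implied by $2\gamma\sigma^2\le1$) gives $\rE[\exp(-\gamma\hatd_t^{h_t}(g,f_i))\,|\,x_t^{h_t},a_t^{h_t}]\le\exp(-\gamma(1-\gamma/2)\BE^{h_t}(g,f_i;\cdot)^2)\le 1$; iterating over $s=t,t-1,\dots,1$ gives $\rE_{S_t}\exp(-\gamma\sum_{s\le t}\hatd_s^{h_s}(g,f_i))\le 1$, and averaging over $g\sim p_0$, $\rE_{S_t}W_t(f_i)\le 1$. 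Combining, $\rE_{S_t}\sup_f W_t(f)\le N e^{c_0(1+\sigma)\gamma\epsilon t}=\exp\!\big(\kappa'(\epsilon)+c_0(1+\sigma)\gamma\epsilon t\big)$, and collecting the three exponents (absorbing $\epsilon(\epsilon+2\sigma)+c_0(1+\sigma)\epsilon$ into $4\epsilon(\epsilon+1+\sigma)$, which holds once the per-round constants are tracked carefully and $\epsilon$ is small) finishes the proof.

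The hard part will be the $\sup_f$: both the support $\cF(\epsilon,f)$ of $\pp(\cdot|f)$ and the exponent $\sum_s\hatd_s^{h_s}(\cdot,f)$ move with $f$, so the covering step must be arranged so that, after isolating $W_t(f)$, the only residual $f$-dependence is through the smooth quantity $\sum_s\hatd_s^{h_s}(g,f)$, for which an $\epsilon$-cover costs only $O(\gamma\epsilon t)$; doing the reductions in the wrong order forces one to control how $\cF(\epsilon,f)$ itself varies, which is awkward. A secondary subtlety is the exponential-moment bookkeeping — applying Lemma~\ref{lem:BE-exp} with the right conditioning, so that only the MDP reward and transition noise at round $s$ is integrated and each round contributes a genuine factor $\le 1$, and keeping the covering loss at $O(\gamma\epsilon)$ per round rather than $O(\gamma)$.
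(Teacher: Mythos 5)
Your proof is correct and follows essentially the same route as the paper's: the same three-term decomposition of $\hat{H}_t(f)$ into $\ln W_t(f) - \ln p_0(\cF(\epsilon,f)) + \gamma\sum_s \rE_{g\sim\pp(\cdot|f)}\hatd_s^{h_s}(g,f)$, the same uniform $\epsilon(\epsilon+2\sigma)$ bound on the last term over $\cF(\epsilon,f)$, and the same cover-then-peel argument (covering loss $O(\gamma\epsilon t)$ per cover point, iterated conditional exponential-moment bound giving $\rE_{S_t}W_t(f_j)\le 1$, union over the $N$ cover points giving $\kappa'(\epsilon)$). The only looseness is the final absorption of the accumulated per-round terms into the stated constant $4\gamma\epsilon(\epsilon+1+\sigma)t$, which you flag honestly and which the paper's own final display is equally cavalier about; this is immaterial since downstream uses only need the bound up to absolute constants.
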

\begin{proof}
Let
\[
A_t(g,f) = \gamma \sum_{s=1}^{t} \hatd_s^{h_s}(g,f) .
\]
Then for each $g \in \cF(\epsilon,f)$, we have:
\[
-2\gamma t \epsilon\sigma \leq A_t(g,f) \leq \gamma (\epsilon^2+2\epsilon \sigma) t .
\]
Let $f_1,\ldots,f_N$ be a cover of $\cF$ so that for any $f \in \cF$,
$\exists j$  such that $|f^h(x)-f_j^h(x)| \leq \epsilon$  for all $x$. We
know that $\ln N \leq \kappa'(\epsilon)$ (Definition~\ref{def:kappa}). This implies that for all $S_t$, $\exists j\in[N]$:
\[
 -A_t(g,f) \leq -A_t(g,f_j) +
  \gamma  (\epsilon^2+2\epsilon(1+\sigma)) t .
\]
It follows that with $p_0(g)=\prod_{h=1}^H p_0^h(g^h)$, we obtain
\begin{align*}
&\ln \rE_{S_{t}} \sup_{f \in \cF}\rE_{g \sim p_0}
\exp\left(-A_t^h(g,f)\right) \\
\leq &\gamma\epsilon t(\epsilon+2+2\sigma) +  \ln \rE_{S_{t}} \sup_j   \rE_{g \sim p_0}
\exp\left(-\gamma \sum_{s=1}^{t} \hatd_s^{h_s}(g,f_j) \right) \\
  \leq &\gamma \epsilon t(\epsilon+2+2\sigma) +    \ln \sum_j 
        \rE_{g \sim p_0}\rE_{S_{t}}
 \exp\left(-\gamma\sum_{s=1}^t \hatd_s^{h_s}(g,f_j)\right) \\
\stackrel{(a)}{\leq} &\gamma\epsilon t(\epsilon+2+2\sigma) +   \ln \sum_j \rE_{g \sim p_0}
 \rE_{S_t}\exp\left(-\gamma(1-2\gamma \sigma^2)\BE^h(g,f_j,x_t^{h_t},a_t^{h_t})^2 -\gamma \sum_{s=1}^{t-1}
       \hatd_s^{h_s}(g,f_j) \right) \\
       \leq &\gamma \epsilon t(\epsilon+2+2\sigma) +    \ln \sum_j 
        \rE_{g \sim p_0}\rE_{S_{t}}
 \exp\left(-\gamma\sum_{s=1}^{t-1} \hatd_s^{h_s}(g,f_j)\right) \\
       \leq &\cdots \\
  \leq & \gamma\epsilon t(\epsilon+2+2\sigma) +  \ln N .
\end{align*}
The first inequality used covering property.  Inequality (a) uses the last bound in Lemma~\ref{lem:BE-exp}. 
We thus have
\begin{align*}
     &\rE_{S_t}\sup_{f\in\cF}\hat{H}_t(f) \leq  \ln \rE_{S_t}\sup_{f\in\cF}\exp(\hat{H}_t(f))\\
     =&\ln \rE_{S_t}\sup_{f\in\cF}\exp\left(\rE_{g \sim \pp(\cdot|f)}\ln \frac{\pp(g|f)}{p(g|f,S_t)}\right) \\
     = & \ln \rE_{S_t}\sup_{f\in\cF}\exp\left[\rE_{g \sim \pp(\cdot|f)} \ln \frac{p_0(g)}{p_0(g)\exp(-A_t(g,f))}
     + \ln \frac{1}{p_0(\cF(\epsilon,f))} +  \ln \rE_{g \sim p_0}
\exp\left(-A_t(g,f)\right)\right]\\
\leq &  \gamma (\epsilon^2+2\epsilon \sigma)  t
     +  \kappa(\epsilon) + 2 \gamma \epsilon t(\epsilon+2+2\sigma) + \kappa'(\epsilon) .
\end{align*}
The first inequality used Jensen's inequality.
This implies the result. 
\end{proof}

We give an upper bound on the log-partition function $\hatc_t^h(f)$. Note that this is the part of our analysis which relies on $\eta$ being smaller than $\gamma$, and leads to a loss in rates. It is possible to sharpen this analysis through a more careful self-bounding argument. The calculation will be more complex, and we leave the 
refined analysis to the next section.

\begin{lemma}
Let
\[
\alpha'' = 2(\lambda/\eta+(1+\sigma)^2) \exp(2\lambda+2\eta(1+\sigma)^2) .
\]
Then
\begin{align*}
\rE \; \sum_{t=1}^T \rE_{f|S_{t-1}} \hatc_t^{h_t}(f) \leq & \epsilon T (\epsilon+1+\sigma) (6+4\eta \alpha''T) +  \eta T (1+\sigma)^2 \alpha''
+  (\gamma^{-1} +\eta\alpha''T/\gamma) (\kappa(\epsilon)+\kappa'(\epsilon)) .
\end{align*}
\label{lem:hatc}
\end{lemma}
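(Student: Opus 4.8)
The plan is to rewrite $\hatc_t^{h_t}(f)$ as the exact one–step increment of an \emph{inner} relative–entropy potential, sum over $t$, and deal with the drift of the outer posterior $p(f|S_{t-1})$ by summation by parts. Concretely, write $A_t(g,f)=\gamma\sum_{s\le t}\hatd_s^{h_s}(g,f)$, so that $p(g|f,S_t)\propto p_0(g)e^{-A_t(g,f)}$, and let $\hat{H}_t(f)=\rE_{g\sim\pp(\cdot|f)}\ln\frac{\pp(g|f)}{p(g|f,S_t)}$ be the functional of Lemma~\ref{lem:hatH}, with $\pp(\cdot|f)$ the restriction of $p_0$ to $\cF(\epsilon,f)$ (nonempty by completeness). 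Expanding the definitions of $\hatc_t^{h}(f)$ and of $p(g|f,S_t)$, and using that $A_t(g,f)=A_{t-1}(g,f)+\gamma\hatd_t^{h_t}(g,f)$, one gets for every $f$ the identity
\[
\hatc_t^{h_t}(f)=\rE_{g\sim\pp(\cdot|f)}\hatd_t^{h_t}(g,f)-\tfrac1\gamma\big(\hat{H}_t(f)-\hat{H}_{t-1}(f)\big).
\]
Since $g\in\cF(\epsilon,f)$ forces $|\BE^{h_t}(g,f;\cdot)|\le\epsilon$, the first term is at most $\epsilon(\epsilon+2\sigma)$ pointwise, so it contributes at most $\epsilon(\epsilon+2\sigma)T$ to $\rE\sum_t\rE_{f|S_{t-1}}\hatc_t^{h_t}(f)$, which is absorbed into the $\epsilon T(\epsilon+1+\sigma)(6+\dots)$ term.

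The work is in the entropy increment: because the averaging distribution $p(f|S_{t-1})$ changes with $t$, the sum $\sum_t\rE_{f|S_{t-1}}(\hat{H}_t(f)-\hat{H}_{t-1}(f))$ does not telescope, and I would instead write it by summation by parts as $\rE_{f|S_{T-1}}\hat{H}_T(f)-\rE_{f|S_0}\hat{H}_0(f)-\sum_{t<T}\big(\rE_{f|S_t}-\rE_{f|S_{t-1}}\big)\hat{H}_t(f)$. The boundary terms are cheap: $\hat{H}_T\ge0$ (dropped after the sign flip), and $\hat{H}_0(f)=-\ln p_0(\cF(\epsilon,f))\le\kappa(\epsilon)$ (bounded by $\kappa(\epsilon)+\kappa'(\epsilon)$ if one simply reuses Lemma~\ref{lem:hatH} at $t=0$). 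For the drift term $\big(\rE_{f|S_t}-\rE_{f|S_{t-1}}\big)\hat{H}_t(f)$ I would apply the identity once more to split $\hat{H}_t(f)=\hat{H}_{t-1}(f)+\gamma\rE_{g\sim\pp(\cdot|f)}\hatd_t^{h_t}(g,f)-\gamma\hatc_t^{h_t}(f)$ into the nonnegative piece $\hat{H}_{t-1}(f)$ and a piece of absolute value $\le\gamma(1+\sigma)^2$ (using Lemma~\ref{lem:BE-exp} and $|\hatc_t^{h_t}(f)|\le(1+\sigma)^2$), and then bound the change of measure: the Radon--Nikodym derivative $dp(\cdot|S_t)/dp(\cdot|S_{t-1})$ lies in $[e^{-\omega_t},e^{\omega_t}]$, where $\omega_t$ is the oscillation over $f$ of the outer–update exponent $\lambda\Delta f(x_t^1)-\eta(\hatd_t^{h_t}(f,f)-\hatc_t^{h_t}(f))$, so $\omega_t\le 2\eta(1+\sigma)^2$ up to the optimism contribution $\lambda$ (which is lower order in the parameter regime of interest). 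This yields $\big(\rE_{f|S_t}-\rE_{f|S_{t-1}}\big)\hat{H}_t(f)\le(e^{2\eta(1+\sigma)^2}-1)\,\rE_{f|S_{t-1}}\hat{H}_{t-1}(f)+\gamma(1+\sigma)^2(e^{2\eta(1+\sigma)^2}-1)$, and $e^{2\eta(1+\sigma)^2}-1\le\eta\alpha''$ with $\alpha''=2(1+\sigma)^2 e^{2\eta(1+\sigma)^2}$.

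To finish, bound $\rE\,\rE_{f|S_{t-1}}\hat{H}_{t-1}(f)\le\rE\sup_f\hat{H}_{t-1}(f)\le\kappa(\epsilon)+\kappa'(\epsilon)+4\gamma\epsilon(\epsilon+1+\sigma)(t-1)$ by Lemma~\ref{lem:hatH}, so summing the drift bound and dividing by $\gamma$ produces $(1+\sigma)^2\eta\alpha''T$ from the bounded piece and $\tfrac{\eta\alpha''}{\gamma}\big(T(\kappa(\epsilon)+\kappa'(\epsilon))+2\gamma\epsilon(\epsilon+1+\sigma)T^2\big)$ from the $\hat{H}_{t-1}$ piece; adding the $\epsilon(\epsilon+2\sigma)T$ term, the boundary $\tfrac1\gamma(\kappa(\epsilon)+\kappa'(\epsilon))$, and absorbing constants gives exactly the claimed inequality. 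I expect the summation-by-parts / change-of-measure step to be the crux: it is precisely here that the mismatch between the inner rate $\gamma$ and the outer rate $\eta$ enters — through the $\tfrac1\gamma$ multiplying an increment whose drift is only $O(\eta)$ — producing the $\eta/\gamma$ factor the authors flag as the source of the loss in rate, and it is essential to control the drift \emph{multiplicatively} by $\hat{H}_{t-1}$ rather than by a crude worst-case $\sup_f\hat{H}_t$, since the latter would give a useless $\Theta(T/\gamma)$ bound.
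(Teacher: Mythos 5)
Your proof is correct and is essentially the paper's argument: the same identity expressing $\hatc_t^{h_t}(f)$ as $\rE_{g\sim\pp(\cdot|f)}\hatd_t^{h_t}(g,f)$ minus the increment of the inner potential $\hat{H}_t(f)$, the same multiplicative bound $\eta\alpha''$ on the drift of the outer posterior applied separately to the nonnegative piece $\hat{H}_{t-1}(f)$ and an $\order(\gamma(1+\sigma)^2)$ remainder, and the same invocation of Lemma~\ref{lem:hatH}. The only difference is presentational: you handle the changing measure via Abel summation on $\sum_t\rE_{f|S_{t-1}}(\hat{H}_t-\hat{H}_{t-1})$, while the paper arranges the increment as $(1+\eta\alpha'')\rE_{p_{t-1}}\hat{H}_{t-1}-\rE_{p_t}\hat{H}_t$ so that it telescopes directly; the two bookkeepings are algebraically equivalent.
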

\begin{proof}
We know that 
\[
p(f|S_t)= p(f|S_{t-1}) \exp(\lambda \Delta f(x_t^1)-\eta [(\hatd_t^{h_t}(f,f)-\hatc_t^{h_t}(f))-\hatc_t]).
\]
Since
\[
|\lambda \Delta f(x_t^1)-\eta[(\hatd_t^{h}(f,f)-\hatc_t^{h}(f))-\hatc_t]| \leq 2 \lambda + 2 \eta (1+\sigma)^2 ,
\]
by Lemma~\ref{lem:BE-exp}, we obtain by using $|\exp(z)-1| \leq |z|\exp(|z|)$ with $z= \lambda \Delta f(x_t^1)-\eta[\hatd_t^{h}(f,f)-\hatc_t^{h}(f))-\hatc_t]$
\begin{equation}
  \bigg|\exp(\lambda \Delta f(x_t^1)-\eta
    [(\hatd_t^{h_t}(f,f)-\hatc_t^{h_t}(f))-\hatc_t])-1\bigg|
  \leq \eta \alpha'' .
  \label{eq:hatd-hatc}
\end{equation}
Let $p_t=p(f|S_t)$.
We have 
\begin{align*}
&\rE_{f\sim p_t} \hat{H}_t(f) - \rE_{f\sim p_{t-1}}\hat{H}_{t-1}(f)\\ =& \rE_{f\sim p_t-p_{t-1}} \rE_{g \sim \pp(\cdot|f)} \ln \frac{\pp(\cdot|f)}{p(g|f,S_{t-1})}
-\rE_{f\sim p_t-p_{t-1}} 
\rE_{g \sim \pp(\cdot|f)} \ln \frac{p(g|f,S_t)}{p(g|f,S_{t-1})}
\\
&- \rE_{f\sim p_{t-1}} \rE_{g \sim \pp(\cdot|f)} \ln \frac{p(g|f,S_{t})}{p(g|f,S_{t-1})}.
\end{align*}
Now we observe that 
\[
    p(g|f,S_t) = p(g|f,S_{t-1})\exp(-\gamma(\hatd_t^{h_t}(g,f) - \hatc_t^{h_t}(f))),
\]
which allows us to further rewrite
\begin{align*}
&\rE_{f\sim p_t} \hat{H}_t(f) - \rE_{f\sim p_{t-1}}\hat{H}_{t-1}(f)\\
=&\rE_{f\sim p_{t-1}} 
\left[e^{\lambda \Delta f(x_t^1)-\eta [(\hatd_t^{h_t}(f,f)-\hatc_t^{h_t}(f))-\hatc_t^{h_t}]}-1\right]
\rE_{g\sim \pp(\cdot|f)} \ln \frac{\pp(g|f)}{p(g|f,S_{t-1})}\\
&-\gamma \rE_{f\sim p_{t-1}}
\left[e^{\lambda \Delta f(x_t^1)-\eta [(\hatd_t^{h_t}(f,f)-\hatc_t^{h_t}(f))-\hatc_t^{h_t}]}-1\right]
\rE_{g\sim\pp(\cdot|f)}[-\hatd_t^{h_t}(g,f)+\hatc_t^{h_t}(f)] \\
&- \gamma \rE_{f\sim p_{t-1}} \rE_{g\sim\pp(\cdot|f)} [-\hatd_t^{h_t}(g,f)+\hatc_t^{h_t}(f)]\\
\leq & \eta \alpha'' \rE_{f\sim p_{t-1}}\hat{H}_{t-1}(f)+  \eta\gamma (1+\sigma)^2 \alpha'' - \gamma   \rE_{f|S_{t-1}} \rE_{g\sim\pp(\cdot|f)} [-\hatd_t^{h_t}(g,f)+\hatc_t^{h_t}(f)] .
\end{align*}
The last inequality used $|\hatd_t^{h_t}(g,f)+\hatc_t^{h_t}(f)|\leq (1+\sigma)^2$, along with our earlier inequality~\eqref{eq:hatd-hatc} and the observation that $\rE_{g\sim \pp(\cdot|f)} \ln \frac{\pp(g|f)}{p(g|f,S_{t-1})}$ is a KL divergence, and hence non-negative.
By rearranging the terms, we obtain
\begin{align*}
\rE_{f\sim p_{t-1}} \hatc_t^{h_t}(f) \leq &
\rE_{f\sim p_{t-1}} \rE_{g\sim\pp(\cdot|f)} \hatd_t^{h_t}(g,f) 
+  \eta (1+\sigma)^2 \alpha''
 + \frac{1}{\gamma} [(1+\eta\alpha'')\rE_{f\sim p_{t-1}}\hat{H}_{t-1}(f) - \rE_{f\sim p_t}\hat{H}_{t}(f)] .
\end{align*}
Note that for $g \in \cF(\epsilon,f)$, we have $\hatd_t^{h_t}(g,f) \leq
\epsilon^2+2\epsilon\sigma$. By summing over $t$, we obtain
\begin{align*}
\rE \; \sum_{t=1}^T \rE_{f|S_{t-1}} \hatc_t^{h_t}(f) \leq & \epsilon(\epsilon+2\sigma)T +  \eta T (1+\sigma)^2 \alpha'' + \frac{1}{\gamma} (1+ \eta \alpha''T) \rE\sup_{t \leq T,f\in\cF} \hat{H}_t(f) .
\end{align*}
We can now obtain the desired bound using Lemma~\ref{lem:hatH}.
\end{proof}

We are now ready to prove Proposition~\ref{lem:inner-slow}.

\mypar{Proof of Proposition~\ref{lem:inner-slow}}~\\

\begin{proof}

The proof essentially follows from Lemma~\ref{lem:hatc}. We have from Lemma~\ref{lem:hatdc0}
  \begin{align*}
    &\sum_{t=1}^T \rE \rE_{f,g|S_{t-1}} \BE^{h_t}(g,f;x_t^{h_t},a_t^{h_t})^2\\ \leq&  \alpha'  \sum_{t=1}^T \rE \rE_{f,g|S_{t-1}} \hatc_t^{h_t}\\
\leq& \alpha' 
  \left[   \epsilon T (\epsilon+1+\sigma) (6+4\eta \alpha''T) +  \eta T (1+\sigma)^2 \alpha'' +  (\gamma^{-1} +\eta\alpha''T/\gamma) (\kappa(\epsilon)+\kappa'(\epsilon))\right]\\
    \leq&    2 
  \left[   \epsilon T (\epsilon+3) (6+10 \lambda T + 90\eta T) +  25(\lambda +9 \eta) T +  (1/\gamma)(1 + 3\lambda T + 25 \eta T) (\kappa(\epsilon)+\kappa'(\epsilon))\right]
  \end{align*}
  The last inequality used our assumptions on the various parameters, which imply that $\alpha <0.25$, $\alpha'<2$, and $\alpha'' < 2.5((\lambda/\eta)+9)$.
\end{proof}

\subsection{Proof of Proposition~\ref{lem:inner}}
\label{sec:online-inner}

In the following, we derive a refinement of Lemma~\ref{lem:hatc}, which allows us to prove Proposition~\ref{lem:inner}.
 In order to avoid complex constant calculations, we will use the $\order(\cdot)$ notation that hides absolute constants. Here we take $\sigma=\order(1), \eta=\order(1), \gamma=\order(1)$, and $\epsilon=\order(1)$. Consequently, we also have that $\alpha = \order(1)$ and $\alpha' = \order(\gamma) = \order(1)$. We also repeatedly use that for $b = \order(1), \exp(b) \leq 1 + \theta b$ for $\theta \leq b$ by the intermediate value theorem, so that $\exp(b) - 1 = \order(b)$ when $b = \order(1)$. In particular, for the function $\psi(z)$ defined at the start of this section, we have
\begin{equation}
    z^2\psi(z) = e^z - z - 1 = \order(z^2), \quad \mbox{when $z = \order(1)$}.
    \label{eq:psi-bound}
\end{equation}

We have the following high probability bound for the entropy considered in Lemma~\ref{lem:hatH-prob}.
\begin{lemma}
Under the Assumption of Lemma~\ref{lem:hatH}, for each $t$, event $A_t$, defined below, holds 
with probability at least $1-1/T^2$ over $S_t$:
\[
A_t = \left\{ \sup_{f\in\cF} \hat{H}_t(f) \leq \kappa(\epsilon) +
\kappa'(\epsilon) + 6\gamma \epsilon (\epsilon+1+\sigma) t +  2\ln T \right\}.
\]
\label{lem:hatH-prob}
\end{lemma}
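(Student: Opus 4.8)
The plan is to upgrade the in-expectation bound of Lemma~\ref{lem:hatH} to a high-probability statement by a standard supermartingale argument, controlling the same exponential moment generating function that appeared there but now keeping the randomness rather than integrating it out. Recall from the proof of Lemma~\ref{lem:hatH} that $\hat H_t(f) = \rE_{g\sim\pp(\cdot|f)}\ln\frac{\pp(g|f)}{p(g|f,S_t)}$ can be written as $\rE_{g\sim\pp(\cdot|f)}[-A_t(g,f)] - \ln p_0(\cF(\epsilon,f)) + \ln\rE_{g\sim p_0}\exp(-A_t(g,f))$, where $A_t(g,f) = \gamma\sum_{s=1}^t \hatd_s^{h_s}(g,f)$, and that on $\cF(\epsilon,f)$ we have $-A_t(g,f)\le \gamma(\epsilon^2+2\epsilon\sigma)t$. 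So the first two terms are already deterministically bounded by $\gamma(\epsilon^2+2\epsilon\sigma)t + \kappa(\epsilon)$; the only term carrying genuine randomness through $S_t$ is $\ln\rE_{g\sim p_0}\exp(-A_t(g,f))$, and after the net argument (paying $\kappa'(\epsilon)$ and a $\gamma\epsilon(\epsilon+2+2\sigma)t$ slack to pass from $f$ to its cover element $f_j$) it suffices to control $\sup_j \ln\rE_{g\sim p_0}\exp(-A_t(g,f_j))$.

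First I would define, for each fixed cover point $f_j$, the random variable $W_t^{(j)} = \rE_{g\sim p_0}\exp(-A_t(g,f_j))$ and show that $W_t^{(j)}$ is a supermartingale (or nearly so) with respect to the filtration generated by $S_t$. Indeed, $W_t^{(j)} = \rE_{g\sim p_0}\exp(-\gamma\hatd_t^{h_t}(g,f_j))\exp(-A_{t-1}(g,f_j))$, and taking the conditional expectation over $(x_t^{h_t+1}, r_t^{h_t})$ given $x_t^{h_t}, a_t^{h_t}$ and using the last inequality of Lemma~\ref{lem:BE-exp} (with $c=\gamma$, $2\gamma\sigma^2\le 1$) gives $\rE[\exp(-\gamma\hatd_t^{h_t}(g,f_j))\mid x_t^{h_t},a_t^{h_t}] \le \exp(-\gamma(1-\gamma/2)\BE^{h_t}(g,f_j;x_t^{h_t},a_t^{h_t})^2) \le 1$. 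Hence $\rE[W_t^{(j)}\mid S_{t-1}] \le W_{t-1}^{(j)}$, i.e. $W_t^{(j)}$ is a nonnegative supermartingale with $W_0^{(j)}=1$. By Ville's/Markov's inequality for nonnegative supermartingales, $\P(\sup_{s\le t} W_s^{(j)} \ge e^{c}) \le e^{-c}$; taking $c = 3\ln T$ and a union bound over the $N=e^{\kappa'(\epsilon)}$ cover points (note $\ln N = \kappa'(\epsilon)$, and we may assume $\kappa'(\epsilon)\le \ln T$ in the regime of interest, or simply absorb it, since the statement already has a $\kappa'(\epsilon)$ term) yields $\sup_j \ln W_t^{(j)} \le 3\ln T$ with probability $\ge 1 - N T^{-3} \ge 1 - T^{-2}$.

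Assembling the pieces: on this event, for every $f\in\cF$, choosing the cover point $f_j$ with $\|f-f_j\|_\infty\le\epsilon$, we get $\ln\rE_{g\sim p_0}\exp(-A_t(g,f)) \le \gamma\epsilon(\epsilon+2+2\sigma)t + \ln W_t^{(j)} \le \gamma\epsilon(\epsilon+2+2\sigma)t + 3\ln T$, and combining with the deterministic bound $\gamma(\epsilon^2+2\epsilon\sigma)t + \kappa(\epsilon) + \kappa'(\epsilon)$ on the remaining terms gives $\sup_{f\in\cF}\hat H_t(f) \le \kappa(\epsilon)+\kappa'(\epsilon) + 2\gamma\epsilon(\epsilon+2+2\sigma)t + 3\ln T \le \kappa(\epsilon)+\kappa'(\epsilon) + 6\gamma\epsilon(\epsilon+1+\sigma)t + 3\ln T$ (using $\epsilon+2+2\sigma \le 3(\epsilon+1+\sigma)$ since $\sigma\ge 1$, which also matches the constant in Lemma~\ref{lem:hatH}), which is the claimed event $A_t$ up to the harmless constant in front of $\ln T$; if the stated $2\ln T$ is to be matched exactly one instead takes $c = 2\ln T$ in Ville's inequality and uses $N\le T$ in the regime where $\kappa'(\epsilon)\le\ln T$, or argues directly with a slightly sharper union bound.

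The main obstacle I anticipate is purely bookkeeping rather than conceptual: making sure the net/cover slack and the supermartingale tail combine to give exactly the stated constants ($6\gamma\epsilon(\epsilon+1+\sigma)t$ and $2\ln T$), since one has to be slightly careful about whether the $\kappa'(\epsilon) = \ln N$ cost from the union bound is charged to the deterministic entropy term or to the probability budget, and about the sign conventions in passing from $f$ to $f_j$ (the covering slack in Lemma~\ref{lem:hatH} is applied both inside $\rE_{g\sim\pp}$ and inside $\ln\rE_{g\sim p_0}$, doubling the $\gamma\epsilon t$ factor — the reason for the factor $6$ rather than $4$). None of these require new ideas beyond the supermartingale property of $W_t^{(j)}$, which is an immediate consequence of Lemma~\ref{lem:BE-exp}; the rest mirrors the calculation already carried out in the proof of Lemma~\ref{lem:hatH} with the final expectation replaced by a pointwise-on-a-good-event bound.
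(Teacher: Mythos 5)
Your proof is essentially correct, but it takes a noticeably longer route than the paper and misses the shortcut that Lemma~\ref{lem:hatH} was set up to provide. The paper's entire proof is: Lemma~\ref{lem:hatH} already bounds the exponential moment of the supremum, $\ln \rE_{S_t}\sup_{f}\exp(\hat{H}_t(f)) \leq \kappa(\epsilon)+\kappa'(\epsilon)+6\gamma\epsilon(\epsilon+1+\sigma)t$, so a single application of Markov's inequality to the nonnegative random variable $\sup_f \exp(\hat{H}_t(f))$ with threshold $e^{2\ln T}$ times its mean bound gives the event $A_t$ with probability $1-1/T^2$; the $\kappa'(\epsilon)=\ln N$ cost of the cover is already paid inside that moment bound. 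You instead re-open the proof of Lemma~\ref{lem:hatH}, isolate the partition functions $W_t^{(j)}=\rE_{g\sim p_0}\exp(-A_t(g,f_j))$, verify they are nonnegative supermartingales via the last inequality of Lemma~\ref{lem:BE-exp} (which is indeed exactly the mechanism behind inequality (a) in that proof), and finish with Ville's inequality plus a union bound over the cover. This works, and choosing the threshold $c=\kappa'(\epsilon)+2\ln T$ (rather than your $c=3\ln T$ with the extraneous side condition $\kappa'(\epsilon)\leq\ln T$) reproduces the stated constants exactly; it also yields a time-uniform guarantee over $s\leq t$ that the paper does not need. Two harmless slips to fix: the first term in your decomposition of $\hat{H}_t(f)$ should be $\rE_{g\sim\pp(\cdot|f)}[+A_t(g,f)]$, not $-A_t(g,f)$ (its bound $\gamma(\epsilon^2+2\epsilon\sigma)t$ on $\cF(\epsilon,f)$ is what you used, so nothing downstream changes), and the definitive way to hit the stated $2\ln T$ is to charge $\ln N$ to the Ville threshold rather than to the failure probability. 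Neither affects correctness.
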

\begin{proof}
From Lemma~\ref{lem:hatH}, we obtain for each $t \leq T$:
\[
\ln \rE_{S_t} \; \sup_{f\in\cF} \exp(\hat{H}_t(f)) \leq
\kappa(\epsilon) +
\kappa'(\epsilon) + 6\gamma \epsilon (\epsilon+1+\sigma) t .
\]
Using Markov's inequality, we obtain for each $t \leq T$, with probability $1-1/T^2$, 
\[
\sup_{f} \exp(\hat{H}_t(f)) \leq
\kappa(\epsilon) +
\kappa'(\epsilon) + 6\gamma \epsilon (\epsilon+1+\sigma) t + 2 \ln T .
\]
This leads to the bound.
\end{proof}

We also have the following uniform bound for the entropy considered in Lemma~\ref{lem:hatH-prob}.
\begin{lemma}
Under the Assumption of Lemma~\ref{lem:hatH}, for all $t \leq T$ and $f$:
\[
\hat{H}_t(f) = \order(\kappa(\epsilon) + T) .
\]
\label{lem:hatH-uniform}
\end{lemma}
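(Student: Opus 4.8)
This is the deterministic, worst-case counterpart of the high-probability bound in Lemma~\ref{lem:hatH-prob}, so the plan is to avoid concentration entirely and instead bound $\hat H_t(f)$ crudely, term by term, using only the pointwise control of $\hatd_s^{h_s}$ already available from Lemma~\ref{lem:BE-exp}. First I would unfold the inner posterior in the Gibbs form used in the proof of Lemma~\ref{lem:hatH}: with $A_t(g,f) = \gamma\sum_{s=1}^t \hatd_s^{h_s}(g,f)$ and the normalizer $Z_t(f) = \rE_{g'\sim p_0}\exp(-A_t(g',f))$, we have $p(g\mid f,S_t) = p_0(g)\exp(-A_t(g,f))/Z_t(f)$ (the $\hat\epsilon$-terms in $\Deltah^2 = \hatd + \hat\epsilon^2$ cancel in the normalization). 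Hence for every $g$ in the support of $\pp(\cdot\mid f)$, i.e.\ every $g\in\cF(\epsilon,f)$,
\[
\ln\frac{\pp(g\mid f)}{p(g\mid f,S_t)} = -\ln p_0(\cF(\epsilon,f)) + A_t(g,f) + \ln Z_t(f),
\]
and after taking $\rE_{g\sim\pp(\cdot\mid f)}$ it suffices to bound these three quantities (the first being constant in $g$).

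The first term is at most $\kappa(\epsilon)$ by Definition~\ref{def:kappa}. For the second, I would use that $g\in\cF(\epsilon,f)$ forces $|\BE^h(g,f;x,a)|\le\epsilon$ for all $x,a,h$, so by the identity $\hatd_s^{h_s}(g,f)=\BE^{h_s}(g,f;x_s^{h_s},a_s^{h_s})^2+2\hat\epsilon_s^{h_s}(f)\BE^{h_s}(g,f;x_s^{h_s},a_s^{h_s})$ from Lemma~\ref{lem:BE-exp} together with $|\hat\epsilon_s^{h_s}(f)|\le\sigma$ we get $\hatd_s^{h_s}(g,f)\le\epsilon^2+2\epsilon\sigma$, whence $A_t(g,f)\le\gamma t(\epsilon^2+2\epsilon\sigma)$. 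For the third term I would observe that the \emph{upper} bound on $\ln Z_t(f)$ needs a \emph{lower} bound on $\hatd$: Lemma~\ref{lem:BE-exp} gives $\hatd_s^{h_s}(g',f)\ge-\sigma^2$ for every $g'$, so $A_t(g',f)\ge-\gamma t\sigma^2$, hence $Z_t(f)\le e^{\gamma t\sigma^2}$ and $\ln Z_t(f)\le\gamma t\sigma^2$. Summing the three bounds, $\hat H_t(f)\le \kappa(\epsilon)+\gamma t(\epsilon^2+2\epsilon\sigma)+\gamma t\sigma^2$, and since $\gamma,\epsilon,\sigma=\order(1)$ and $t\le T$ this is $\order(\kappa(\epsilon)+T)$, as claimed.

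There is no genuine obstacle here — the only point requiring a moment's attention is keeping the directions of the inequalities straight: the $A_t(g,f)$ term uses the upper bound $\hatd_s^{h_s}(g,f)\le\epsilon^2+2\epsilon\sigma$, which is valid only on $\cF(\epsilon,f)$, whereas the $\ln Z_t(f)$ term uses the global lower bound $\hatd_s^{h_s}(g',f)\ge-\sigma^2$; both are already recorded in Lemma~\ref{lem:BE-exp}. The resulting bound is deliberately loose (linear in $T$), since it is invoked only as a fallback on the low-probability event complementary to $A_t$ in Lemma~\ref{lem:hatH-prob}.
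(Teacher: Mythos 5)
Your proof is correct and follows essentially the same route as the paper: both decompose $\hat H_t(f)$ into the prior-mass term (bounded by $\kappa(\epsilon)$), the $\rE_{g\sim\pp}A_t(g,f)$ term, and the log-normalizer $\ln\rE_{g\sim p_0}\exp(-A_t(g,f))$, and bound the latter two by $\order(T)$ via pointwise control of $\hatd_s^{h_s}$. Your version is merely a bit more explicit than the paper's (which just invokes $|A_t(g,f)|=\order(T)$ globally), and your accounting of which inequality direction is needed where is accurate.
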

\begin{proof}
We note that in the proof of Lemma~\ref{lem:hatH}, we can simply bound
\[
|A_t(g,f)| = \order(T) .
\]
We thus have
\begin{align*}
     \hat{H}_t(f) 
     =&\rE_{g \sim \pp(\cdot|f)}\ln \frac{\pp(g|f)}{p(g|f,S_t)}\\
     = & \rE_{g \sim \pp(\cdot|f)} \ln \frac{p_0(g)}{p_0(g)\exp(-A_t(g,f))}
     + \ln \frac{1}{p_0(\cF(\epsilon,f))} +  \ln \rE_{g \sim p_0}
\exp\left(-A_t(g,f)\right)\\
= &    \kappa(\epsilon) + \order(T) .
\end{align*}
This implies the result. 
\end{proof}

\begin{lemma}
We have
\[
\rE_{x_t,a_t,r_t|h_t=h} |\hatc_t^h(f)|^2 = \order(\rE_{x_t,a_t,r_t|h_t=h} \hatc_t^h(f)) .
\]
\label{lem:hatc2-1}
\end{lemma}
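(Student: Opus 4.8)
The naive estimate $|\hatc_t^h(f)|^2 \leq \order(1)\cdot|\hatc_t^h(f)|$ (valid since $-\sigma^2 \leq \hatc_t^h(f) \leq 1+2\sigma$) is not enough here, because it only controls $\rE|\hatc_t^h(f)|$, whereas the target involves the \emph{signed} quantity $\rE\hatc_t^h(f)$, and $\hatc_t^h(f)$ genuinely takes negative values. The plan is instead to route both sides through the per-round squared Bellman residual $\rE_{g|f,S_{t-1}}\BE^{h}(g,f;x_t^h,a_t^h)^2$: I will bound $|\hatc_t^h(f)|^2$ from above by a constant times this quantity, and then invoke the last display of Lemma~\ref{lem:hatdc0}, namely $\rE_{x_t,a_t,r_t|h_t=h}\rE_{g|f,S_{t-1}}\BE^{h}(g,f;x_t^h,a_t^h)^2 \leq \alpha'\,\rE_{x_t,a_t,r_t|h_t=h}\hatc_t^h(f)$, together with $\alpha'=\order(1)$, to close the argument.

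For the upper bound, I would first use the two-sided bound in Lemma~\ref{lem:hatdc0} to write $\hatc_t^h(f) = \hatd_t^h(f) - \xi_t$ with $0 \leq \xi_t \leq \gamma\psi(\gamma\sigma^2)\,\rE_{g|f,S_{t-1}}\hatd_t^h(g,f)^2$. Then $|\hatc_t^h(f)|^2 \leq 2\,\hatd_t^h(f)^2 + 2\,\xi_t^2$; for the first term Jensen's inequality applied to $\hatd_t^h(f) = \rE_{g|f,S_{t-1}}\hatd_t^h(g,f)$ (see~\eqref{eq:delta-defs}) gives $\hatd_t^h(f)^2 \leq \rE_{g|f,S_{t-1}}\hatd_t^h(g,f)^2$; for the second, the bound $-\sigma^2 \leq \hatd_t^h(g,f) \leq 1+2\sigma$ of Lemma~\ref{lem:BE-exp} makes $\rE_{g|f,S_{t-1}}\hatd_t^h(g,f)^2 = \order(1)$, hence $\xi_t^2 = \order\!\big((\rE_{g|f,S_{t-1}}\hatd_t^h(g,f)^2)^2\big) = \order\!\big(\rE_{g|f,S_{t-1}}\hatd_t^h(g,f)^2\big)$, where I also use $\gamma\psi(\gamma\sigma^2)=\order(1)$ from~\eqref{eq:psi-bound}. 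Altogether this yields the pointwise bound $|\hatc_t^h(f)|^2 = \order\!\big(\rE_{g|f,S_{t-1}}\hatd_t^h(g,f)^2\big)$.

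I would then take the conditional expectation over $(x_t^{h+1}, r_t^h)$ given $(x_t^h, a_t^h)$ on both sides. Since the inner posterior $p(g|f,S_{t-1})$ is measurable with respect to $S_{t-1}$ and hence independent of $(x_t^{h+1},r_t^h)$, I can interchange $\rE_{g|f,S_{t-1}}$ with this conditional expectation and apply the second-moment bound $\rE[\hatd_t^h(g,f)^2\mid x_t^h,a_t^h] \leq (1+4\sigma^2)\,\BE^h(g,f;x_t^h,a_t^h)^2$ of Lemma~\ref{lem:BE-exp}, obtaining $\rE[\,|\hatc_t^h(f)|^2 \mid x_t^h,a_t^h\,] = \order\!\big(\rE_{g|f,S_{t-1}}\BE^h(g,f;x_t^h,a_t^h)^2\big)$. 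Integrating over $(x_t^h,a_t^h)$ conditioned on $h_t=h$ (the right-hand side does not involve $(x_t^{h+1},r_t^h)$, so this equals $\rE_{x_t,a_t,r_t|h_t=h}$ of the same expression) and applying the last inequality of Lemma~\ref{lem:hatdc0} then gives $\rE_{x_t,a_t,r_t|h_t=h}|\hatc_t^h(f)|^2 = \order\!\big(\rE_{x_t,a_t,r_t|h_t=h}\hatc_t^h(f)\big)$.

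The one nontrivial step is the first bound: recognizing that $|\hatc_t^h(f)|^2$ must be compared to the \emph{second moment} $\rE_{g|f,S_{t-1}}\hatd_t^h(g,f)^2$ rather than to $\hatc_t^h(f)$ itself, which forces the $\hatc = \hatd - \xi$ decomposition and the Jensen step, and is where uniform boundedness of $\hatd_t^h(g,f)$ and $\psi(\gamma\sigma^2)=\order(1)$ are used to keep all hidden constants functions only of the $\order(1)$ parameters $\sigma,\gamma,\eta$. Everything afterwards is routine manipulation of conditional expectations together with Lemmas~\ref{lem:hatdc0} and~\ref{lem:BE-exp}.
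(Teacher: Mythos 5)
Your proposal is correct and follows essentially the same route as the paper: both arguments use the two-sided bound of Lemma~\ref{lem:hatdc0} to control $|\hatc_t^h(f)|^2$ by $\order\big(\rE_{g|f,S_{t-1}}\hatd_t^h(g,f)^2\big)$ (the paper via $|\hatc_t^h(f)|=\order(\rE_g|\hatd_t^h(g,f)|)$ followed by Jensen, you via the decomposition $\hatc=\hatd-\xi$), then pass to the squared Bellman error with the second-moment bound of Lemma~\ref{lem:BE-exp}, and close with the last inequality of Lemma~\ref{lem:hatdc0} and $\alpha'=\order(1)$. The only difference is cosmetic bookkeeping in the first step.
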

\begin{proof}
From Lemma~\ref{lem:hatdc0}, along with~\eqref{eq:psi-bound}, we obtain
\begin{align*}
|\hatc_t^h(f)| =& \order( |\rE_{g|f,S_{t-1}} \hatd_t^h(g,f)| + \gamma \rE_{g|f,S_{t-1}} \hatd_t^h(g,f)^2)\\
=& \order( \rE_{g|f,S_{t-1}} |\hatd_t^h(g,f)|) . \tag{since $\gamma = \order(1)$ by assumption}
\end{align*}
It follows that 
\begin{align*}
\rE_{x_t,a_t,r_t|h_t=h} |\hatc_t^h(f)|^2 =& \rE_{x_t,a_t,r_t|h_t=h} 
\order( \rE_{g|f,S_{t-1}} |\hatd_t^h(g,f)|^2) \\
=& \rE_{x_t,a_t,r_t|h_t=h} 
\order( \rE_{g|f,S_{t-1}} \BE^h(g,f;x_t^h,a_t^h)^2) \tag{Lemma~\ref{lem:BE-exp}} \\
=& \rE_{x_t,a_t,r_t|h_t=h} 
\order( \rE_{g|f,S_{t-1}} \hatc_t^h(f)) ). \tag{Lemma~\ref{lem:hatdc0}}
\end{align*}
This proves the desired result. 
\end{proof}

\begin{lemma}
We have
\begin{align*}
\rE_{x_t,a_t,r_t|h_t=h} |\hatc_t^{h_t}|^2 =&  \rE_{x_t,a_t,r_t|h_t=h} \rE_{f|S_{t-1}} \order((\lambda/\eta)^2 |\Delta f(x_t^1)|+ |\BE^{h}(f,f;x_t^h,a_t^h)|^2 + \hatc_{t}^{h}(f)) .
\end{align*}
\label{lem:hatc2-2}
\end{lemma}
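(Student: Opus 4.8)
\textbf{Proof plan for Lemma~\ref{lem:hatc2-2}.} The plan is to reduce the outer normalizer $\hatc_t^{h_t}$ to the per-function quantity $\hatd_t^{h_t}(f,f)-\hatc_t^{h_t}(f)$ by a second-order (self-bounding) expansion of the exponential that defines it, and then dispatch the two resulting pieces separately: the $\hatd_t^{h_t}(f,f)$ piece via the explicit identity for $\hatd_t^h(f,f)$ in Lemma~\ref{lem:BE-exp}, and the $\hatc_t^{h_t}(f)$ piece via Lemma~\ref{lem:hatc2-1}.

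First I would recall that $\hatc_t^{h_t}=\hatc_t=-\eta^{-1}\ln\rE_{f|S_{t-1}}\exp(-\eta[\hatd_t^{h_t}(f,f)-\hatc_t^{h_t}(f)])$ and that $|\hatd_t^{h}(f,f)-\hatc_t^{h}(f)|\le(1+\sigma)^2=\order(1)$ by Lemma~\ref{lem:BE-exp}. Applying $\ln z\le z-1$ on one side and Jensen's inequality (convexity of $\exp$) on the other, together with the exact expansion $e^{-z}-1=-z+z^2\psi(-z)$ and the bound $z^2\psi(-z)=\order(z^2)$ from~\eqref{eq:psi-bound} (valid since here $z=\eta[\hatd_t^h(f,f)-\hatc_t^h(f)]=\order(1)$), I get
\[
\Big|\hatc_t^{h_t}-\rE_{f|S_{t-1}}\big[\hatd_t^{h_t}(f,f)-\hatc_t^{h_t}(f)\big]\Big|=\order\!\left(\rE_{f|S_{t-1}}\big(\hatd_t^{h_t}(f,f)-\hatc_t^{h_t}(f)\big)^2\right).
\]
Since $(\hatd-\hatc)^2=\order(1)$, the remainder is itself $\order(\rE_{f|S_{t-1}}|\hatd_t^{h_t}(f,f)-\hatc_t^{h_t}(f)|)$, so $|\hatc_t^{h_t}|=\order(\rE_{f|S_{t-1}}|\hatd_t^{h_t}(f,f)-\hatc_t^{h_t}(f)|)$, and squaring and applying $(\rE|X|)^2\le\rE X^2$ gives $|\hatc_t^{h_t}|^2=\order(\rE_{f|S_{t-1}}[\hatd_t^{h_t}(f,f)^2+\hatc_t^{h_t}(f)^2])$.

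It then remains to bound the two terms after taking the conditional expectation $\rE_{x_t,a_t,r_t\mid h_t=h}$. For the first, the identity $\hatd_t^h(f,f)=\BE^h(f,f;x_t^h,a_t^h)^2+2\hat{\epsilon}_t^h(f)\BE^h(f,f;x_t^h,a_t^h)$ from Lemma~\ref{lem:BE-exp}, with $|\BE^h(f,f;x_t^h,a_t^h)|\le2$ and $|\hat{\epsilon}_t^h(f)|\le\sigma$, yields the pointwise bound $\hatd_t^h(f,f)^2=\order(\BE^h(f,f;x_t^h,a_t^h)^2)$. For the second, since $f\sim p(\cdot\mid S_{t-1})$ is drawn independently of $(x_t,a_t,r_t)$ given $S_{t-1},h_t$, I may exchange the order of $\rE_{f|S_{t-1}}$ and $\rE_{x_t,a_t,r_t\mid h_t=h}$ and invoke Lemma~\ref{lem:hatc2-1} for each fixed ($S_{t-1}$-measurable) $f$ to get $\rE_{x_t,a_t,r_t\mid h_t=h}\hatc_t^h(f)^2=\order(\rE_{x_t,a_t,r_t\mid h_t=h}\hatc_t^h(f))$, then exchange back. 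Combining the two estimates gives exactly the stated bound. The only point requiring care — not difficulty — is the self-bounding step: one must use both the crude $\ln z\le z-1$ bound and Jensen to trap $\hatc_t^{h_t}$ between $\rE_{f|S_{t-1}}[\hatd-\hatc]$ and that quantity minus an $\order(\eta\,\rE_{f|S_{t-1}}(\hatd-\hatc)^2)$ remainder, and keep track that boundedness lets the remainder be absorbed without losing the later Jensen step; everything else is routine manipulation under the $\order(\cdot)$ convention $\sigma,\eta,\gamma,\epsilon=\order(1)$.
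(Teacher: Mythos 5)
Your proposal is correct and follows essentially the same route as the paper: bound the normalizer via $|\hatc_t^{h_t}| = \order\bigl(\rE_{f|S_{t-1}}\,[\,|\hatd_t^{h_t}(f,f)| + |\hatc_t^{h_t}(f)|\,]\bigr)$, push the square inside by Jensen/Cauchy--Schwarz, and then dispatch the two terms with Lemma~\ref{lem:BE-exp} and Lemma~\ref{lem:hatc2-1} after taking the conditional expectation. The only difference is that you spell out the first inequality via the log-partition expansion ($\ln z \le z-1$ plus Jensen), which the paper asserts directly from the definition of $\hatc_t$.
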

\begin{proof}
We have from the definition of $\hatc_t^h$:
\begin{align*}
|\hatc_t^{h_t}| \leq& \rE_{f|S_{t-1}} \order((\lambda/\eta) |\Delta f(x_t^1)|+|\hatd_t^{h_t}(f,f)| + |\hatc_{t}^{h_t}(f)|)  .
\end{align*}
Therefore we obtain 
\begin{align*}
  \rE_{x_t,a_t,r_t|h_t=h} |\hatc_t^{h_t}|^2 =  &
  \rE_{x_t,a_t,r_t|h_t=h} \rE_{f|S_{t-1}} \order((\lambda/\eta)^2 |\Delta f(x_t^1)|^2 + |\hatd_t^{h_t}(f,f)|^2 + |\hatc_{t}^{h_t}(f)|^2)  \\
=& \rE_{x_t,a_t,r_t|h_t=h} \rE_{f|S_{t-1}} \order((\lambda/\eta)^2|\Delta f(x_t^1)|+|\BE^{h}(f,f;x_t^h,a_t^h)|^2 + \hatc_{t}^{h}(f)) ,
\end{align*}
where the second inequality uses Lemma~\ref{lem:BE-exp} and Lemma~\ref{lem:hatc2-1}. 
\end{proof}

\begin{lemma}
There exists an absolute constant $c$ such that when $\eta \leq c$,  
then for all $f\in\cF$:
\begin{align*}
&\rE_{x_t,a_t,r_t|h_t=h} \left[e^{\lambda \Delta f(x_t^1)-\eta [(\hatd_t^{h_t}(f,f)-\hatc_t^{h_t}(f))-\hatc_t^{h_t}]}-1\right]\\
\leq& \eta \order\left(
\rE_{x_t} \rE_{f' |S_{t-1}}\frac{\lambda}{\eta} (|\Delta f(x_t^1)|+|\Delta f'(x_t^1)|)
+\rE_{x_t,a_t,r_t|h_t=h}
  \hatc_t^{h}(f)
+ \rE_{x_t,a_t,r_t|h_t=h}\rE_{f' |S_{t-1}} \BE^{h}(f',f',x_t^{h},a_t^h)^2\right).
\end{align*}
\label{lem:hatc-refined-0}
\end{lemma}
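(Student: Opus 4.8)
The plan is to Taylor-expand the exponential, using that its argument is uniformly bounded (Lemma~\ref{lem:BE-exp}) and that $\eta$ is small, and then handle the first- and second-order pieces with the structural identities already in hand. Conditioning throughout on $h_t=h$, write $W=\hatd_t^{h}(f,f)-\hatc_t^{h}(f)-\hatc_t$, so that $|W|\le 2(1+\sigma)^2=\order(1)$ by Lemma~\ref{lem:BE-exp}. Choosing the absolute constant $c$ small enough that $\eta|W|\le 1$, the elementary inequality $e^{-z}-1\le -z+z^2$ for $|z|\le 1$ (valid since $\psi(1)=e-2<1$, with $\psi$ as at the start of Section~\ref{sec:online}) gives $e^{-\eta W}-1\le -\eta W+\eta^2 W^2$. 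It then suffices to bound $-\rE_{x_t,a_t,r_t|h_t=h}[W]$ by $\order$ of the target, and $\rE_{x_t,a_t,r_t|h_t=h}[W^2]$ by $\order$ of the target as well (the latter contributing $\eta^2\cdot\order(\cdot)\le \eta\cdot\order(\cdot)$ since $\eta\le c$).

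For the first-order term, $-W=-\hatd_t^{h}(f,f)+\hatc_t^{h}(f)+\hatc_t$. Taking the conditional expectation and using $\rE[\hatd_t^{h}(f,f)\mid x_t^h,a_t^h]=\BE^{h}(f,f;x_t^h,a_t^h)^2\ge 0$ from Lemma~\ref{lem:BE-exp}, the first summand contributes a nonpositive quantity. The second summand is already $\rE\hatc_t^h(f)$, the first target quantity. For the third, Lemma~\ref{lem:hatc-bound} gives $\hatc_t\le \rE_{f'|S_{t-1}}[\hatd_t^{h}(f',f')-\hatc_t^{h}(f')]$; taking $\rE_{x_t,a_t,r_t|h_t=h}$, using $\rE[\hatd_t^{h}(f',f')\mid x,a]=\BE^{h}(f',f';x,a)^2$ again, and using $\rE_{x_t,a_t,r_t|h_t=h}\hatc_t^{h}(f')\ge 0$ (immediate from Lemma~\ref{lem:hatdc0}, which bounds a nonnegative quantity above by $\alpha'$ times $\rE_{x_t,a_t,r_t|h_t=h}\hatc_t^{h}(f')$), we obtain $\rE[\hatc_t]\le \rE_{f'|S_{t-1}}\rE_{x_t,a_t,r_t|h_t=h}\BE^{h}(f',f';x_t^h,a_t^h)^2$, the second target quantity. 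Hence $-\eta\rE[W]$ is already of the required form.

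For the second-order term, bound $W^2=\order(\hatd_t^{h}(f,f)^2+\hatc_t^{h}(f)^2+\hatc_t^2)$ and estimate each conditional second moment: Lemma~\ref{lem:BE-exp} gives $\rE[\hatd_t^{h}(f,f)^2\mid x,a]=\order(\BE^{h}(f,f;x,a)^2)$ (the sub-Gaussian noise bound of Lemma~\ref{lem:noise} is what turns the squared TD term into a squared Bellman error), Lemma~\ref{lem:hatc2-1} gives $\rE\,|\hatc_t^{h}(f)|^2=\order(\rE\,\hatc_t^{h}(f))$, and Lemma~\ref{lem:hatc2-2} gives $\rE\,|\hatc_t|^2=\order\big(\rE\rE_{f'|S_{t-1}}[\BE^{h}(f',f';x,a)^2+\hatc_t^{h}(f')]\big)$. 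Thus $\rE[W^2]$ is $\order$ of $\rE\BE^{h}(f,f;\cdot)^2+\rE\hatc_t^h(f)+\rE\rE_{f'}\BE^{h}(f',f';\cdot)^2+\rE\rE_{f'}\hatc_t^{h}(f')$: the second and third terms already match the target, while the first and fourth are of the same kind with the roles of $f$ and $f'$ permuted, are carried by the spare factor of $\eta$, and are absorbed (once this bound is averaged over the posterior draw of $f$, as in the refinement of Lemma~\ref{lem:hatc}, they coincide with target terms). Assembling the two pieces gives the claimed inequality for a suitable absolute constant $c$.

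The main obstacle is the second-order remainder: the crude bound $\rE[W^2]=\order(1)$ from boundedness of $W$ is useless, so the argument genuinely needs the sharper conditional second-moment estimates of Lemmas~\ref{lem:hatc2-1} and~\ref{lem:hatc2-2} — themselves obtained by self-bounding against Lemma~\ref{lem:hatdc0} — to reduce $\rE[W^2]$ to first-order quantities before the extra power of $\eta$ can make it lower order. A secondary subtlety is tracking the $\hatc_t^{h}(f')$-type term coming out of Lemma~\ref{lem:hatc2-2}: it is not literally among the two right-hand-side quantities for a fixed $f$, and one uses that it is of the same type as $\rE\hatc_t^{h}(f)$ and is only ever invoked after averaging against the posterior in the entropy-telescoping step of the refined Lemma~\ref{lem:hatc}.
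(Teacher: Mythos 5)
Your route is genuinely different from the paper's: you Taylor-expand the whole exponential as $e^{-\eta W}-1\le -\eta W+\eta^2W^2$ and then control $\rE[W^2]$ via the conditional second-moment lemmas, whereas the paper never squares $W$ at all. It first replaces $\hatc_t^{h}(f)$ and $\hatc_t$ by their upper bounds $\hatd_t^{h}(f)$ and $\rE_{f'|S_{t-1}}[\hatd_t^{h}(f',f')-\hatc_t^{h}(f')]$ \emph{inside the exponent}, splits $e^{x_1+x_2+x_3+x_4}\le \tfrac14\sum_i e^{4x_i}$ by convexity, and then applies the signed MGF bounds of Lemma~\ref{lem:BE-exp} to each factor separately. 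The point of that sign-preserving split is that the two terms entering with a negative sign, $e^{-4\eta\hatd_t^{h}(f,f)}$ and $e^{-4\eta\hatc_t^{h}(f')}$, each have expectation at most $1$, so they contribute nothing to the right-hand side.

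This is where your proof as written has a real gap. The lemma is a pointwise statement in $f$, and its right-hand side contains only $\rE\,\hatc_t^{h}(f)$ and $\rE\,\rE_{f'|S_{t-1}}\BE^{h}(f',f';x_t^h,a_t^h)^2$. But your second-order remainder $\eta^2\rE[W^2]$ produces, via $\rE[\hatd_t^h(f,f)^2\mid x,a]=\order(\BE^h(f,f;x,a)^2)$ and via the $\hatc_t^{h}(f')$ piece of Lemma~\ref{lem:hatc2-2}, the terms $\eta^2\,\order(\rE\,\BE^{h}(f,f;x_t^h,a_t^h)^2)$ and $\eta^2\,\order(\rE\,\rE_{f'|S_{t-1}}\hatc_t^{h}(f'))$, neither of which is dominated by the stated right-hand side (there is no reason $\rE\,\BE^h(f,f)^2\lesssim\rE\,\hatc_t^h(f)$, nor $\rE_{f'}\rE\,\hatc_t^h(f')\lesssim\rE_{f'}\rE\,\BE^h(f',f')^2$). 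Your proposed remedy --- absorb them later after averaging over the posterior in Lemma~\ref{lem:hatc} --- proves a weaker statement than the lemma, not the lemma itself. The fix is available inside your own framework: do not discard the negative first-order contributions. From $-\eta\rE[W]$ you get $-\eta\,\BE^h(f,f;x_t^h,a_t^h)^2$ exactly, and from $\rE[\hatc_t]\le\rE_{f'}[\BE^h(f',f')^2]-\rE_{f'}\rE[\hatc_t^h(f')]$ you get $-\eta\,\rE_{f'}\rE[\hatc_t^h(f')]$; for $\eta\le c$ with $c$ small these first-order negatives dominate the matching $\order(\eta^2)$ positives from $\rE[W^2]$, so the net contribution of both problematic families is nonpositive and the claimed bound follows. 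With that modification your argument is correct and is an honest alternative to the paper's proof; without it, it does not establish the inequality as stated.
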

\begin{proof}
We have
\begin{align*}
   & \rE_{x_t,a_t,r_t|h_t=h} \left[e^{\lambda \Delta f(x_t^1)-\eta [(\hatd_t^{h_t}(f,f)-\hatc_t^{h_t}(f))-\hatc_t^{h_t}]}-1\right]\\
\leq&  \rE_{x_t,a_t,r_t|h_t=h} \left[e^{\lambda \Delta f(x_t^1)-\eta \hatd_t^{h_t}(f,f)+\eta \hatd_t^{h_t}(f))+ \eta \rE_{f'|S_{t-1}} [-(\lambda/\eta) \Delta f'(x_t^1)+\hatd_t^{h_t}(f',f')-\hatc_t^{h_t}(f')]}-1\right] \tag{Lemmas~\ref{lem:hatdc0} and~\ref{lem:hatc-bound}} \\
\leq&  0.2 \rE_{x_t,a_t,r_t|h_t=h} \rE_{f'|S_{t-1}}\left[e^{5\lambda(\Delta f(x_t^1)-\Delta f'(x_t^1))}+e^{-5\eta \hatd_t^{h_t}(f,f)}+e^{5\eta \hatd_t^{h_t}(f)}+ e^{5\eta  \hatd_t^{h_t}(f',f')}+e^{-5\eta\hatc_t^{h_t}(f')}-5\right] \tag{Jensen's applied to $\exp(\cdot)$} .
\end{align*}
We now bound each of the four terms in turn. 
\[
\rE_{x_t,a_t,r_t|h_t=h} \rE_{f'|S_{t-1}} e^{5\lambda(\Delta f(x_t^1)-\Delta f'(x_t^1))}-1 \leq \order\left(\rE_{x_t} \rE_{f' |S_{t-1}}\lambda (|\Delta f(x_t^1)|+|\Delta f'(x_t^1)|)\right) .
\]
Note that conditioned on $h_t=h$, we have
\[
\rE_{x_t,a_t,r_t}e^{-5\eta \hatd_t^{h}(f,f)} - 1
\leq \rE_{x_t,a_t,r_t}\exp\left(-5\eta (1-10\eta\sigma^2)\BE^{h}(f,f,x_t^{h},a_t^h)^2\right) - 1 \leq 0,\tag{Lemma~\ref{lem:BE-exp}}
\]
and
\begin{align*}
\rE_{x_t,a_t,r_t}e^{5\eta \hatd_t^{h}(f)} - 1
\leq& \rE_{x_t,a_t,r_t}\exp\left(5\eta (1+10\eta\sigma^2)\rE_{g |f,S_{t-1}} \BE^{h}(g,f,x_t^{h},a_t^h)^2\right) - 1\tag{Lemma~\ref{lem:BE-exp}}\\
\leq& \eta \order\left( \rE_{x_t,a_t,r_t}\rE_{g |f,S_{t-1}} \BE^{h}(g,f,x_t^{h},a_t^h)^2\right) \\
\leq& \eta \order\left(  \hatc_t^{h}(f) \right). \tag{Lemma~\ref{lem:hatdc0}}
\end{align*}
Here the second inequality follows from the intermediate value theorem, since\\ $c = 5\eta (1+10\eta\sigma^2)\rE_{g |f,S_{t-1}} \BE^{h}(g,f,x_t^{h},a_t^h)^2 = \order(1)$, so that $e^c \leq 1 + \order(c)$. In the last inequality, we use $\alpha' = \order(1)$ in Lemma~\ref{lem:hatdc0}, since $\gamma = \order(1)$.

\begin{align*}
\rE_{x_t,a_t,r_t}e^{5\eta \hatd_t^{h}(f',f')} - 1
\leq& \rE_{x_t,a_t,r_t}\exp\left(4\eta (1+10\eta\sigma^2)\BE^{h}(f',f',x_t^{h},a_t^h)^2\right) - 1 \tag{Lemma~\ref{lem:BE-exp}}\\
\leq& \eta  \order\left(\rE_{x_t,a_t,r_t} \BE^{h}(f',f',x_t^{h},a_t^h)^2\right)
\end{align*}
and
\begin{align*}
\rE_{x_t,a_t,r_t}e^{-5\eta \hatc_t^{h}(f')} - 1
=& \rE_{x_t,a_t,r_t} \left[\psi(-5\eta \hatc_t^{h}(f')) (5\eta \hatc_t^{h}(f'))^2 - 5\eta \hatc_t^h(f')\right]\\
\leq& \rE_{x_t,a_t,r_t} \left(\order(\eta^2 \hatc_t^{h}(f')^2) - 5\eta \hatc_t^h(f')\right) 
\tag{$\psi(z)=\order(1)$} \\
\leq& \rE_{x_t,a_t,r_t} \left(\order(\eta^2 \hatc_t^{h}(f')) - 5\eta \hatc_t^h(f')\right) 
\tag{Lemma~\ref{lem:hatc2-1}} \\
\leq& 0 ,
\end{align*}
where the last inequality assumed that we choose $\eta$ small enough so that $\order(\eta^2) \leq 5 \eta$, 
and $\rE_{x_t,a_t,r_t} \hatc_t^{h}(f') \geq 0$, which follows from Lemma~\ref{lem:hatdc0}. 
\end{proof}

\begin{lemma}
For each $t \leq T$, we have
\begin{align*}
&\rE \; \rE_{f|S_{t-1}} 
\left[e^{\lambda \Delta f(x_t^1)-\eta [(\hatd_t^{h_t}(f,f)-\hatc_t^{h_t}(f))-\hatc_t^{h_t}]}-1\right]
 \hat{H}_{t-1}(f)\\
 =&
 \eta \rE \;  \rE_{f|S_{t-1}} \; \order\left(
  \left(\frac{\lambda}{\eta} |\Delta f(x_t^1)| +\hatc_t^{h_t}(f)
+  \BE^{h_t}(f,f,x_t^{h_t},a_t^{h_t})^2\right)(\kappa(\epsilon)+\kappa'(\epsilon)+\ln T) 
\right)
 + \order\left(\frac{(\eta+\lambda)}{T}(\kappa(\epsilon)+1)\right) .
 \end{align*}
 \label{lem:hatc-refined-1}
\end{lemma}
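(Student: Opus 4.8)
The plan is to condition on $S_{t-1}$ and split according to whether the high‑probability event $A_{t-1}$ of Lemma~\ref{lem:hatH-prob} holds. The structural fact that makes this work is that $\hat{H}_{t-1}(f)=\rE_{g\sim\pp(\cdot|f)}\ln(\pp(g|f)/p(g|f,S_{t-1}))$ is a KL divergence, hence nonnegative, and depends only on $S_{t-1}$ and $f$ — not on $h_t$ nor on the step-$h_t$ data $x_t^{h_t},a_t^{h_t},r_t^{h_t}$; the indicator $\mathbf{1}_{A_{t-1}}$ is likewise $S_{t-1}$‑measurable. So, after fixing $S_{t-1}$ and $h_t=h$, I would pull $\hat{H}_{t-1}(f)\mathbf{1}_{A_{t-1}}$ out of the conditional expectation over $(x_t^h,a_t^h,r_t^h)$, leaving $\mathbf{1}_{A_{t-1}}\hat{H}_{t-1}(f)$ times $\rE_{x_t,a_t,r_t|h_t=h}[e^{-\eta[(\hatd_t^{h}(f,f)-\hatc_t^{h}(f))-\hatc_t^{h}]}-1]$, and handle the two events separately.

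On $A_{t-1}$: Lemma~\ref{lem:hatH-prob} gives $\hat{H}_{t-1}(f)\le \kappa(\epsilon)+\kappa'(\epsilon)+6\gamma\epsilon(\epsilon+1+\sigma)(t-1)+2\ln T=\order(\kappa(\epsilon)+\kappa'(\epsilon)+\ln T)$ uniformly in $f$ (the middle term being $\order(\epsilon T)=\order(1)$ for the relevant choice $\epsilon=\order(1/T)$). By Lemma~\ref{lem:hatc-refined-0} the inner conditional expectation is at most $\eta\,\order(\rE_{x_t,a_t,r_t|h_t=h}\hatc_t^{h}(f)+\rE_{x_t,a_t,r_t|h_t=h}\rE_{f'|S_{t-1}}\BE^h(f',f',x_t^h,a_t^h)^2)$, a nonnegative quantity since $\rE_{x_t,a_t,r_t}\hatc_t^h(f)\ge 0$ by Lemma~\ref{lem:hatdc0}. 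Since $\hat{H}_{t-1}(f)\in[0,\order(\kappa(\epsilon)+\kappa'(\epsilon)+\ln T)]$ on this event, the product is bounded by $\order(\kappa(\epsilon)+\kappa'(\epsilon)+\ln T)$ times that nonnegative bound. I would then drop $\mathbf{1}_{A_{t-1}}\le 1$, average over the uniform $h_t$, and take $\rE_{f|S_{t-1}}$ followed by the outer expectation; the cross term $\rE_{f|S_{t-1}}\rE_{f'|S_{t-1}}\BE^{h_t}(f',f',x_t^{h_t},a_t^{h_t})^2$ then collapses to $\rE_{f|S_{t-1}}\BE^{h_t}(f,f,x_t^{h_t},a_t^{h_t})^2$ because $f'$ is an independent copy of $f$ from $p(\cdot|S_{t-1})$ and neither influences the trajectory. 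This yields the first term of the claim.

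On $A_{t-1}^c$, which has probability at most $1/T^2$ over $S_{t-1}$, I would use the crude uniform bounds $\hat{H}_{t-1}(f)=\order(\kappa(\epsilon)+T)$ (Lemma~\ref{lem:hatH-uniform}) and $|(\hatd_t^{h}(f,f)-\hatc_t^{h}(f))-\hatc_t^{h}|\le 2(1+\sigma)^2=\order(1)$ (Lemmas~\ref{lem:BE-exp} and~\ref{lem:hatc-bound}), the latter giving $|e^{-\eta[\cdots]}-1|=\order(\eta)$ via $|e^z-1|\le|z|e^{|z|}$. The contribution of $A_{t-1}^c$ is then at most $\P(A_{t-1}^c)\cdot\order(\eta(\kappa(\epsilon)+T))=\order(\eta(\kappa(\epsilon)+T)/T^2)=\order(\eta(\kappa(\epsilon)+1)/T)$, which is the second term; summing the two pieces gives the lemma.

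The step I expect to need the most care — as opposed to routine calculation — is combining the one-sided estimate of Lemma~\ref{lem:hatc-refined-0} with the nonnegative weight $\hat{H}_{t-1}(f)$: one cannot simply substitute the bound into the product and average, since the inner conditional expectation may be negative. The argument needs, used in the right order, the nonnegativity of $\hat{H}_{t-1}(f)$, the nonnegativity of the right-hand side of Lemma~\ref{lem:hatc-refined-0} (which itself rests on $\rE\hatc_t^h(f)\ge 0$ from Lemma~\ref{lem:hatdc0}), and the $f\leftrightarrow f'$ exchangeability of posterior samples. A secondary subtlety is the two-regime control of $\hat{H}_{t-1}(f)$ — only $\order(\kappa+\kappa'+\ln T)$ on the high-probability event but $\order(\kappa+T)$ in the worst case — and verifying that the $T^{-2}$ failure probability suppresses the worst-case term to genuinely lower order.
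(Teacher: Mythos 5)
Your proposal is correct and follows essentially the same route as the paper's proof: pull the $S_{t-1}$-measurable weight $\hat{H}_{t-1}(f)$ out of the conditional expectation over the step-$h_t$ data, apply Lemma~\ref{lem:hatc-refined-0}, split on the high-probability event of Lemma~\ref{lem:hatH-prob}, and control the complement via Lemma~\ref{lem:hatH-uniform} and the $1/T^2$ failure probability. Your explicit tracking of the sign conditions (nonnegativity of $\hat{H}_{t-1}(f)$ as a KL divergence and of the right-hand side of Lemma~\ref{lem:hatc-refined-0} via $\rE\,\hatc_t^h(f)\ge 0$) makes rigorous a point the paper leaves implicit, and your direct $\order(\eta)$ bound on the bad event is a harmless simplification of the paper's handling of that term.
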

\begin{proof}
Let the indicator $I(A_t)$ denotes that the event of Lemma~\ref{lem:hatH-prob} holds.
By using the short hand notation
\[
w_t(f)=\rE_{x_t} \rE_{f' |S_{t-1}}\frac{\lambda}{\eta} (|\Delta f(x_t^1)|+|\Delta f'(x_t^1)|) ,
\]
we have
\begin{align*}
    &\rE \; \rE_{f|S_{t-1}} 
\left[e^{\lambda \Delta f(x_t^1)-\eta [(\hatd_t^{h_t}(f,f)-\hatc_t^{h_t}(f))-\hatc_t^{h_t}]}-1\right]
 \hat{H}_{t-1}(f) \\
 =& \rE \;\rE_{f|S_{t-1}} 
\left[\rE_{x_t, a_t, r_t} \big[e^{\lambda \Delta f(x_t^1)-\eta [(\hatd_t^{h_t}(f,f)-\hatc_t^{h_t}(f))-\hatc_t^{h_t}]}-1\big]\right]
 \hat{H}_{t-1}(f) \tag{Since $x_t, a_t, r_t$ are independent of $f$}\\
 =&
 \eta \rE \;\rE_{f|S_{t-1}}  \order\left(w_t(f)+\rE_{x_t, a_t, r_t}
  \hatc_t^{h_t}(f)
+ \rE_{x_t, a_t, r_t}\rE_{f' |S_{t-1}} \BE^{h_t}(f',f',x_t^{h_t},a_t^{h_t})^2\right)
\hat{H}_{t-1}(f) \tag{Lemma~\ref{lem:hatc-refined-0}}\\
=&
 \eta \rE \;\rE_{f|S_{t-1}}  \order\left(w_t(f)+
  \rE_{x_t, a_t, r_t}\hatc_t^{h_t}(f)+ \rE_{x_t, a_t, r_t}\rE_{f' |S_{t-1}} \BE^{h_t}(f',f',x_t^{h_t},a_t^{h_t})^2
 \right)
\hat{H}_{t-1}(f) I(A_t)\\
& + \eta\rE \;\rE_{f|S_{t-1}} \order\left(w_t(f)+\rE_{x_t, a_t, r_t}
  \hatc_t^{h_t}(f)
   + \rE_{x_t, a_t, r_t}\rE_{f' |S_{t-1}} \BE^{h_t}(f',f',x_t^{h_t},a_t^{h_t})^2
 \right)
\hat{H}_{t-1}(f) (1-I(A_t)) \\
=&
 \eta \rE \;  \order\left(\rE_{f|S_{t-1}} w_t(f)+
\rE_{f|S_{t-1}} 
  \hatc_t^{h_t}(f)
+  \rE_{f' |S_{t-1}} \BE^{h_t}(f',f',x_t^{h_t},a_t^{h_t})^2
\right)
\order(\kappa(\epsilon)+\kappa'(\epsilon)+\ln T)
\tag{Definition of $A_t$ in Lemma~\ref{lem:hatH-prob} and $w_t(f)\geq 0,\rE_{x_t,a_t,r_t} [\hatc_t^{h_t}(f)] \geq 0$}\\
& + (\eta+\lambda) \order(\kappa(\epsilon)+T) \rE (1-I({A}_t)) \tag{Lemma~\ref{lem:hatH-uniform}} \\
=&
 \eta \rE \;  \order\left(\rE_{f|S_{t-1}} w_t(f)+
\rE_{f|S_{t-1}} 
  \hatc_t^{h_t}(f)+\rE_{f' |S_{t-1}} \BE^{h_t}(f',f',x_t^{h_t},a_t^{h_t})^2
 \right)
 \order(\kappa(\epsilon)+\kappa'(\epsilon)+\ln T) \\
& + (\eta+\lambda) \order((\kappa(\epsilon)+1)/T) \tag{Probability of $A_t$ in Lemma~\ref{lem:hatH-prob}} .
\end{align*}
This implies the desired bound by noticing that conditioned on $S_{t-1}$, $f$ and $x_t$ are indepenent. 
\end{proof}
 
\begin{lemma}
 We have
\begin{align*}
&\rE_{x_t,a_t,r_t|h_t=h}\rE_{f|S_{t-1}}
\left|\left[e^{\lambda\Delta f(x_t^1)-\eta [(\hatd_t^{h_t}(f,f)-\hatc_t^{h_t}(f))-\hatc_t^{h_t}]}-1\right]
\rE_{g\sim\pp(\cdot|f)}[-\hatd_t^{h_t}(g,f)+\hatc_t^{h_t}(f)]\right|\\
=& \eta \rE_{x_t,a_t,r_t|h_t=h}\rE_{f|S_{t-1}} \order\left( 
(\lambda/\eta)^2 |\Delta f(x_t^1)|+|\BE^{h_t}(f,f;x_t^h,a_t^h)|^2+
\hatc_t^{h}(f)\right) + (\lambda+\eta) \order(\epsilon).
\end{align*}
\label{lem:hatc-refined-2}
\end{lemma}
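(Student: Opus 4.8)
The plan is to bound the quantity inside the absolute value pointwise as a product of two factors — the ``$e^{(\cdot)}-1$'' term and the $\pp$-average of $-\hatd_t^{h_t}(g,f)+\hatc_t^{h_t}(f)$ — and then take expectations, invoking the quadratic self-bounding estimates of Lemmas~\ref{lem:BE-exp}, \ref{lem:hatc2-1} and \ref{lem:hatc2-2} to convert the resulting squares into the target quantities $|\BE^{h_t}(f,f;x_t^h,a_t^h)|^2$ and $\hatc_t^h(f)$. Write $D_t(f) := (\hatd_t^{h_t}(f,f)-\hatc_t^{h_t}(f))-\hatc_t^{h_t}$ for the exponent up to the $-\eta$ factor. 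For the exponential factor, Lemma~\ref{lem:BE-exp} gives $|\hatd_t^{h_t}(f,f)-\hatc_t^{h_t}(f)|\le(1+\sigma)^2$ and Lemma~\ref{lem:hatc-bound} gives $|\hatc_t^{h_t}|\le(1+\sigma)^2$, so $|D_t(f)| = \order(1)$ and $\eta|D_t(f)| = \order(1)$; using $|e^z-1|\le|z|e^{|z|}$ as in the proof of Lemma~\ref{lem:hatc}, this gives $|e^{-\eta D_t(f)}-1| = \order(\eta|D_t(f)|)$. For the second factor, $\pp(\cdot|f)$ is supported on $\cF(\epsilon,f)$, so any $g$ in the support has $|\BE^{h_t}(g,f;x_t^h,a_t^h)|\le\epsilon$ and hence $|\hatd_t^{h_t}(g,f)| = |\BE^{h_t}(g,f;x_t^h,a_t^h)^2 + 2\hat{\epsilon}_t^{h_t}(f)\BE^{h_t}(g,f;x_t^h,a_t^h)| = \order(\epsilon)$; together with $|\hatc_t^{h_t}(f)|\le(1+\sigma)^2$ (Lemma~\ref{lem:BE-exp}) this gives $\left|\rE_{g\sim\pp(\cdot|f)}[-\hatd_t^{h_t}(g,f)+\hatc_t^{h_t}(f)]\right| = \order(\epsilon)+|\hatc_t^{h_t}(f)|$.

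Multiplying the two pointwise bounds yields
\[
\left|\left[e^{-\eta D_t(f)}-1\right]\rE_{g\sim\pp(\cdot|f)}[-\hatd_t^{h_t}(g,f)+\hatc_t^{h_t}(f)]\right| = \order\!\left(\eta\epsilon\,|D_t(f)|\right)+\order\!\left(\eta\,|D_t(f)|\,|\hatc_t^{h_t}(f)|\right).
\]
Since $|D_t(f)| = \order(1)$, the first term is $\order(\eta\epsilon)$. For the second term I would use $ab\le\tfrac12(a^2+b^2)$ and then $|D_t(f)|^2 = \order\!\big(\hatd_t^{h_t}(f,f)^2 + |\hatc_t^{h_t}(f)|^2 + |\hatc_t^{h_t}|^2\big)$ to obtain the pointwise bound $\order\!\big(\eta(\hatd_t^{h_t}(f,f)^2 + |\hatc_t^{h_t}(f)|^2 + |\hatc_t^{h_t}|^2)\big)$.

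It then remains to take $\rE_{x_t,a_t,r_t|h_t=h}\rE_{f|S_{t-1}}$ and bound the three squares: Lemma~\ref{lem:BE-exp} gives $\rE_{x_t,a_t,r_t|h_t=h}\hatd_t^{h}(f,f)^2 = \order(\rE_{x_t,a_t,r_t|h_t=h}\BE^{h}(f,f;x_t^h,a_t^h)^2)$; Lemma~\ref{lem:hatc2-1} gives $\rE_{x_t,a_t,r_t|h_t=h}|\hatc_t^{h}(f)|^2 = \order(\rE_{x_t,a_t,r_t|h_t=h}\hatc_t^{h}(f))$; and Lemma~\ref{lem:hatc2-2} gives $\rE_{x_t,a_t,r_t|h_t=h}|\hatc_t^{h_t}|^2 = \rE_{x_t,a_t,r_t|h_t=h}\rE_{f'|S_{t-1}}\order(\BE^{h}(f',f';x_t^h,a_t^h)^2 + \hatc_t^h(f'))$. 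In the last, $\hatc_t^{h_t}$ carries no dependence on the independent draw $f$, so the outer $\rE_{f|S_{t-1}}$ acts trivially and, renaming $f'\to f$, the contribution sits under the same $\rE_{f|S_{t-1}}$ as the rest. Adding the $\order(\eta\epsilon)$ term and these three contributions gives $\eta\,\rE_{x_t,a_t,r_t|h_t=h}\rE_{f|S_{t-1}}\order(\epsilon + |\BE^{h_t}(f,f;x_t^h,a_t^h)|^2 + \hatc_t^h(f))$, as claimed.

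The main obstacle here is the expectation bookkeeping, not any single inequality: the product entangles the TD error of $f$, the two log-partition functions $\hatc_t^{h_t}(f)$ and $\hatc_t^{h_t}$ (the latter through a fresh posterior draw $f'$) and the inner posterior over $g$, and it is only after the AM--GM step — which replaces the cross term by a sum of squares — that the quadratic self-bounding lemmas become applicable; one then has to verify that the $f'$-averaged term from Lemma~\ref{lem:hatc2-2} folds onto the target $\rE_{f|S_{t-1}}$ rather than producing a genuine nested expectation, and that the stray term really is $\eta\cdot\order(\epsilon)$ and not $\eta\cdot\order(1)$ (it is, since the $\epsilon$ there is simply a constant multiplying an $\order(1)$ quantity).
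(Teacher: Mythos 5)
Your proposal is correct and follows essentially the same route as the paper's proof: both split the inner $\pp$-average into the $\order(\epsilon)$ contribution from $\hatd_t^{h_t}(g,f)$ on the support $\cF(\epsilon,f)$ and the $\hatc_t^{h_t}(f)$ contribution, bound the exponential factor by $\order(\eta)$ times the sum $|\hatd_t^{h_t}(f,f)|+|\hatc_t^{h_t}(f)|+|\hatc_t^{h_t}|$, apply AM--GM/Cauchy--Schwarz to pass to squares, and then invoke Lemmas~\ref{lem:BE-exp}, \ref{lem:hatc2-1} and \ref{lem:hatc2-2} under the conditional expectation exactly as the paper does. The bookkeeping points you flag (the $f'\to f$ renaming for the $|\hatc_t^{h_t}|^2$ term and the stray term being $\eta\,\order(\epsilon)$ rather than $\eta\,\order(1)$) are handled correctly.
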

\begin{proof}
We have
\[
|\lambda\Delta f(x_t^1)-\eta [(\hatd_t^{h_t}(f,f)-\hatc_t^{h_t}(f))-\hatc_t^{h_t}]|
  = \order(\lambda+\eta) ,
\]
and $\forall g \in \cF(\epsilon,f)$, we have $\hatd_t^{h_t}(g,f)=\order(\epsilon)$.
Therefore by the definition of $\pp(\cdot|f)$ in Lemma~\ref{lem:hatH}, we have
\begin{align*}
\bigg|[e^{\lambda\Delta f(x_t^1)-\eta [(\hatd_t^{h_t}(f,f)-\hatc_t^{h_t}(f))-\hatc_t^{h_t}]}-1]
\rE_{g\sim\pp(\cdot|f)}\hatd_t^{h_t}(g,f)\bigg|
=& (\lambda+\eta) \order (\epsilon) .
\end{align*}
Moreover,
\begin{align*}
& \rE_{x_t,a_t,r_t|h_t=h}\rE_{f|S_{t-1}} \bigg|[e^{\lambda\Delta f(x_t^1)-\eta [(\hatd_t^{h_t}(f,f)-\hatc_t^{h_t}(f))-\hatc_t^{h_t}]}-1]
\hatc_t^{h_t}(f)\bigg|\\
=& \eta \rE_{x_t,a_t,r_t|h_t=h}\rE_{f|S_{t-1}} \order\left([(\lambda/\eta)|\Delta f(x_t^1)|+|\hatd_t^{h_t}(f,f)|+|\hatc_t^{h_t}(f))|+|\hatc_t^{h_t}|] |\hatc_t^{h_t}(f)|\right)\\
=& \eta  \rE_{x_t,a_t,r_t|h_t=h}\rE_{f|S_{t-1}} \order\left((\lambda/\eta)^2|\Delta f(x_t^1)|^2+|\hatd_t^{h_t}(f,f)|^2+|\hatc_t^{h_t}(f))|^2 +|\hatc_t^{h_t}|^2 \right ) \tag{Cauchy-Schwarz}\\
=& \eta  \rE_{x_t,a_t,r_t|h_t=h}\rE_{f|S_{t-1}} \order\left((\lambda/\eta)^2|\Delta f(x_t^1)|^2+|\BE_t^{h}(f,f;x_t^h,a_t^h)|^2+|\hatc_t^{h}(f))|^2 +|\hatc_t^{h}|^2 \right) \tag{Lemma~\ref{lem:BE-exp}} \\
=& \eta  \rE_{x_t,a_t,r_t|h_t=h}\rE_{f|S_{t-1}} O\left((\lambda/\eta)^2|\Delta f(x_t^1)|+|\BE_t^{h}(f,f;x_t^h,a_t^h)|^2+\hatc_t^{h}(f))\right)
\end{align*}
where the second to the last equation was obtained by taking conditional expectation conditioned on $(x_t^h,a_t^h)$ with respect to the transition and reward, followed by Lemma~\ref{lem:BE-exp}. The last equation used Lemma~\ref{lem:hatc2-1} and Lemma~\ref{lem:hatc2-2}. 
\end{proof}

\begin{lemma}
We have
\[
\lambda \rE \rE_{f|S_{t-1}} \; |\Delta f^1(x_t^1)|
\leq \rE \rE_{f |S_{t-1}} \left[ \lambda \regret(f,x_t^1) + 
0.5 \eta \BE^{h_t}(f,f,x_t^{h_t},a_t^{h_t})^2\right]
+\lambda \tilde{\epsilon}(\lambda/\eta),
\]
 where $\tilde{\epsilon}(\lambda/\eta)$ is defined in Proposition~\ref{lem:decouple}. 
 \label{lem:self-bound-df}
\end{lemma}
\begin{proof}
The proof is similar to that of Proposition~\ref{lem:decouple}. 
\begin{align*}
&\lambda \rE \rE_{f|S_{t-1}} \; |\Delta f^1(x_t^1)|\\
\leq & \lambda \rE \rE_{f_t |S_{t-1}} \regret(f_t,x_t^1)
+ \lambda \sum_{h=1}^H \rE \; \rE_{(x,a)\sim \pi_{f_t}|x_t^h}
     |\BE^{h}(f_t,f_t,x^{h},a^h)| \tag{Proposition~\ref{prop:value-decomposition}}\\
=&   \lambda \rE \rE_{f_t |S_{t-1}} \regret(f_t,x_t^1)
+ \lambda H \rE \; 
    | \BE^{h_t}(f_t,f_t,x_t^{h_t},a_t^{h_t})|
    \tag{$h_t$ is uniformly drawn from $[H]$}\\
      \leq&  \lambda \rE \rE_{f_t |S_{t-1}} \regret(f_t,x_t^1) +
      \lambda\left[ \epsilon_1 H B_1 +  \mu_1^{-1} \br(\epsilon_1) +
          H \mu_1 \rE \;  \rE_{f|S_{t-1}} \BE^{h_t}(f,f,x_t^{h_t},\pi_{f}(x_t^{h_t}))^2\right]
          \tag{Lemma~\ref{lem:decouple-br}}\\
    \leq & \lambda \rE \rE_{f_t |S_{t-1}} \regret(f_t,x_t^1) \\
    &+\lambda\left[2\mu_1 \mu_2 H \rE\;
 \rE_{f|S_{t-1}} \BE^{h_t}(f,f;x_t^{h_t},a_t^{h_t})^{2 } + 
2\mu_1 \mu_2^{-1} \dc(\epsilon_2)
           + 2\epsilon_2 \mu_1 H B_2 + \epsilon_1 H B_1 + \mu_1^{-1} \br(\epsilon_1)\right] . \tag{Lemma~\ref{lem:decouple-dc}}
\end{align*}
   Taking $\mu_1=\mu$ and $\mu_2= \eta/(4\lambda\mu_1 H)$ as in the proof of Proposition~\ref{lem:decouple}, we obtain the desired result. 
\end{proof}
We are now ready to prove Proposition~\ref{lem:inner}.

\paragraph{Proof of Proposition~\ref{lem:inner}}~\\

\begin{proof}
From the proof of Lemma~\ref{lem:hatc}, we obtain
\begin{align*}
&\rE [\rE_{f\sim p_t} \hat{H}_t(f) - \rE_{f\sim p_{t-1}}\hat{H}_{t-1}(f)]\\
=&\rE \; \rE_{f\sim p_{t-1}} 
\left[e^{\lambda\Delta f(x_t^1)-\eta [(\hatd_t^{h_t}(f,f)-\hatc_t^{h_t}(f))-\hatc_t^{h_t}]}-1\right]
 \hat{H}_{t-1}(f)\\
&-\gamma \rE \; \rE_{f\sim p_{t-1}}
\left[e^{\lambda\Delta f(x_t^1)-\eta [(\hatd_t^{h_t}(f,f)-\hatc_t^{h_t}(f))-\hatc_t^{h_t}]}-1\right]
\rE_{g\sim\pp(\cdot|f)}[-\hatd_t^{h_t}(g,f)+\hatc_t^{h_t}(f)] \\
&- \gamma \rE \; \rE_{f\sim p_{t-1}} \rE_{g\sim\pp(\cdot|f)} [-\hatd_t^{h_t}(g,f)+\hatc_t^{h_t}(f)] \\
\leq & \eta c_0 
\rE \; \rE_{f \sim p_{t-1}} O\left(\frac{\lambda}{\eta}|\Delta f(x_t^1)|+|\BE^{h_t}(f,f;x_t^{h_t},a_t^{h_t})|^2+\hatc_t^{h_t}(f)   \right)  (\kappa(\epsilon)+\kappa'(\epsilon)+\ln T) + \frac{\eta}{T} c_0(\kappa(\epsilon)+1)
\tag{Lemma~\ref{lem:hatc-refined-1}}
\\
& + \gamma \eta c_0 \rE \rE_{f \sim p_{t-1}}
O\left(\frac{\lambda}{\eta}|\Delta f(x_t^1)|+|\BE^{h_t}(f,f;x_t^{h_t},a_t^{h_t})|^2+\hatc_t^{h_t}(f)  \right)
\tag{Lemma~\ref{lem:hatc-refined-2}}\\
& - \gamma \rE \; \rE_{f\sim p_{t-1}} \hatc_t^{h_t}(f)
+ \order(\epsilon) \tag{$\forall g \in \cF(\epsilon,f)$, we have $\hatd_t^{h_t}(g,f)=\order(\epsilon)$}\\
\leq & c_1 \gamma (\lambda/\eta) \rE \; \rE_{f \sim p_{t-1}}
O(|\Delta f(x_t^1)|) + 
c_1 \gamma  
\rE \; \rE_{f \sim p_{t-1}} O(|\BE^{h_t}(f,f;x_t^{h_t},a_t^{h_t})|^2)
 - 0.5 \gamma \rE \; \rE_{f\sim p_{t-1}} \hatc_t^{h_t}(f)\\
&+ c_0 O(\epsilon + \eta(\kappa(\epsilon)+1)/T) .
\end{align*}
The last inequality used the condition of the Lemma concerning the choice of $\eta$. Now by summing over $t=1$ to $t=T$, and apply Lemma~\ref{lem:hatH} to bound $\rE \hat{H}_{0}(f)$, we obtain when $c_0$ is sufficiently large

\begin{align*}
  &\gamma \sum_{t=1}^T \rE \; \rE_{f\sim p_{t-1}} \hatc_t^{h_t}(f) \\
  \leq &    c_0\gamma O(\epsilon T + \kappa(\epsilon)+\kappa'(\epsilon)+1) + 2 c_1 \gamma\sum_{t=1}^T \rE \; \rE_{f\sim p_{t-1}} O\left[|\BE^{h_t}(f,f;x_t^{h_t},a_t^{h_t})|^2 + (\lambda/\eta) |f(x_t^1)|\right]\\
  \leq &  c_0\gamma O(\epsilon T + \kappa(\epsilon)+\kappa'(\epsilon)+1) 
  + 2 c_1 \gamma\sum_{t=1}^T \rE \; \rE_{f\sim p_{t-1}} O\left[|\BE^{h_t}(f,f;x_t^{h_t},a_t^{h_t})|^2 \right]\\
    & + 2  c_1 \gamma \sum_{t=1}^T
  \rE \rE_{f |S_{t-1}} O\left[ (\lambda/\eta) \regret(f,x_t^1) + 
0.5  \BE^{h_t}(f,f,x_t^{h_t},a_t^{h_t})^2\right]
+2 c_1 \gamma (\lambda/\eta) \tilde{\epsilon}(\lambda/\eta) O(T) .
\end{align*}

The proof is now completed by recalling Lemma~\ref{lem:hatdc0} with $\alpha'<6$.
\end{proof}

\section{Proof of Theorem~\ref{thm:main-des}}
\label{sec:proof-des}

In this section, we provide a proof for Theorem~\ref{thm:main-des}. We focus on the differences from the proof of Theorem~\ref{thm:main}, which are limited to our decoupling analysis. In particular, in the setting of Theorem~\ref{thm:main-des}, we have the following improved analog for Lemma~\ref{lem:decouple-dc}.

\begin{lemma}[Design based decoupling]
We have for all $x^h \in \cX^h$ and $f\in\cF$:
\begin{align*}
\BE^h(f,f;x^h,\pi_{f}(x^h))^2
\leq \BE^h(f,f,x^h,\rho^h(x^h))^2,
\end{align*}
where $\rho^h(x^h)$ is the $G$-optimal design~\eqref{eq:g-opt} at the state $x^h$.
\label{lem:decouple-des}
\end{lemma}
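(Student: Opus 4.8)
The plan is to read off the Bellman error as a linear form via Assumption~\ref{assumption:embedding}, apply a single Cauchy–Schwarz step in the geometry of the $G$-optimal design covariance, and close with the Kiefer–Wolfowitz equivalence theorem to bound the resulting prediction-variance factor by the embedding rank. This mirrors the role of Lemma~\ref{lem:decouple-dc} in the proof of Theorem~\ref{thm:main}, but because the design controls the variance uniformly it removes the extra Cauchy–Schwarz (and hence the loss of exponent) and the approximation-error terms that appear there.

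Concretely, fix $f\in\cF$ and $x^h\in\cX^h$, and abbreviate $w:=w^h(f,x^h)$, $\phi_a:=\phi^h(x^h,a)$, $a^\star:=\pi_f(x^h)$, and $\Sigma:=\Sigma_{x^h}(\rho^h(x^h))=\rE_{a\sim\rho^h(x^h)}\phi_a\phi_a^\top$, with $\Sigma^{-1}$ denoting the pseudo-inverse when $\Sigma$ is rank-deficient. By Assumption~\ref{assumption:embedding}, $\BE^h(f,f;x^h,a)=\langle w,\phi_a\rangle$ for every $a\in\cA$. The covariance of the $G$-optimal design spans $\{\phi_a:a\in\cA\}$, so in particular $\phi_{a^\star}\in\mathrm{range}(\Sigma)$, which makes $\phi_{a^\star}=\Sigma^{1/2}\Sigma^{-1/2}\phi_{a^\star}$ legitimate. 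Then Cauchy–Schwarz gives
\[
\BE^h(f,f;x^h,a^\star)^2=\langle \Sigma^{1/2}w,\ \Sigma^{-1/2}\phi_{a^\star}\rangle^2\le \|w\|_\Sigma^2\cdot\|\phi_{a^\star}\|_{\Sigma^{-1}}^2 .
\]
The first factor is exactly the design-weighted squared Bellman error, since $\|w\|_\Sigma^2=\rE_{a\sim\rho^h(x^h)}\langle w,\phi_a\rangle^2=\rE_{a\sim\rho^h(x^h)}\BE^h(f,f;x^h,a)^2$. For the second factor, the Kiefer–Wolfowitz theorem (as recalled after~\eqref{eq:g-opt}) states that the $G$-optimal design satisfies $g(x^h,\rho^h(x^h))=\max_{a}\|\phi_a\|_{\Sigma^{-1}}^2=\mathrm{rank}(\{\phi^h(x^h,a):a\in\cA\})\le d_2$, so $\|\phi_{a^\star}\|_{\Sigma^{-1}}^2\le d_2$. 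Combining the two displays yields
\[
\BE^h(f,f;x^h,\pi_f(x^h))^2\ \le\ d_2\,\rE_{a\sim\rho^h(x^h)}\BE^h(f,f;x^h,a)^2\ =:\ \BE^h(f,f,x^h,\rho^h(x^h))^2,
\]
the right-hand side being understood as the design value (bounded by $\dc(0)=d_2$) times the design-weighted squared Bellman error — this is precisely the quantity that the $a_t^{h_t}\sim\rho_t$ samples of Algorithm~\ref{alg:online_TS_design} let us regress, and it explains the $d_2$ factor in Theorem~\ref{thm:main-des}.

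There is essentially no obstacle: the entire content is one Cauchy–Schwarz inequality together with the Kiefer–Wolfowitz equivalence. The only point needing a word of care is the rank-deficient case, namely verifying that $\phi^h(x^h,\pi_f(x^h))$ lies in the range of the design covariance so that the pseudo-inverse bound is meaningful; this holds because the support of a $G$-optimal design spans all the feature vectors $\{\phi^h(x^h,a)\}_{a\in\cA}$. Once this pointwise inequality replaces Lemma~\ref{lem:decouple-dc} inside the Bellman-rank decoupling of Lemma~\ref{lem:decouple-br}, the remainder of the argument — the online-learning convergence of the inner and outer updates — is identical to the proof of Theorem~\ref{thm:main}, with the linear-embedding error terms ($\epsilon_2$, $B_2$) dropping out by the finite-dimensionality assumption $\dc(0)=d_2$.
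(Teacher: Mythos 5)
Your proof is correct and follows essentially the same route as the paper's: express the Bellman error as $\langle w^h(f,x^h),\phi^h(x^h,a)\rangle$ via Assumption~\ref{assumption:embedding}, apply one Cauchy--Schwarz step in the $\Sigma_{\star}(x^h)$ geometry, and invoke Kiefer--Wolfowitz to bound $\|\phi^h(x^h,a)\|_{\Sigma_{\star}(x^h)^{-1}}^2$ by $d_2$. You correctly make explicit the $d_2$ factor (which the paper's displayed lemma statement omits but its proof and the downstream use in Theorem~\ref{thm:main-des} both carry) and your remark on the range condition for the pseudo-inverse is a welcome bit of extra care.
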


\begin{proof}
Let $\Sigma_{\star}(x^h)$ be the covariance $\rE_{a^h\sim \rho(x^h)} [\phi^h(x^h, a^h)\phi^h(x^h,a^h)^\top]$. The definition of $G$-optimal design implies that
\begin{equation}
\sup_{a^h \in \cA}
\|\phi^h(x^h,a^h)\|_{\Sigma_{\star}(x^h)^{-1}}^2
\leq d_2 . 
\label{eq:proof-decouple-des-G}
\end{equation}

By Assumption~\ref{assumption:embedding}, we have 

\begin{align*}
\BE^h(f,f;x^h,\pi_{f}(x^h))^2 &= \inner{w^h(f,x^h)}{\phi^h(x^h,a)}^2\\
&\leq \|w^h(f,x^h)\|_{\Sigma_{\star}(x^h)}^2\|\phi^h(x^h,a^h)\|_{\Sigma_{\star}(x^h)^{-1}}^2\\
&\leq d_2\rE_{a^h\sim \rho^h(x^h)} \inner{w^h(f,x^h)}{\phi^h(x^h,a^h)}^2\\
&= d_2\BE^h(f,f,x^h,\rho^h(x^h))^2.
\end{align*}
The first inequality used Cauchy-Schwarz. The second inquality used \eqref{eq:proof-decouple-des-G}. 
\end{proof}

Combining this with the Bellman rank decoupling in Lemma~\ref{lem:decouple-br} and Proposition~\ref{prop:value-decomposition}, we get

\begin{align*}
   &\lambda \rE \; \regret(f_t,x_t^1)+\lambda \rE \rE_{f|S_{t-1}} \;\Delta f^1(x_t^1)
     = \lambda H\rE \; 
     \BE^{h_t}(f_t,f_t,x_t^{h_t},a_t^{h_t})\\
    \leq&   \lambda\left[ \epsilon_1 H B_1 +  \mu_1^{-1} \br(\epsilon_1) +
          H \mu_1 \rE \;  \rE_{f|S_{t-1}} \BE^{h_t}(f,f,x_t^{h_t},\pi_{f}(x_t^{h_t}))^2\right]\\
    \leq&\lambda\left[ \epsilon_1 H B_1 +  \mu_1^{-1} \br(\epsilon_1) +
          H \mu_1 d_2\rE \;  \rE_{f|S_{t-1}} \BE^{h_t}(f,f,x_t^{h_t},\rho^{h_t}(x^{h_t}_t))^2\right]\\
    =&\lambda\left[ \epsilon_1 H B_1 +  \mu_1^{-1} \br(\epsilon_1) +
          H \mu_1 d_2\rE \;  \rE_{f|S_{t-1}} \BE^{h_t}(f,f,x_t^{h_t},a^{h_t}_t)^2\right]\\
\end{align*}

Further choosing $\lambda H\mu_1 d_2 = 0.5\eta$, we get the following analog of Proposition~\ref{lem:decouple}.

\begin{align*}
    \lambda \rE \; \regret(f_t,x_t^1)\leq & \rE \; \rE_{f|S_{t-1}} \left[-\lambda \Delta f(x_t^1)
    + 0.5 \eta \BE^{h_t}(f,f,x_t^{h_t},a_t^{h_t})^2\right]\\
    &+\lambda \underbrace{\left(\epsilon_1 H B_1 + \frac{2\lambda \br(\epsilon_1) H_1d_2}{\eta}\right)}_{\tilde{\epsilon}'(\lambda/\eta)}.
\end{align*}

Now we combine this result with 
Proposition~\ref{lem:regret-hatc-Z} 
(the result still hold without modification, because the proofs did not rely on how $a_t^h$ is generated given $x_t^h$), and a slight modification
of 
Proposition~\ref{lem:inner} with $\tilde{\epsilon}(\lambda/\eta)$ replaced by 
$\tilde{\epsilon}'(\lambda/\eta)$ (we just need to replace the decoupling used in Lemma~\ref{lem:self-bound-df} by the new decoupling result used here), leading to 
\begin{align*}
    \sum_{t=1}^T \lambda \rE \; \regret(f_t,x_t^1)\leq &  
    \order\left(\lambda \epsilon T +4
 \eta T \epsilon^2 +  \kappa(\epsilon) + 1.5 \lambda^2 T\right)\\
 &+\order(\epsilon T + \kappa(\epsilon)+\kappa'(\epsilon)+1)\\
    &+\order\left(\lambda T  \tilde{\epsilon}'(\lambda/\eta) \right) .
\end{align*}
This implies 
%the remaining analysis of Theorem~\ref{thm:main} to obtain 
the result of Theorem~\ref{thm:main-des}.
 
\iffalse
Now combine Lemma~\ref{lem:hatc-refined} with a sufficiently small $c_1$, and Proposition~\ref{lem:regret-hatc-Z},  we obtain 
\begin{align*}
&-\sum_{t=1}^T \rE  \rE_{f|S_{t-1}}
\lambda \Delta f(x_t^1) +0.5 \eta
  \sum_{t=1}^T \rE \rE_{f|S_{t-1}} \BE^{h_t}(f,f;x_t^{h_t},a_t^{h_t})^2\\
= &
O\left( \kappa(\epsilon)+\kappa'(\epsilon) + \epsilon T
+ \lambda \epsilon T   + 1.5 \lambda^2 T \right) .
\end{align*}
Combine with Proposition~\ref{lem:decouple}, and assume that $\lambda = O(1)$, we obtain
\[
 \rE\; \sum_{t=1}^T \regret(f_t,x_t^1) = O\left( 
 \frac{\kappa(\epsilon)+\kappa'(\epsilon)}{\lambda} 
+ \frac{\epsilon T}{\lambda}   +\lambda T +\tilde{\epsilon}(\lambda/\eta)\right) ,
\]
where $O(\cdot)$ hides an absolute constant. 
This implies a regret with $T$ dependence of $\order(T^{3/4})$ with $\eta=\gamma=\order(1)$ and $\lambda = \order(T^{-3/4})$, where $\order$ hides a polynomial dependency on other complexity quantities appeared in Theorem~\ref{thm:main}. 
\fi
\end{document}